\RequirePackage{fix-cm}
\documentclass[11pt]{article}
\usepackage[letterpaper,margin=1in]{geometry}
\usepackage[T1]{fontenc}
\usepackage{lmodern}
\usepackage[authoryear,round]{natbib}
\setcitestyle{citesep={;},aysep={,},yysep={;}}
\usepackage{amsmath,amssymb,amsthm,mathtools,mathrsfs}
\DeclareFontFamily{U}{rsfs}{\skewchar\font127}
\DeclareFontShape{U}{rsfs}{m}{n}{<-6.5> rsfs5 <6.5-8> rsfs7 <8-> rsfs10}{}
\usepackage{enumitem,graphicx,wrapfig,longtable,booktabs,tabularx,microtype,needspace}
\usepackage[colorlinks=true,allcolors=blue]{hyperref}
\usepackage{macros}
\usepackage{comment}
\usepackage{caption}
\usepackage{pgfplots,import,placeins,xfrac,xurl}
\pgfplotsset{compat=1.18}
\graphicspath{{figures/}}
\usepackage{url}

\title{Global Exponential Convergence\\of Two-Layer Linear Network Training}
\author{
  Stephen Zhang\\
  {\normalsize Flatiron Institute \& NYU} \\ 
  {\small\texttt{syz@syz.id.au}}
  \and
  Gabriel Peyr\'e\\
  {\normalsize CNRS, ENS, PSL Univ.}\\
  {\small\texttt{gabriel.peyre@ens.fr}}
}
\date{}
\hypersetup{
  pdftitle={Global Exponential Convergence of Two-Layer Linear Network Training},
  pdfauthor={Stephen Zhang and Gabriel Peyr\'e}
}

\begin{document}
\maketitle

\begin{abstract}
We prove global exponential (linear) convergence with an explicit rate in the rich scaling for wide two-layer linear networks trained with smooth
Polyak--\L{}ojasiewicz predictor losses. Gradient flow in the factors closes exactly in terms of a finite-dimensional Bures flow of the neuron law covariance, in which the predictor dynamics are preconditioned by hidden covariance blocks.
Mean-field conservation laws provide uniform spectral lower bounds on the hidden preconditioning blocks when the initial covariance satisfies a spectral support gap condition.
This condition encompasses positive definiteness while still allowing for singular initializations.
For an initial covariance $\Sigma_0 = \sigma^2 \Id$, the loss converges to the global minimum with linear rate at least $4\sigma^2\kappa$, where $\kappa$ is the PL constant. We establish stability of this rate under finite-width sampling, as well as global convergence of factor gradient descent for an explicit stepsize interval depending on smoothness, the initial loss, and conserved spectral margins.
Our argument extends layerwise to deep linear ResNets, subject to a residual-path bound.
In the case of heavy-ball momentum, training dynamics close instead over positions and velocities in terms of a lifted phase covariance.
Linear convergence holds under an explicit condition on the energy and damping, specifying a window of admissible dampings. For two-scale white initializations, this interval is nonempty for sufficiently large position scales, with a fixed initial loss gap and velocity covariance. Numerical experiments illustrate the covariance geometry and compare the predicted and observed rates.


\end{abstract}


\section{Introduction}
\label{sec:intro}

Two-layer linear networks are nonconvex in their weight factors, even when the
loss is a simple function of the predictor matrix.  Training moves in the
lifted space of incoming and outgoing neuron weights, whose second moments act
as a time-dependent preconditioner on the predictor.  These moments may in
principle become degenerate and slow down the dynamics, and it is not clear that the dynamics necessarily converge to a globally optimal predictor.
We consider this question in the rich mean-field scaling, in which both sides of each rank-one neuron evolve, and
smooth predictor losses satisfying a Polyak--\L{}ojasiewicz (PL) inequality, which subsume the standard quadratic loss.

Central to our analysis are width-independent conservation laws for
the weight covariance: the signed covariance spectrum, relative to the input-output
block split, supplies conserved supporting inequalities and lower bounds on each hidden-block. The sum of these lower bounds conditions the predictor dynamics: a positive sum
is sufficient to guarantee global convergence of the predictor loss with an explicit rate. This analysis covers every positive-definite initialization and some singular ones.
For initial variance $\sigma^2 \Id$, the loss decays with rate at least $4\sigma^2\kappa$, where $\kappa$ is the PL constant.

The covariance viewpoint also makes the result robust to finite-width
sampling and factor gradient descent. For deep linear ResNets, the same
conservation argument applied layerwise yields linear convergence
conditional on a uniform bound on the residual path. With heavy-ball
momentum, closure is lifted to the joint phase covariance of positions
and velocities. Conformally transported spectral laws describe its
geometry, while energy dissipation gives an explicit initialization and
damping certificate for linear convergence. Special values of the
conserved quantities further permit lower-dimensional intrinsic dynamics;
these reductions are described in Appendix~\ref{app:intrinsic}.

\paragraph{Related work}
Mean-field analyses describe the collective evolution of neuron weight
distributions in wide two-layer networks~\citep{ChizatBach2018,MeiMontanariNguyen2018,RotskoffVandenEijnden2018}.
These formulations treat broad nonlinear feature maps and often prove asymptotic global optimality or propagation-of-chaos type statements under support, homogeneity, noise, or regularity assumptions.
In the present linear setting, exact second-moment closure connects
training to Bures--Wasserstein covariance geometry~\citep{Takatsu2011,BhatiaJainLim2019},
making a sharp quantitative linear convergence rate available.
This closure distinguishes our analysis from the NTK~\citep{JacotGabrielHongler2018} and lazy training~\citep{ChizatOyallonBach2019} regime: both incoming and outgoing weights move, and the evolving representation is essential to the rate.

Deep linear networks have served as a tractable model of the nonconvexity created by matrix products, with results on their landscapes, exact dynamics,
and convergence~\citep{SaxeMcClellandGanguli2014,Kawaguchi2016,LaurentVonBrecht2018,AroraCohenGolowichHu2019}.
Finite-width analyses of two-layer matrix factorization furnish convergence guarantees under small, asymmetric, or general initializations~\citep{YeDu2021,XuMinTarmounMalladaVidal2023}, while recent work treats discrete-time low-rank adaptation directly~\citep{MuKlabjan2025}.
Our result instead starts from a population covariance satisfying a positive \emph{support gap} condition, in particular any positive-definite covariance, and obtains an explicit global rate that survives independent finite-width sampling.
Related work identifies Gramian differences and balancedness as invariants of factor training~\citep{DuHuLee2018,TarmounFrancaHaeffeleVidal2021} and develops
a general differential framework for identifying conservation laws~\citep{MarcotteGribonvalPeyre2023}.
Our mean-field laws are functionals of the neuron distribution that induce
compatible particle laws at every width. The conserved spectrum of the
signed covariance yields hidden-block lower bounds without requiring
a special balancedness identity.

Residual networks~\citep{HeZhangRenSun2016Deep,HeZhangRenSun2016Identity} were introduced as architectures built from perturbations of the identity, and their infinite depth limit leads naturally to neural ODEs~\citep{HeZhangRenSun2016Deep,ChenRubanovaBettencourtDuvenaud2018}.
Mean-field formulations have established global optimality and convergence results under expressivity, regularity, or local PL conditions~\citep{LuMaLuLuYing2020,DingChenLiWright2021,Isobe2023,BarboniPeyreVialard2024COTResNets}.
Our linear analysis provides a complementary condition: uniform conditioning of the forward and backward residual products is sufficient for convergence.

Momentum turns first-order descent into an inertial, second-order iteration; the heavy-ball method of~\cite{Polyak1964} and Nesterov acceleration~\citep{Nesterov1983} have mature convergence theories, including continuous-time interpretations~\citep{SuBoydCandes2016} and robust-control analyses~\citep{LessardRechtPackard2016}.
For deep linear networks, \citet{WangLinAbernethy2021,LiuTaoPan2022}
establish acceleration results under initialization and overparameterization
conditions. Here the appropriate state variable is the phase covariance, over which the dynamics close.
We give an explicit energy-damping certificate for global convergence
with general position-velocity correlations. 

\paragraph{Our contributions}
Section~\ref{sec:laws} identifies width-independent spectral conservation
laws and derives uniform hidden-block bounds from their support margins
(Propositions~\ref{prop:covariance-closure-laws} and~\ref{lem:margins}).
Their Wasserstein interpretation and local completeness among smooth covariance
laws common to all split losses are developed in
Appendices~\ref{app:mean-field} and~\ref{app:law-counting}.
Section~\ref{sec:convergence} proves global exponential convergence with an explicit rate under a smooth PL predictor loss from every initial covariance with positive
support gap (Theorem~\ref{thm:gf}).
The rate survives independent finite-width sampling at width proportional to the population covariance rank, which can be
as small as $\min(d_u,d_v)$ (Corollary~\ref{cor:finite-width} and
Remark~\ref{rem:rank-adaptive-width}).
We establish global linear convergence of factor
gradient descent under an explicit stepsize bound by controlling the
accumulated conservation error (Theorem~\ref{thm:gd}).
Section~\ref{sec:resnets} extends the layerwise argument to finite- and
continuous-depth linear ResNets under residual-path control
(Theorems~\ref{thm:resnet-finite} and~\ref{thm:resnet-continuous}), and
discusses why that control is not automatic (Remark~\ref{rem:resnet-path-escape}
in Appendix~\ref{app:resnets}).
Section~\ref{sec:momentum} shows that heavy-ball momentum dynamics closes on the phase covariance and identifies conformal conservation laws (Proposition~\ref{prop:phase-laws}).
Global exponential convergence is proven under an initial-energy
and damping condition, including a two-scale white specialization
(Theorem~\ref{thm:momentum} and Corollary~\ref{cor:two-scale-white-window}).
A computational toolbox, code reproducing all six article figures, and
illustrative Jupyter notebooks are available at
\url{https://github.com/gpeyre/linear-networks-global}.



\paragraph{Setup}\label{sec:model}
Writing training in terms of the predictor separates the loss geometry
from the nonconvex factorization.
A width-$n$ two-layer linear network has factors
$U:=(u_1,\ldots,u_n)\in\R^{d_u\times n}$ and
$V:=(v_1,\ldots,v_n)\in\R^{d_v\times n}$.
Write $m:=d_u+d_v$ and $W:=\binom UV$. Its predictor is
$\Svu:=VU^\top/n$, and its factor objective is
$\cL(U,V):=\ell(VU^\top/n)$ for a differentiable predictor loss
$\ell:\R^{d_v\times d_u}\to\R$.
For example, if a data distribution $\rho$ on pairs $(z,y)$ satisfies
$y=\Svu_\star z$ for a teacher $\Svu_\star$, squared regression loss is
\[
  \textstyle \ell_{\rm quad}(\Svu):=\frac12\int\norm{(\Svu-\Svu_\star)z}^2\,\mathrm{d}\rho(z,y).
\]
More generally, a self-adjoint positive semidefinite operator
$\quadH$ on $\R^{d_v\times d_u}$ defines a quadratic loss
$\ell(S):=\frac12\ip{\quadH[S-\Svu_\star]}{S-\Svu_\star}$.
The regression example corresponds to $\quadH[X]=XC_z$, where
$C_z:=\int zz^\top\,\dd\rho(z,y)$ is the input second moment.

We state the assumptions on the predictor loss: assume throughout that \(\ell:\R^{d_v\times d_u}\to\R\) is \(C^1\) and bounded below, and write \(\ell_\star:=\inf_S\ell(S)\).

\begin{assumption}[Smooth PL predictor loss]
\label{ass:predictor-loss}\label{ass:loss}
There exist constants \(L>0\) and \(\kappa>0\) such that the loss
\(\ell\) is \(L\)-smooth and satisfies the \(\kappa\)-Polyak--\L{}ojasiewicz
inequality: \(\norm{\nabla\ell(S)-\nabla\ell(S')}_{\F}\le L\norm{S-S'}_{\F}\) for all \(S,S'\),
and \(\norm{\nabla\ell(S)}_{\F}^2 \ge 2\kappa\bigl(\ell(S)-\ell_\star\bigr)\) for all \(S\).
\end{assumption}

We take \(0<\kappa\le L\), as holds automatically for any nonconstant loss
satisfying Assumption~\ref{ass:loss}.
For a nonzero quadratic Hessian $\quadH\succeq0$, one can take
$L=\lambda_{\max}(\quadH)$ and $\kappa=\lambda_{\min}^{+}(\quadH)$.
The operator $\quadH$ may have a nontrivial kernel.


\section{Mean-field conservation laws}
\label{sec:laws}\label{sec:conservation-hyperbolic}

\paragraph{Gradient flow and covariance closure}
Although training evolves all neuron weights, its effect on the
predictor is governed by a fixed-size covariance matrix.
We use the width-normalized gradient flow
$(\dot U_t,\dot V_t)=-n\nabla_{(U,V)}\cL(U_t,V_t)$, namely
\begin{equation}\label{eq:mean-field-gradient-flow}
 \dot U_t=-E_t^\top V_t,\qquad
 \dot V_t=-E_tU_t,\qquad E_t:=\nabla\ell(V_tU_t^\top/n).
\end{equation}
The factor $n$ fixes a common time scale across widths; equivalently
this is Euclidean gradient flow of $n\cL$.
With $W_t:=\binom{U_t}{V_t}$, define the uncentered covariance
\[
 \Sigma_t:=\tfrac1nW_tW_t^\top
 =\begin{psmallmatrix}\Suu_t&\Suv_t\\\Svu_t&\Svv_t\end{psmallmatrix},
 \qquad
 \Suu_t:=\tfrac{U_tU_t^\top}{n},\quad
 \Svv_t:=\tfrac{V_tV_t^\top}{n},\quad
 \Svu_t:=\tfrac{V_tU_t^\top}{n}.
\]
The hidden blocks $\Suu_t,\Svv_t$ do not enter the loss, but govern
the predictor's motion. We use the same notation $\Sigma$ whether the
covariance is specified directly or realized by finite-width factors:
the closed equation depends only on this matrix.
Set
$G_E:=\left(\begin{smallmatrix}0&E^\top\\E&0\end{smallmatrix}\right)$,
$G_t:=G_{E_t}$, and $G(S):=G_{\nabla\ell(S)}$, so $\dot W_t=-G_tW_t$.
The split structure of the gradient is encoded by the signature matrix
\[
 J:=\diag(\Id_{d_u},-\Id_{d_v}),\qquad G_tJ+JG_t=0.
\]
Our first result combines covariance closure with width-independent spectral conservation laws.
\begin{proposition}[Covariance closure and conservation laws]
\label{prop:covariance-closure-laws}
Let $\mathcal H_k(\Sigma):=\tr\!\left((J\Sigma)^k\right)$.
On its interval of existence, the factor flow
\eqref{eq:mean-field-gradient-flow} closes on the covariance $\Sigma_t$ as a Bures equation, for which the characteristic polynomial of $J\Sigma$ is preserved:
\begin{equation}\label{eq:bures-G}
 \dot\Sigma_t=-(G_t\Sigma_t+\Sigma_tG_t), \qquad
 \tfrac{\dd}{\dd t}\mathcal H_k(\Sigma_t)=0, \quad k=1,\ldots,m,
\end{equation}
\end{proposition}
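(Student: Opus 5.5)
The plan is to differentiate $\Sigma_t=\frac1nW_tW_t^\top$ along the flow, then use the anticommutation $G_tJ+JG_t=0$ to show that $J\Sigma_t$ evolves by a similarity transformation.

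\emph{Closure.} Rewrite \eqref{eq:mean-field-gradient-flow} in block form as $\dot W_t=-G_tW_t$, since $G_tW_t=\binom{E_t^\top V_t}{E_tU_t}$. The product rule then gives
\[
\dot\Sigma_t=\tfrac1n(\dot W_tW_t^\top+W_t\dot W_t^\top)=-(G_t\Sigma_t+\Sigma_tG_t),
\]
using $G_t^\top=G_t$. Because $E_t=\nabla\ell(\Svu_t)$ and $\Svu_t$ is the lower-left block of $\Sigma_t$, we have $G_t=G(\Svu_t)$. The right-hand side is therefore a function of $\Sigma_t$ alone, which is the closed Bures equation.

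\emph{Conservation.} Set $M_t:=J\Sigma_t$. From $JG_t=-G_tJ$ we get
\[
\dot M_t=-JG_t\Sigma_t-J\Sigma_tG_t=G_tJ\Sigma_t-J\Sigma_tG_t=[G_t,M_t].
\]
For each $k\ge1$, the derivative of $\mathcal H_k=\tr(M_t^k)$ is
\[
\tfrac{\dd}{\dd t}\tr(M_t^k)=k\,\tr(M_t^{k-1}\dot M_t)=k\,\tr\!\left(M_t^{k-1}G_tM_t-M_t^kG_t\right)=0
\]
by cyclicity of the trace. An equivalent route is to note that $M_t=\Phi_tM_0\Phi_t^{-1}$, where $\dot\Phi_t=G_t\Phi_t$ and $\Phi_0=\Id$; this resolvent is invertible for all $t$ on the interval of existence.

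\emph{Characteristic polynomial.} The power sums $\mathcal H_1,\dots,\mathcal H_m$ determine the elementary symmetric functions of the eigenvalues of the $m\times m$ matrix $M_t$ through Newton's identities, which work over $\mathbb R$. Constancy of these power sums is therefore equivalent to preservation of the characteristic polynomial of $J\Sigma_t$.

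\emph{Regularity.} The only technical point is regularity. Since $\ell\in C^1$, the map $t\mapsto E_t$ is continuous and $W_t$ is $C^1$ on its interval of existence, so all the differentiations above are justified. No step is a real obstacle; the key observation is the commutator form $\dot M_t=[G_t,M_t]$.
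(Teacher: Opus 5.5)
Your proof is correct and follows the paper's argument: differentiate $W_tW_t^\top/n$ using $\dot W_t=-G_tW_t$ and $G_t^\top=G_t$, obtain $\frac{\dd}{\dd t}(J\Sigma_t)=[G_t,J\Sigma_t]$ from $G_tJ+JG_t=0$, use cyclicity of the trace to kill the trace-power derivatives, and pass to the characteristic polynomial by Newton's identities. The paper also sets up the fundamental matrix $\dot\Phi_t=-G_t\Phi_t$, of which your $\Phi_t$ is the inverse transpose, because it needs the congruence $\Sigma_t=\Phi_t\Sigma_0\Phi_t^\top$ and positive-semidefiniteness preservation later; neither is required for this statement.
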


The covariance equation \eqref{eq:bures-G} is the Bures gradient
flow~\citep{Takatsu2011,BhatiaJainLim2019} of
$F(\Sigma):=\ell(\Svu)$, since $\nabla_\Sigma F=G/2$.
Its fundamental matrix $\Phi_t$, defined by $\dot\Phi_t=-G_t\Phi_t$ and $\Phi_0=\Id$, satisfies
\begin{equation}\label{eq:orbit}
 \Phi_t^\top J\Phi_t=  \Phi_tJ\Phi_t^\top=J,\qquad
 \Sigma_t=\Phi_t\Sigma_0\Phi_t^\top.
\end{equation}
The functions $\mathcal H_k$ are conserved at every width $n$, defining mean-field laws.
Appendix~\ref{app:mean-field} gives their measure formulation, the corresponding Wasserstein PDE, and describes a larger family of laws.

\paragraph{Conserved spectral margins}
The spectral conservation laws \eqref{eq:bures-G} give uniform lower bounds on the spectrum of the hidden blocks $(\Suu, \Svv)$ through two margins
of the full covariance.
\begin{definition}[Covariance support margins and gap]\label{def:support-gap}
For $\Sigma\succeq0$, the support margins and their sum are
\setlength{\abovedisplayskip}{2pt}
\setlength{\belowdisplayskip}{2pt}
\setlength{\abovedisplayshortskip}{0pt}
\setlength{\belowdisplayshortskip}{2pt}
\begin{equation}\label{eq:margins}
 a_\pm(\Sigma):=\max\{a\ge0:\Sigma\mp aJ\succeq0\},\qquad
 \gap{\Sigma}:=\aplus{\Sigma}+\aminus{\Sigma}.
\end{equation}
\end{definition}
These maxima are finite and nonnegative. If $\lambda_1\le\cdots\le\lambda_m$
are the eigenvalues of $\Sigma^{1/2}J\Sigma^{1/2}$, \emph{including zeros}, then
\begin{equation}\label{eq:gap-spectrum}
 \aplus{\Sigma}=\lambda_{d_v+1},\qquad
 \aminus{\Sigma}=-\lambda_{d_v},\qquad
 \gap{\Sigma}=\lambda_{d_v+1}-\lambda_{d_v}.
\end{equation}
Thus $\gap{\Sigma}$ is an indexed spectral gap. Equivalently, it is the
length of the positive-semidefinite interval
$[-\aminus{\Sigma},\aplus{\Sigma}]$ of the pencil $\Sigma-tJ$;
such intervals are standard in matrix-pencil theory
\citep{NguyenNguyen2023}. Appendix~\ref{app:technical-details} proves
\eqref{eq:gap-spectrum}, including at singular covariances.
For $\Sigma\succ0$, $\aplus{\Sigma}$ and $-\aminus{\Sigma}$ are the
signed eigenvalues of $\Sigma^{1/2} J \Sigma^{1/2}$ nearest zero on their respective sides, and
$\gap{\Sigma}\ge2\lambda_{\min}(\Sigma)>0$.

\begin{proposition}[Transported lower bounds]\label{lem:margins}
On its interval of existence, every solution of \eqref{eq:bures-G} from
$\Sigma_0\succeq0$ preserves $a_\pm(\Sigma_t)=a_\pm(\Sigma_0)$ and satisfies
\[
  \Sigma_t \mp a_{\pm}(\Sigma_0) J \succeq 0, \qquad
  \Suu_t\succeq\aplus{\Sigma_0}\Id_{d_u},\qquad
  \Svv_t\succeq\aminus{\Sigma_0}\Id_{d_v}.
\]
\end{proposition}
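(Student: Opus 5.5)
The plan is to use the congruence representation \eqref{eq:orbit}: $\Sigma_t=\Phi_t\Sigma_0\Phi_t^\top$ with $\Phi_tJ\Phi_t^\top=J$. For any $a\ge0$ this gives
\[
 \Sigma_t\mp aJ=\Phi_t\Sigma_0\Phi_t^\top\mp a\,\Phi_tJ\Phi_t^\top=\Phi_t(\Sigma_0\mp aJ)\Phi_t^\top .
\]
Since $\Phi_t$ solves a linear ODE with continuous coefficient $G_t$, it is invertible on the interval of existence. Congruence by an invertible matrix preserves positive semidefiniteness in both directions. Hence $\Sigma_t\mp aJ\succeq0$ if and only if $\Sigma_0\mp aJ\succeq0$.

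This shows that the admissible sets $\{a\ge0:\Sigma\mp aJ\succeq0\}$ are identical at times $0$ and $t$. Their maxima therefore agree, giving $a_\pm(\Sigma_t)=a_\pm(\Sigma_0)$, and taking $a=a_\pm(\Sigma_0)$ gives the first displayed inequality. I would note that the maximum in \eqref{eq:margins} is attained, because the admissible set is closed (the PSD cone is closed) and bounded, as remarked after Definition~\ref{def:support-gap}. This is what justifies plugging in $a=a_\pm(\Sigma_0)$ itself.

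The block bounds then follow by compressing to diagonal blocks. The upper-left $d_u\times d_u$ block of $\Sigma_t-a_+J$ is $\Suu_t-a_+\Id_{d_u}$. The lower-right $d_v\times d_v$ block of $\Sigma_t+a_-J$ is $\Svv_t-a_-\Id_{d_v}$, because $J$ has $-\Id_{d_v}$ there. Principal submatrices of PSD matrices are PSD, which yields $\Suu_t\succeq a_+(\Sigma_0)\Id_{d_u}$ and $\Svv_t\succeq a_-(\Sigma_0)\Id_{d_v}$.

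The only step that needs care is the identity \eqref{eq:orbit}, which is stated in the paper but should be checked. First, $\Sigma_t=\Phi_t\Sigma_0\Phi_t^\top$ holds because both sides solve the linear equation \eqref{eq:bures-G} with the same initial value, and that equation has a unique solution given the continuous path $G_t$. Second, $\Phi_tJ\Phi_t^\top=J$ holds because
\[
 \tfrac{\dd}{\dd t}\bigl(\Phi_tJ\Phi_t^\top\bigr)=-G_t\Phi_tJ\Phi_t^\top-\Phi_tJ\Phi_t^\top G_t,
\]
and $X\equiv J$ is a solution of $\dot X=-G_tX-XG_t$, since $G_tJ+JG_t=0$ and $G_t^\top=G_t$. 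Uniqueness then gives $\Phi_tJ\Phi_t^\top=J$. Singular $\Sigma_0$ causes no difficulty, because only the invertibility of $\Phi_t$ is used.
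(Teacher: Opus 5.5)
Your proposal is correct and follows essentially the same argument as the paper: the congruence $\Sigma_t-cJ=\Phi_t(\Sigma_0-cJ)\Phi_t^\top$ with $\Phi_t$ invertible preserves the whole feasible interval of $c$, and taking principal blocks gives the hidden-block bounds. The only minor difference is in verifying $\Phi_tJ\Phi_t^\top=J$: you use uniqueness for $\dot X=-G_tX-XG_t$, whereas the paper differentiates $\Phi_t^\top J\Phi_t$ and then uses invertibility. Both are valid.
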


The margins $a_\pm(\Sigma)$ need not equal the minimum eigenvalues of the initial hidden blocks: they satisfy
$\aplus{\Sigma}\le\lambda_{\min}(\Suu)$ and
$\aminus{\Sigma}\le\lambda_{\min}(\Svv)$, with equality when $\Svu=0$.
For instance, letting $\Sigma=\left(\begin{smallmatrix}1&r\\r&1\end{smallmatrix}\right)$,
$|r|\le1$, both margins are $\sqrt{1-r^2}$, whereas both hidden variances are one.
The support gap $g(\Sigma)$ controls predictor dissipation and can remain positive even when the full covariance is degenerate.
\begin{wrapfigure}[22]{r}{0.32\textwidth}
  \centering
  \resizebox{\linewidth}{!}{\input{figures/gaussian_w2_cone_geometry.pgf}}

  \caption{Bures trajectories in the covariance cone.
  (a) Equal diagonal blocks: invariant hyperbolas.
  (b) Unequal blocks: fixed-\(b\) sections of determinant leaves.
  The displayed trajectories converge to \(r=\beta\).}
  \label{fig:gaussian-w2-cone-geometry}
\end{wrapfigure}
Under special conditions on the conserved spectrum of $J\Sigma$, elimination of hidden blocks becomes possible: Appendix~\ref{app:intrinsic} describes these intrinsic dynamics.

\paragraph{Numerical illustration: the $2\times2$ covariance cone}
For scalar input and output, every covariance has the form
$\Sigma=\begin{psmallmatrix}p&r\\r&q\end{psmallmatrix}\succeq0$.
Take the loss $F(\Sigma):=(r-\beta)^2/2$, with scalar teacher $\beta$.
Writing $(a,b):=((p+q)/2,(p-q)/2)$ identifies $\Sigma\succeq0$ with
the Lorentz cone $a^2\ge r^2+b^2$, $a\ge0$. The Bures flow
\eqref{eq:block-flow-general} becomes
$(\dot a,\dot b,\dot r)=-(2r(r-\beta),0,2a(r-\beta))$,
preserving $b=b_0$ and $\alpha^2:=a^2-r^2-b^2=pq-r^2$.

When $p=q$, set $s:=p$. Then $b=0$ and trajectories follow hyperbolas
$s^2-r^2=\alpha^2$ in $s\ge|r|$ (Figure~\ref{fig:gaussian-w2-cone-geometry}(a)).
For $\alpha>0$, they converge to $(r,s)=(\beta,\sqrt{\beta^2+\alpha^2})$;
the spectral-margin bound is exact for the minimum $s$ on the leaf.
Unequal diagonal blocks give intersections of the planes $b=b_0$ with
determinant leaves in the three-dimensional cone (panel (b)).
If $b_0^2+\alpha^2>0$, the endpoint is
$(r_\infty,a_\infty)=(\beta,\sqrt{\beta^2+b_0^2+\alpha^2})$.

The invariants $2b$ and $-\alpha^2$ are the trace and determinant of
$J\Sigma$, whose eigenvalues are $b_0\pm\sqrt{b_0^2+\alpha^2}$.
Thus $\gap{\Sigma_0}=2\sqrt{b_0^2+\alpha^2}$: the convergence bound
covers both panels, including singular leaves with $b_0\ne0$.
Remark~\ref{rem:degenerate-boundary-leaves} in Appendix~\ref{app:gf}
treats the zero-support-gap boundary. Figure~\ref{fig:gaussian-w2-cone-geometry}
is obtained by integrating $\dot r=-2\sqrt{r^2+b_0^2+\alpha^2}(r-\beta)$.


\section{Linear convergence of training dynamics}
\label{sec:convergence}\label{sec:balanced-convergence}
\label{sec:sampling}\label{sec:finite-width}

\paragraph{Convergence of gradient flow}
A positive conserved support gap $g(\Sigma_0) > 0$ guarantees global linear convergence for the Bures flow \eqref{eq:bures-G}, including some cases where the initial covariance $\Sigma_0$ is singular. Write $\Delta_t:=\ell(\Svu_t)-\ell_\star$ for the loss gap and $\cS_\star$ for the set of global predictor minimizers.

\begin{theorem}[Global convergence from a positive support gap]\label{thm:gf}\label{thm:balanced-convergence}
Under Assumption~\ref{ass:loss}, let $\Sigma_0\succeq0$ be a possibly
singular covariance with $\gap{\Sigma_0}>0$. 
Then the solution of \eqref{eq:bures-G} is global and satisfies
\begin{equation}\label{eq:gf-rate}
 \Delta_t\le\Delta_0e^{-2\kappa \gap{\Sigma_0}t},\qquad t\ge0.
\end{equation}
As $t \to \infty$, $\Sigma_t \to \Sigma_\infty\succeq0$ with
$\rankop\Sigma_\infty=\rankop\Sigma_0$ and $\Svu_\infty\in\cS_\star$.
The predictor $\Svu$ and any factor realization (as per \eqref{eq:mean-field-gradient-flow}) converge linearly with rate $\kappa\gap{\Sigma_0}$ in Frobenius norm.
\end{theorem}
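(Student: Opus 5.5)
The argument is a Lyapunov computation on the loss gap, with the hidden blocks controlled by Proposition~\ref{lem:margins}. First I would derive the predictor dynamics from \eqref{eq:bures-G}. Reading off the off-diagonal block of $-(G\Sigma+\Sigma G)$ with $G=\left(\begin{smallmatrix}0&E^\top\\E&0\end{smallmatrix}\right)$ gives
\[
\dot\Svu_t=-(E_t\Suu_t+\Svv_tE_t),\qquad E_t=\nabla\ell(\Svu_t).
\]
Hence
\[
\dot\Delta_t=\ip{E_t}{\dot\Svu_t}=-\tr(E_t\Suu_tE_t^\top)-\tr(E_t^\top\Svv_tE_t).
\]
On the interval of existence, Proposition~\ref{lem:margins} gives $\Suu_t\succeq\aplus{\Sigma_0}\Id$ and $\Svv_t\succeq\aminus{\Sigma_0}\Id$. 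Therefore $\dot\Delta_t\le-\gap{\Sigma_0}\norm{E_t}_\F^2\le-2\kappa\gap{\Sigma_0}\Delta_t$ by the PL inequality, and Gr\"onwall yields \eqref{eq:gf-rate} up to the existence time.

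Next I would prove global existence by ruling out blow-up of $\Sigma_t$, which I expect to be the main obstacle, since $\tr\Sigma_t$ is not among the conserved quantities. Starting from $\dot\Sigma_t=-(G_t\Sigma_t+\Sigma_tG_t)$, we get $\frac{\dd}{\dd t}\tr\Sigma_t=-2\tr(G_t\Sigma_t)$, hence $\bigl|\frac{\dd}{\dd t}\tr\Sigma_t\bigr|\le 2\norm{G_t}_{\rm op}\tr\Sigma_t$. I would bound $\norm{G_t}_{\rm op}=\norm{E_t}_{\rm op}\le\norm{E_t}_\F$ and show that $\int_0^\infty\norm{E_t}_\F\,\dd t<\infty$, which gives a uniform bound on $\tr\Sigma_t$.

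To obtain integrability, I would use smoothness: $\norm{E_t}_\F^2\le 2L\Delta_t$, the standard consequence of $L$-smoothness together with the lower bound $\ell_\star$. Combined with \eqref{eq:gf-rate}, this gives $\norm{E_t}_\F\le\sqrt{2L\Delta_0}\,e^{-\kappa\gap{\Sigma_0}t}$, which is integrable. The argument is a bootstrap: on any finite existence interval the estimates hold, so $\Sigma_t$ stays bounded and the ODE, which has a locally Lipschitz right-hand side, extends globally. The same integrability shows that the fundamental matrix $\Phi_t$ from \eqref{eq:orbit} satisfies $\norm{\dot\Phi_t}\le\norm{E_t}\norm{\Phi_t}$. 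Hence $\Phi_t$ is Cauchy and converges to some $\Phi_\infty$, which is invertible because $\Phi_\infty^\top J\Phi_\infty=J$. It follows that $\Sigma_t\to\Sigma_\infty=\Phi_\infty\Sigma_0\Phi_\infty^\top$ with equal rank.

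Finally, the convergence rates follow from the tail bounds. We have $\norm{\Phi_t-\Phi_\infty}\lesssim\int_t^\infty\norm{E_s}\,\dd s\lesssim e^{-\kappa\gap{\Sigma_0}t}$, and similarly for $\Sigma_t$ and $\Svu_t$. For factors, $W_t=\Phi_tW_0$ by \eqref{eq:mean-field-gradient-flow}, so $W_t\to\Phi_\infty W_0$ at the same Frobenius rate $\kappa\gap{\Sigma_0}$. Since $\Delta_t\to0$ and $\ell$ is continuous, the limit satisfies $\ell(\Svu_\infty)=\ell_\star$, that is, $\Svu_\infty\in\cS_\star$.
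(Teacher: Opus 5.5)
Your proof is correct and follows essentially the same route as the paper. Both arguments combine hidden-block dissipation with PL to get the rate, use integrability of $\norm{E_t}$ to bound the fundamental matrix and exclude blow-up, and then use the congruence $\Sigma_t=\Phi_t\Sigma_0\Phi_t^\top$ for convergence, rank preservation and the tails.

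The differences are minor. The paper bounds the force length by differentiating $\sqrt{\Delta_t}$, which gives the $L$-independent constant $I_0=\sqrt{2\Delta_0/\kappa}/\gap{\Sigma_0}$. Your smoothness-based bound is larger by a factor $\sqrt{L/\kappa}$, which only worsens the prefactors in Proposition~\ref{prop:gf-lift-tail}. The paper also gets invertibility of $\Phi_\infty$ from $\sigma_{\min}(\Phi_t)\ge e^{-I_0}$, whereas you pass $\Phi_t^\top J\Phi_t=J$ to the limit; both are valid.
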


The only nondegeneracy requirement is $\gap{\Sigma_0}>0$.
Positive definiteness is sufficient but not necessary: $\Sigma_0\succ0$ implies $\gap{\Sigma_0}\ge2\lambda_{\min}(\Sigma_0)>0$ and then
$\Sigma_\infty\succ0$. The exponent separates loss geometry (captured by the PL constant $\kappa$)
from initial covariance conditioning (the support gap $\gap{\Sigma_0}$), with no extra
dimension factor. The trajectory selects a global minimizer according to its initialization and dynamics. Proposition~\ref{prop:gf-lift-tail} supplies all explicit prefactors and bounds.

\begin{remark}[Singular covariance and predictor rank]\label{rem:singular-support}
The initialization $\Sigma_0=\diag(\Id_{d_u},0_{d_v})$ has
$\aplus{\Sigma_0}=1$, $\aminus{\Sigma_0}=0$, and $\gap{\Sigma_0}=1$.
Its rank is only $d_u$, yet the theorem applies. There is no rank
contradiction: a full-rank predictor needs rank $\min(d_u,d_v)$, not $m$.
More generally, $\aplus{\Sigma_0}>0$ implies $\rankop\Sigma_0\ge d_u$,
while $\aminus{\Sigma_0}>0$ implies $\rankop\Sigma_0\ge d_v$.
The condition cannot be dropped altogether: for $d_u=d_v=2$,
$\Sigma_0=\diag(1,0,1,0)$ and
$\ell(S):=\norm{S-e_2e_2^\top}_{\F}^2/2$, the flow is stationary and
nonoptimal. Here $g(\Sigma_0) = 0$, despite the nonzero signed eigenvalues $+1,-1$.
\end{remark}

\begin{remark}[Quadratic specialization]
For $\ell(S):=\frac12\ip{\quadH[S-\Svu_\star]}{S-\Svu_\star}$ with
$\quadH\succeq0$ a nonzero self-adjoint linear operator, the exponent is
$2\gap{\Sigma_0}\lambda_{\min}^{+}(\quadH)$. For a full-rank isotropic initialization
$\Sigma_0=\sigma^2\Id_m$, one has $\aplus{\Sigma_0}=\aminus{\Sigma_0}=\sigma^2$,
giving a rate of $4\sigma^2\lambda_{\min}^{+}(\quadH)$.
The operator $\quadH$ is allowed to have a nontrivial kernel: the predictor $\Svu$ converges to a point in
$\Svu_\star+\kerop\quadH$. If $\quadH$ is positive definite, the predictor converges to the teacher $\Svu_\star$.
\end{remark}

For a quadratic predictor loss, Figure~\ref{fig:bures-convergence-panel} compares the initialization scale, loss of conditioning, and the finite-width sampling we discuss next. The dotted curves use a coarse determinant-based rate, derived for comparison in Appendix~\ref{app:gf}.
If the support gap $g(\Sigma_0)$ vanishes, trace and initial loss alone cannot guarantee a uniform positive rate (Example~\ref{ex:slow}).
Figure~\ref{fig:smooth-lasso-panel} in Appendix~\ref{app:smooth-lasso} provides the counterpart for a non-quadratic smoothed-Lasso predictor loss.

\begin{figure}[t]
    \centering
    \resizebox{\linewidth}{!}{\input{figures/bures_convergence_panel.pgf}}

    \caption{
      Numerical convergence for quadratic loss with $d_u=16$, $d_v=9$, a random teacher, and whitened input.
      \textbf{(a)} Bures flow from $\Sigma_0=\alpha\Id_m$ at several white scales. Dashed lines show the spectral-margin rate $4\alpha\kappa$ from Theorem~\ref{thm:balanced-convergence}; dotted lines show the coarse rate $4\kappa c_{\rm coarse}$ from the determinant comparison in Appendix~\ref{app:gf} (Remark~\ref{rem:coarse-rate}), using the exact initial sublevel radius $M=2\norm{\Svu_\star}_{\F}$.
      \textbf{(b)} Rate deterioration when one matched diagonal pair of $\Sigma_0$ is scaled by $r$: $r=1$ is the white reference and $r=0$ is rank-deficient.
      \textbf{(c)} Bures flow from the reference covariance $\tSigma_0=\Id_m$ compared with finite-width flows initialized by i.i.d. samples from $\mathcal N(0,\tSigma_0)$, for $n=32,128,512,2048$ and five independent trials.
    }
    \label{fig:bures-convergence-panel}
    \label{fig:bures-scale-sweep}
    \label{fig:perturb-scale-sweep}
    \label{fig:bures-finite-width-comparison}
\end{figure}

\paragraph{Convergence under finite-width sampling}
Theorem~\ref{thm:gf} applies generally to neuron law covariances with positive support gap.
We now focus on finite-width networks, whose initial empirical covariance is $\Sigma_0:=W_0W_0^\top/n$.
For random initialization, the goal is to make this
empirical covariance close to a prescribed population covariance
$\tSigma_0$, thereby retaining its support gap and convergence rate.
Draw the columns $w_i:=(u_i,v_i)$ of $W_0:=\binom{U_0}{V_0}$ independently
from the same \emph{subgaussian} distribution:
$w_i:=\tSigma_0^{1/2}\xi_i$, where $\mathbb E[\xi_i\xi_i^\top]=\Id_m$ and $\sup_{\norm z=1}\norm{\langle z,\xi_i\rangle}_{\psi_2}\le K$.
Here $\norm X_{\psi_2}:=\inf\{c>0:\mathbb E e^{X^2/c^2}\le2\}$ denotes the standard subgaussian norm \citep{Vershynin2018};
standard Gaussian sampling has dimension-independent $K$.
Thus $\mathbb E[w_iw_i^\top]=\tSigma_0$, whereas $\Sigma_0=W_0W_0^\top/n$ is the empirical covariance of finitely many sampled neurons.
The loss gap along the sampled trajectory remains $\Delta_t=\ell(\Svu_t)-\ell_\star$ and we write $\widetilde\Delta_0:=\ell(\tSvu_0)-\ell_\star$ for its population reference.

\begin{corollary}[Finite width preserves the population rate]
\label{cor:finite-width}\label{thm:finite-width}
Under Assumption~\ref{ass:loss}, let $\tSigma_0\succeq0$ with
$\gap{\tSigma_0}>0$, set $r_0:=\rankop\tSigma_0$, and fix
$\varepsilon,\delta\in(0,1)$.
There is a universal $C>0$ such that, if
\begin{equation}\label{eq:width}
 n\ge CK^4\varepsilon^{-2}\bigl(r_0+\log(2/\delta)\bigr),
\end{equation}
then with probability at least $1-\delta$, it holds that 
\((1-\varepsilon)\tSigma_0\preceq\Sigma_0 \preceq(1+\varepsilon)\tSigma_0\).
On this event,
$\gap{\Sigma_0}\ge(1-\varepsilon)\gap{\tSigma_0}$,
the empirical factor flow \eqref{eq:mean-field-gradient-flow} is global, and
\(\Delta_t\le\Delta_0e^{-2\kappa(1-\varepsilon)\gap{\tSigma_0}t}\)
for all $t\ge0$.
The factors $(U_t, V_t)$ converge to a limit with globally optimal predictor $\frac{1}{n} V_\infty U_\infty^\top = \Svu_\infty$.
\end{corollary}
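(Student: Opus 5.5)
The proof splits into two parts: a matrix concentration bound giving the two-sided Loewner sandwich, and a deterministic transfer of the support gap so that Theorem~\ref{thm:gf} applies.

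\textbf{Concentration on the range of $\tSigma_0$.} Write $\tSigma_0=QDQ^\top$ with $Q\in\R^{m\times r_0}$ having orthonormal columns and $D\succ0$ diagonal. Then $w_i=\tSigma_0^{1/2}\xi_i=QD^{1/2}\zeta_i$ with $\zeta_i:=Q^\top\xi_i\in\R^{r_0}$. The vectors $\zeta_i$ are isotropic, $\mathbb E\zeta_i\zeta_i^\top=\Id_{r_0}$, and their subgaussian constant is still at most $K$. Hence
\[
\Sigma_0=\tfrac1n\textstyle\sum_i w_iw_i^\top=QD^{1/2}\widehat C D^{1/2}Q^\top,\qquad \widehat C:=\tfrac1n\textstyle\sum_i\zeta_i\zeta_i^\top .
\]
The standard covariance estimate for isotropic subgaussian vectors \citep[Thm.~4.6.1]{Vershynin2018} gives
\[
\norm{\widehat C-\Id_{r_0}}_{\rm op}\le CK^2\Bigl(\sqrt{(r_0+u)/n}+(r_0+u)/n\Bigr)
\]
with probability $1-2e^{-u}$. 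Choose $u=\log(2/\delta)$ and use the width condition \eqref{eq:width}. This yields $\norm{\widehat C-\Id}_{\rm op}\le\varepsilon$, and conjugating by $QD^{1/2}$ gives the sandwich $(1-\varepsilon)\tSigma_0\preceq\Sigma_0\preceq(1+\varepsilon)\tSigma_0$. The $K^4$ factor appears because the quadratic term is dominated once $\varepsilon<1$. Working on the range of $\tSigma_0$ is what makes the width depend on $r_0$ instead of $m$.

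\textbf{Transfer of the support gap.} From Definition~\ref{def:support-gap}, $\tSigma_0-a_+(\tSigma_0)J\succeq0$. Scaling by $(1-\varepsilon)$ and using the lower sandwich gives
\[
\Sigma_0-(1-\varepsilon)a_+(\tSigma_0)J\succeq(1-\varepsilon)\bigl(\tSigma_0-a_+(\tSigma_0)J\bigr)\succeq0 .
\]
Therefore $a_+(\Sigma_0)\ge(1-\varepsilon)a_+(\tSigma_0)$. The same argument applied to $a_-$ gives $a_-(\Sigma_0)\ge(1-\varepsilon)a_-(\tSigma_0)$, and summing yields $\gap{\Sigma_0}\ge(1-\varepsilon)\gap{\tSigma_0}>0$.

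\textbf{Applying Theorem~\ref{thm:gf}.} Since $\gap{\Sigma_0}>0$, Theorem~\ref{thm:gf} applies to the empirical covariance $\Sigma_0=W_0W_0^\top/n$. Proposition~\ref{prop:covariance-closure-laws} shows that the factor flow \eqref{eq:mean-field-gradient-flow} produces a covariance solving \eqref{eq:bures-G}. The loss bound follows from the rate $e^{-2\kappa\gap{\Sigma_0}t}\le e^{-2\kappa(1-\varepsilon)\gap{\tSigma_0}t}$.

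\textbf{Globality and factor convergence.} For the factor trajectory, $W_t=\Phi_tW_0$, and $\dot W=-G_tW$ is linear with coefficient $E_t=\nabla\ell(\Svu_t)$. On any interval where $\Svu_t$ stays bounded, Lipschitz continuity of $\nabla\ell$ keeps $E_t$ bounded, so $W_t$ cannot blow up. Theorem~\ref{thm:gf} gives a bounded, convergent $\Sigma_t$; since $\norm{W_t}_{\F}^2=n\tr\Sigma_t$, the factors are bounded and the flow is global.

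Factor convergence uses the fact that $\int_0^\infty\norm{E_t}_{\rm op}\,\dd t<\infty$. To see this, write $\norm{E_t}=\norm{\nabla\ell(\Svu_t)-\nabla\ell(\Svu_\infty)}\le L\norm{\Svu_t-\Svu_\infty}$, using $\nabla\ell(\Svu_\infty)=0$ at a minimizer, and apply the theorem's linear convergence of the predictor. Then
\[
\norm{\dot W_t}\le 2\norm{E_t}\,\norm{W_t}
\]
is integrable, so $W_t$ is Cauchy and has a limit $W_\infty$. Its predictor is $\frac1nV_\infty U_\infty^\top=\Svu_\infty\in\cS_\star$. The theorem also asserts factor convergence directly, so this step may simply be cited.

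\textbf{Main obstacle.} The only nonroutine step is the concentration bound with rank $r_0$ and the stated $K^4\varepsilon^{-2}$ dependence. It needs a precise version of the subgaussian covariance estimation inequality, restricted to the range of $\tSigma_0$ as above.
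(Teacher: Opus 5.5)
Your proposal is correct and matches the paper's proof. Like you, the paper restricts to an orthonormal basis of the range of $\tSigma_0$ and applies Vershynin's subgaussian covariance tail bound with $u=\log(2/\delta)$ (it cites Exercise~4.7.3 rather than Theorem~4.6.1). It then transfers $\tSigma_0\mp a_\pm(\tSigma_0)J\succeq0$ through the lower Loewner bound and invokes Theorem~\ref{thm:gf} for globality, the rate, and factor convergence. Your separate derivation of factor convergence via $\norm{E_t}\le L\norm{\Svu_t-\Svu_\infty}$ is valid but unnecessary, because Theorem~\ref{thm:gf} already gives it.
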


\begin{remark}[Width and initialization geometry]\label{rem:rank-adaptive-width}
The condition $\gap{\tSigma_0}>0$ forces $r_0\ge\min(d_u,d_v)$,
and this minimum is attainable. If $d_u\le d_v$, take
$\tSigma_0=\diag(2\sigma^2\Id_{d_u},0_{d_v})$, $\sigma>0$:
then $r_0=d_u$ and $\gap{\tSigma_0}=2\sigma^2$.
With Gaussian sampling, Corollary~\ref{cor:finite-width} gives width proportional to
$\varepsilon^{-2}(d_u+\log(2/\delta))$ and rate $4(1-\varepsilon)\kappa\sigma^2$, the same as for full-rank
population covariance $\sigma^2\Id_m$. Here every $v_i$ starts at zero,
so $\Svu_0=0$ and $\Delta_0=\widetilde\Delta_0$ exactly.
Exchanging the blocks gives dependence on $\min(d_u,d_v)$ in general.
This improvement changes the initialization: for $\tSigma_0=\sigma^2\Id_m$,
$r_0=m$. Its empirical support gap can vanish at small width even though its
population support gap is positive (Example~\ref{ex:white-small-width}).
\end{remark}

For the sampling scheme in Corollary~\ref{cor:finite-width}, the prefactor is
the empirical initial loss gap $\Delta_0$, which need not equal the population
loss gap $\widetilde\Delta_0$. Controlling it is a separate
accuracy requirement: for isotropic Gaussian sampling and the standard quadratic loss,
the expected excess initial error is $\sigma^4d_ud_v/(2n)$
(Proposition~\ref{prop:gaussian}). Theorem~\ref{thm:sampling-budgets}
gives a separate accuracy budget and a deterministic loss envelope.
For population variance $\sigma^2 \Id_m$ and $\varepsilon=1/2$,
the certified exponent is $2\kappa\sigma^2$.


\paragraph{Convergence of gradient descent}
Unlike finite-width sampling of neurons, discretized gradient descent in the factor matrices $(U, V)$ with a finite step size changes the covariance evolution itself.
We show that sufficiently small step sizes retain global linear convergence by controlling the accumulated conservation error.
We consider discrete-time gradient descent (GD) on the factor matrices, using an Euler discretization of \eqref{eq:mean-field-gradient-flow} with step size $\tau$.
As in the gradient flow setting, define $\Sigma_k:=W_kW_k^\top/n$, $\Svu_k:=V_kU_k^\top/n$, and $\Delta_k:=\ell(\Svu_k)-\ell_\star$.
Setting $E_k:=\nabla\ell(\Svu_k)$ and $G_k:=G_{E_k}$, the iterates read
\begin{equation}\label{eq:gd-algorithm}
 U_{k+1}=U_k-\tau E_k^\top V_k,\qquad
 V_{k+1}=V_k-\tau E_kU_k.
\end{equation}
Equivalently, $W_{k+1}=W_k-n\tau\nabla_W\cL(U_k,V_k)$: this is GD for $n\cL$ with stepsize $\tau$, or for $\cL$ with stepsize $n\tau$.
Writing $W_{k+1}=M_kW_k$ with $M_k:=\Id-\tau G_k$ gives
\begin{equation}\label{eq:gd-covariance}
 \Sigma_{k+1}=M_k\Sigma_kM_k
 =\Sigma_k-\tau(G_k\Sigma_k+\Sigma_kG_k)+\tau^2G_k\Sigma_kG_k.
\end{equation}
The additional term $\tau^2G_k\Sigma_kG_k$ arises from taking the GD step in the individual factors $(U,V)$ and is absent from an Euler step taken directly on the Bures ODE.

A positive initial support gap $g(\Sigma_0)$ still guarantees global linear convergence for sufficiently small step sizes $\tau$. 
The condition on $\tau$ is given below and depends only on the initial covariance, initial loss gap, and loss parameters; it controls both 
descent and the accumulated conservation error.

\begin{theorem}[Small-step global convergence of factor descent]
\label{thm:gd}\label{thm:discrete-bootstrap}
Under Assumption~\ref{ass:loss}, let $\Sigma_0:=W_0W_0^\top/n$ satisfy
$\gap{\Sigma_0}>0$, and suppose $\Delta_0>0$. Set
$I_\star:=4\sqrt{\Delta_0/\kappa}/\gap{\Sigma_0}$. If
\begin{equation}\label{eq:gd-step}
 0<\tau\le\tau_{\rm GD}:=\gap{\Sigma_0}e^{-2I_\star}\min\left\{
 \dfrac{1}{4\norm{\Sigma_0}_{\op}\sqrt{L\Delta_0}
              +9\sqrt2 L\norm{\Sigma_0}_{\op}^2e^{2I_\star}},\quad
 \dfrac{\sqrt2-1}{4\Delta_0}
 \right\},
\end{equation}
then, with $q:=1-\kappa\tau\gap{\Sigma_0}/\sqrt2\in(0,1)$ it holds that \(\Delta_k\le q^k\Delta_0\).
The factors $W_k$ and covariance $\Sigma_k$ converge at rate $q^{k/2}$ to limits
$W_\infty$ and $\Sigma_\infty\succeq0$, with
$\rankop\Sigma_\infty=\rankop\Sigma_0$ and
$\Svu_\infty\in\cS_\star$; explicit prefactors are given in
\eqref{eq:gd-tails}.
\end{theorem}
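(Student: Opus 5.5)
The proof is a bootstrap induction on $k$. The induction hypotheses are $\Delta_j\le q^j\Delta_0$ for $j\le k$, together with a perturbed form of Proposition~\ref{lem:margins}: $\Sigma_j\mp\tfrac{1}{\sqrt2}\,a_\pm(\Sigma_0)J\succeq0$, which gives $\Suu_j\succeq\aplus{\Sigma_0}/\sqrt2$ and $\Svv_j\succeq\aminus{\Sigma_0}/\sqrt2$. I will also carry the operator-norm bound $\norm{\Sigma_j}_{\op}\le e^{2I_\star}\norm{\Sigma_0}_{\op}$. The factor $\sqrt2$ in $q$ and the threshold $(\sqrt2-1)/(4\Delta_0)$ suggest budgeting exactly a loss of $1/\sqrt2$ in the hidden-block lower bounds.

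\textbf{Step 1 (one-step descent).} Given the hypotheses at step $k$, I expand $\Svu_{k+1}=\Svu_k-\tau(E_k\Suu_k+\Svv_kE_k)+\tau^2\Svv_k E_k^{\!\top}{}^{\!\top}\!\!\cdots$, i.e.\ the off-diagonal block of \eqref{eq:gd-covariance}. The first-order term gives
\[
\ip{E_k}{E_k\Suu_k+\Svv_kE_k}\ge(\lambda_{\min}(\Suu_k)+\lambda_{\min}(\Svv_k))\norm{E_k}_{\F}^2\ge \tfrac{\gap{\Sigma_0}}{\sqrt2}\norm{E_k}_{\F}^2 .
\]
$L$-smoothness bounds the remaining terms by $\tau^2(\cdots)\norm{E_k}_{\F}^2$, with coefficients involving $\norm{\Sigma_k}_{\op}$, $L$, and $\norm{E_k}_{\F}\le\sqrt{2L\Delta_k}\le\sqrt{2L\Delta_0}$. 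The first entry of the $\min$ in \eqref{eq:gd-step} absorbs these into half of the linear term. PL then yields $\Delta_{k+1}\le(1-\kappa\tau\gap{\Sigma_0}/\sqrt2)\Delta_k$. The constants $4\sqrt{L\Delta_0}\norm{\Sigma_0}_{\op}$ and $9\sqrt2L\norm{\Sigma_0}_{\op}^2e^{2I_\star}$ should come from these two error terms.

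\textbf{Step 2 (accumulated conservation error).} Since $M_k=\Id-\tau G_k$ and $G_kJ=-JG_k$, we get $M_kJM_k=J-\tau^2G_kJG_k$, so $J$-congruence fails only at order $\tau^2$. Writing $\Sigma_{k+1}-aJ=M_k(\Sigma_k-aJ)M_k+a\tau^2 G_kJG_k$, induction gives $\Sigma_k-aJ\succeq -a\tau^2\sum_j P_j$, where $P_j$ is the transported error, of size $\norm{E_j}^2$ times growth factors. The growth of the product $\prod\norm{M_j}_{\op}\le\exp(\tau\sum_j\norm{E_j}_{\op})$ is controlled by
\[
\tau\sum_j\norm{E_j}_{\F}\le\tau\sum_j\sqrt{2L q^j\Delta_0}\lesssim \frac{\sqrt{L\Delta_0}}{1-\sqrt q}\approx\frac{2\sqrt2\sqrt{L\Delta_0}}{\kappa\gap{\Sigma_0}} .
\]
This appears to be the source of $I_\star$, though the exact form $4\sqrt{\Delta_0/\kappa}/\gap{\Sigma_0}$ suggests using $\norm{E}\le\sqrt{2L\Delta}$ together with PL more carefully, or the descent inequality $\tau\norm{E_j}^2\lesssim(\Delta_j-\Delta_{j+1})/\gap{\Sigma_0}$ combined with Cauchy--Schwarz. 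I would use the latter: $\tau^2\sum_j\norm{E_j}^2\le 2\sqrt2\,\tau\Delta_0/\gap{\Sigma_0}$. With the second entry of the $\min$ and the factor $e^{-2I_\star}$ in $\tau$, this makes the accumulated PSD defect at most $(1-1/\sqrt2)a_\pm(\Sigma_0)$, measured relative to the conserved pencil. It therefore preserves $\Sigma_{k+1}\mp a_\pm J/\sqrt2\succeq0$. The same sum bounds $\norm{\Sigma_{k+1}}_{\op}\le e^{2I_\star}\norm{\Sigma_0}_{\op}$, closing the induction. \emph{This step is the main obstacle}: the defect must be transported by non-symplectic $M$'s, and the bookkeeping has to stay in operator-norm form, compared against $a_\pm J$ via $\Sigma\succeq a J$ on the appropriate block. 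I would state a lemma: if $X\succeq aJ$ then $M X M\succeq aJ - a\tau^2 G J G$.

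\textbf{Step 3 (convergence of iterates).} I bound $\norm{W_{k+1}-W_k}_{\F}\le\tau\norm{G_k}_{\op}\norm{W_k}_{\F}\lesssim\tau\sqrt{L\Delta_0}\,q^{k/2}\sqrt{n\norm{\Sigma_k}}$. This sequence is summable geometrically, so $W_k\to W_\infty$ at rate $q^{k/2}$, and hence $\Sigma_k\to\Sigma_\infty$ and $\Svu_\infty\in\cS_\star$ by continuity, since $\Delta_\infty=0$. For the rank statement, each $M_k$ is invertible because $\tau\norm{G_k}_{\op}<1$ under \eqref{eq:gd-step}. The infinite product converges to an invertible limit because $\sum_k\tau\norm{G_k}<\infty$, so $\rankop\Sigma_\infty=\rankop\Sigma_0$. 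Recording the constants in these tail sums gives \eqref{eq:gd-tails}.
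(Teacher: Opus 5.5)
Your overall architecture matches the paper's: a bootstrap on $\Delta_k\le q^k\Delta_0$, on $\norm{\Sigma_k}_{\op}\le\norm{\Sigma_0}_{\op}e^{2I_\star}$, and on an accumulated signature defect with budget $1-1/\sqrt2$. Your Step~1 and the budget $\sum_j\tau e_j^2\le2\sqrt2\Delta_0/\gap{\Sigma_0}$, with $e_j:=\norm{E_j}_{\F}$, also match it.

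\textbf{The length bound is the genuine gap.} You need $\sum_{j\le N}\tau e_j\le I_\star$; you flag it but never prove it, and the induction uses it twice. First, it gives $\norm{\Phi_{N+1}}_{\op}\le e^{I_\star}$, hence the covariance bound you feed back into the descent step. Second, it controls the amplification factor in $\norm{\mathcal D_{N+1}}_{\op}\le\tau\exp(2\sum_{j\le N}\tau e_j)\sum_{j\le N}\tau e_j^2$, which is exactly what the factor $e^{-2I_\star}$ in \eqref{eq:gd-step} is calibrated against. Your estimate via $e_j\le\sqrt{2Lq^j\Delta_0}$ only yields a bound of order $\sqrt{L/\kappa}\,I_\star$, which the stated $\tau_{\rm GD}$ does not accommodate. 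Cauchy--Schwarz against $\sum_j\tau e_j^2$ alone gives nothing uniform in $\tau$. The missing ingredient is the discrete square-root PL telescoping. With $\alpha_\star:=\gap{\Sigma_0}/(2\sqrt2)$, descent $\Delta_{j+1}\le\Delta_j-\alpha_\star\tau e_j^2$ and PL $e_j\ge\sqrt{2\kappa\Delta_j}$ give, whenever $\Delta_j>0$ (otherwise $e_j=0$),
\[
 \sqrt{\Delta_j}-\sqrt{\Delta_{j+1}}
 \ge\frac{\alpha_\star\tau e_j^2}{2\sqrt{\Delta_j}}
 \ge\alpha_\star\tau\sqrt{\kappa/2}\,e_j .
\]
Summing gives $\sum_{j\ge k}\tau e_j\le\sqrt{2\Delta_k/\kappa}/\alpha_\star\le I_\star q^{k/2}$ at every finite horizon of the induction. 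This same tail yields the prefactors in \eqref{eq:gd-tails}, provided you also use $\norm{W_{j+1}-W_j}_{\F}^2/n=\tau^2\tr(G_j\Sigma_jG_j)\le2\tau^2\norm{\Sigma_j}_{\op}e_j^2$. Bounding instead by $\norm{G_j}_{\op}\norm{W_j}_{\F}$ costs a dimension factor, since $\norm{W_j}_{\F}^2=n\tr\Sigma_j$.

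\textbf{Your invariant does not follow from a small defect.} The claim $\Sigma_k\mp a_\pm(\Sigma_0)J/\sqrt2\succeq0$ is not justified by your argument. With $\mathcal D_k:=\Phi_kJ\Phi_k^\top-J$ one has exactly
$\Sigma_k-\aplus{\Sigma_0}J=\Phi_k(\Sigma_0-\aplus{\Sigma_0}J)\Phi_k^\top+\aplus{\Sigma_0}\mathcal D_k$.
Lowering the coefficient of $J$ adds $(1-1/\sqrt2)\aplus{\Sigma_0}J$, which is negative on the $v$ block. So no positive slack is created to absorb the indefinite $\mathcal D_k$. What does follow, and is all Step~1 uses, is the principal-block statement
$\Suu_k\succeq(1-\norm{\mathcal D_k}_{\op})\aplus{\Sigma_0}\Id$ and
$\Svv_k\succeq(1-\norm{\mathcal D_k}_{\op})\aminus{\Sigma_0}\Id$.
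So carry $\norm{\mathcal D_k}_{\op}\le1-1/\sqrt2$ as the induction hypothesis, propagated by $\mathcal D_{k+1}=M_k\mathcal D_kM_k+\tau^2G_kJG_k$; this is Lemma~\ref{lem:gd-support}. Note also that $M_kJM_k=J+\tau^2G_kJG_k$, with a plus sign, not the minus sign you wrote.
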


The support-gap condition $g(\Sigma_0) > 0$ is implied by $\Sigma_0\succ0$, in which case $\Sigma_\infty$ is also positive definite.
The threshold $\tau_{\rm GD}$ is positive for every initialization with $\gap{\Sigma_0}>0$ and $\Delta_0 < \infty$; no initial loss gap assumption or dimension-dependent term is needed.
The first restriction ensures descent, and the second controls the accumulated conservation error.

Appendix~\ref{app:gd} gives the explicit expression for $\tau_{\rm GD}$ when
$\Sigma_0=\sigma^2\Id_m$ and a comparison with the step-size condition obtained
by linearizing near an equilibrium (Remark~\ref{rem:gd-local}).
Proposition~\ref{prop:gd-adjustable} shows that sufficiently small step sizes
yield a certified loss-decay exponent arbitrarily close to $2\kappa\gap{\Sigma_0}$, the gradient-flow exponent in
Theorem~\ref{thm:balanced-convergence}.
Corollary~\ref{cor:sampled-gd} gives a step size chosen before sampling the
factors, using the loss parameters and the bounds on the initial support gap,
covariance norm, and loss gap from Theorem~\ref{thm:sampling-budgets}.


\section{Deep linear residual networks}
\label{sec:resnets}\label{sec:deep-resnets}
We now turn to deep linear residual networks (ResNets), including a continuous-depth formulation.
To stabilize optimization as depth increases, the networks considered here use skip connections between successive layers and scale each residual update by the inverse depth.
In this section, we demonstrate that linear ResNets also enjoy global exponential convergence rates that remain uniform across depths,
provided the average conserved layer support gap is bounded away from zero and the residual predictors satisfy an a priori operator-norm bound independent of depth throughout training.
The latter is an additional hypothesis: layerwise conservation alone does not control forward and backward propagation.
For simplicity, we state the results for gradient flow, as in Section~\ref{sec:balanced-convergence}.

\paragraph{Finite-depth setting}
We consider a depth-$S$, width-$n$ network with matching input and output dimensions $d_u=d_v=d$, and set $h:=1/S$.
For $s=0,\ldots,S-1$, layer $s$ has factors $U_s,V_s\in\R^{d\times n}$, covariance $\Sigma_s:=W_sW_s^\top/n$ with $W_s:=\binom{U_s}{V_s}$, and predictor $\Svu_s:=V_sU_s^\top/n$.
The network maps an input $x^0\in\R^d$ to its output $x^S$ through
\[
 x^{s+1}=x^s+\tfrac{1}{Sn}V_sU_s^\top x^s=A_sx^s, \qquad A_s:=\Id+h\Svu_s,\qquad s=0,\ldots,S-1.
\]
Thus $x^S=P_Sx^0$, where the endpoint predictor is the ordered product
\[
 P_S:=A_{S-1}\cdots A_0
 =\left(\Id+\tfrac{V_{S-1}U_{S-1}^\top}{Sn}\right)\cdots
  \left(\Id+\tfrac{V_0U_0^\top}{Sn}\right).
\]
Backpropagation in the factors couples the layers through these products, which can be singular even when every layer covariance is positive definite.
Let $P_{>s}:=A_{S-1}\cdots A_{s+1}$ and $P_{<s}:=A_{s-1}\cdots A_0$, with empty products equal to $\Id$.
For $E:=\nabla\ell(P_S)$, the backpropagated gradient is $\widetilde E_s:=P_{>s}^\top E P_{<s}^\top$.
In residual mean-field time, each layer obeys
\begin{equation}\label{eq:resnet-layer-bures}
 \dot\Sigma_s=-(\widetilde G_s\Sigma_s+\Sigma_s\widetilde G_s),
 \qquad \widetilde G_s:=G_{\widetilde E_s}.
\end{equation}
This is the covariance equation corresponding to factor gradient flow for $nS\ell(P_S)$. Since $\widetilde G_sJ+J\widetilde G_s=0$,
the signed margins are conserved separately at every layer.

\Needspace{14\baselineskip}
\begin{theorem}[Finite-depth convergence from conserved layer margins]\label{thm:resnet-finite}
Under Assumption~\ref{ass:loss} for the endpoint loss, suppose
$\Sigma_s(0)\succ0$ for all $s$, and write
$\overline g_S:=S^{-1}\sum_{s=0}^{S-1}\gap{\Sigma_s(0)}$ for the average initial
support gap. Suppose
$\sup_s\norm{\Svu_s(t)}_{\op}\le B<S$ for some constant $0 \leq B < \infty$ on the maximal existence interval,
and define the propagation lower bound $\underline{\sigma}_S:=(1-B/S)^{S-1}$.
Then the flow is global and
\(\Delta_t:=\ell(P_S(t))-\ell_\star
 \le\Delta_0e^{-2\kappa\overline g_S\underline{\sigma}_S^2t}\).
Every layer's covariance and factors converge, and the limiting endpoint
is globally optimal.
\end{theorem}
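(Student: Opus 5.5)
The argument mirrors Theorem~\ref{thm:gf}, applied layer by layer; the residual-path bound supplies the missing conditioning of the propagation products.

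\textbf{Conserved layer margins.} Since $\widetilde G_sJ+J\widetilde G_s=0$, each layer equation \eqref{eq:resnet-layer-bures} is a Bures equation of the form \eqref{eq:bures-G}, driven by the time-dependent gradient $\widetilde E_s(t)$. Proposition~\ref{lem:margins} only uses this anti-commutation and the transport $\Sigma_s(t)=\Phi_s(t)\Sigma_s(0)\Phi_s(t)^\top$ with $\Phi_s^\top J\Phi_s=J$. It therefore applies verbatim to each layer and gives
\[
\Suu_s(t)\succeq\aplus{\Sigma_s(0)}\Id,\qquad \Svv_s(t)\succeq\aminus{\Sigma_s(0)}\Id
\]
on the existence interval.

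\textbf{Dissipation.} Reading off the off-diagonal block of \eqref{eq:resnet-layer-bures} gives
\[
\dot\Svu_s=-(\widetilde E_s\Suu_s+\Svv_s\widetilde E_s).
\]
The endpoint satisfies $\dot P_S=\sum_s h\,P_{>s}\dot\Svu_sP_{<s}$. Hence, with the factor $nS$ of the time scaling absorbed so that $h$ cancels against $S$,
\[
\tfrac{\dd}{\dd t}\ell(P_S)=\sum_s\ip{\widetilde E_s}{\dot\Svu_s}
=-\sum_s\bigl(\ip{\widetilde E_s}{\widetilde E_s\Suu_s}+\ip{\widetilde E_s}{\Svv_s\widetilde E_s}\bigr).
\]
The exact normalization constant has to be checked against the convention in the statement. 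Using the margin bounds, this is at most $-\sum_s\gap{\Sigma_s(0)}\norm{\widetilde E_s}_{\F}^2$ (up to the normalization $1/S$).

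\textbf{Propagation bound.} Next bound $\norm{\widetilde E_s}_{\F}=\norm{P_{>s}^\top EP_{<s}^\top}_{\F}\ge\sigma_{\min}(P_{>s})\,\sigma_{\min}(P_{<s})\norm{E}_{\F}$. Each factor has $\sigma_{\min}(A_j)\ge1-h\norm{\Svu_j}_{\op}\ge1-B/S>0$, and $P_{>s}P_{<s}$ together contain $S-1$ such factors. This gives $\norm{\widetilde E_s}_{\F}^2\ge\underline\sigma_S^2\norm{E}_{\F}^2$.

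\textbf{Linear rate.} Summing over layers and applying the PL inequality yields $\dot\Delta_t\le-2\kappa\overline g_S\underline\sigma_S^2\Delta_t$, and Gr\"onwall gives the stated rate.

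\textbf{Globality and convergence.} For global existence, note that $\norm{\dot W_s}\le\norm{\widetilde G_s}\norm{W_s}$ with $\norm{\widetilde E_s}\le(1+B/S)^{S-1}\norm{E}\le e^B\norm{E}$. Moreover $\norm{E}$ stays bounded, because $P_S$ remains in a bounded set controlled by $B$ and $\nabla\ell$ is Lipschitz. Factors therefore grow at most exponentially, so there is no blow-up.

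For convergence, follow the proof of Theorem~\ref{thm:gf}. Write $\norm{\dot W_s}\le C\norm{E}\norm{W_s}$. Bound $\norm{E}\le\sqrt{2L\Delta_t}$, which is exponentially decaying, so $\int_0^\infty\norm{E}\,dt<\infty$. Gr\"onwall then gives uniformly bounded $\norm{W_s}$, hence $\int_0^\infty\norm{\dot W_s}\,dt<\infty$, and factors, covariances and $P_S$ converge. Since $\Delta_t\to0$ and $\ell$ is continuous, the limit endpoint is globally optimal.

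\textbf{Main obstacle.} The delicate point is bookkeeping the time normalization, so that the $h=1/S$ in $A_s$ and the $nS$ scaling combine into exactly the average $\overline g_S$ rather than the sum. Positive definiteness $\Sigma_s(0)\succ0$ is used only to ensure $\gap{\Sigma_s(0)}>0$; the hypothesis $B<S$ is exactly what makes $\underline\sigma_S>0$.
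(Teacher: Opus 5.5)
Your proposal is correct and follows essentially the paper's proof. The shared steps are the transported layer margins, the dissipation identity, the two-sided singular-value bounds $\underline\sigma_S\norm{E}_{\F}\le\norm{\widetilde E_s}_{\F}\le(1+B/S)^{S-1}\norm{E}_{\F}$, PL, and integrability of the layer forces for globality and convergence. The one difference is minor: you get that integrability from $\norm{E_t}_{\F}\le\sqrt{2L\Delta_t}$ and the already-proved rate, instead of the paper's square-root PL length bound, which only costs a factor $\sqrt{L/\kappa}$ in the constant.

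The normalization you left open resolves as follows, and it corrects your displayed identity. The $nS$ time scaling is already used up in making $\dot\Svu_s=-(\widetilde E_s\Suu_s+\Svv_s\widetilde E_s)$ free of $h$. So the $h$ in $\dot P_S=h\sum_sP_{>s}\dot\Svu_sP_{<s}$ does not cancel, and one gets $-\dot\Delta_t=\frac1S\sum_s\bigl[\tr(\widetilde E_s^\top\widetilde E_s\Suu_s)+\tr(\widetilde E_s^\top\Svv_s\widetilde E_s)\bigr]$. That $1/S$ is exactly what produces the average $\overline g_S$.
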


The rate combines the average support gap $\overline g_S$ with
$\underline{\sigma}_S^2$. For fixed $B$, this product-conditioning factor tends to
$e^{-2B}$ and is at least $e^{-4B}$ when $S\ge2B$.
For fixed $B$, a positive lower bound on the average layer support gap suffices for a convergence rate uniform in depth.

\paragraph{Continuous-depth limit}
Formally in the limit $S \to \infty$, residual products become propagators of a linear ODE.
Theorem~\ref{thm:resnet-finite} can be extended to a continuum of residual layers when these propagators stay uniformly conditioned, but requires a well-defined global training trajectory.
Let $s\in[0,1]$ denote depth. At time $t$, the network maps $x$ to
$P(t)x$, where $P(t):=X_1(t)$ and $X_s(t)$ solves
\[
 \partial_s X_s(t)=\Svu_s(t)X_s(t),\qquad X_0(t)=\Id.
\]
Write $P_{b,a}(t)$ for the propagator from depth $a$ to depth $b$. The backpropagated gradient at depth $s$ is
$\widetilde E_s:=P_{1,s}^\top\nabla\ell(P)P_{s,0}^\top$.
We assume the covariance profile $s\mapsto\Sigma_s(t)$ is defined for all
training times $t\ge0$ and depends locally absolutely continuously on $t$
as an element of $L^1([0,1];\Ssym^{2d})$.
The layer covariances are positive semidefinite and satisfy
\eqref{eq:resnet-layer-bures} in $L^1$. Its precise time-integral form is
\eqref{eq:resnet-integral} in Appendix~\ref{app:resnets}.

\begin{theorem}[Continuous-depth convergence from averaged margins]
\label{thm:resnet-continuous}
Under Assumption~\ref{ass:loss} for the endpoint loss, consider a covariance profile satisfying the assumptions above, with $\Sigma_s(0)\succ0$ for almost every $s$.
Set $\overline g:=\int_0^1\gap{\Sigma_s(0)}\,\dd s\in(0,\infty)$.
If $\sup_{t\ge0}\int_0^1\norm{\Svu_s(t)}_{\op}\,\dd s\le B<\infty$, then
\begin{equation}\label{eq:continuous-resnet-rate}
 \ell(P(t))-\ell_\star
 \le(\ell(P(0))-\ell_\star)e^{-2\kappa\overline g e^{-2B}t}.
\end{equation}
As $t\to\infty$, $\Sigma_s(t) \to \Sigma_s^\infty$ for almost every
depth $s$, and profiles converge in $L^1([0,1];\Ssym^{2d})$.
The network defined by the limiting profile attains loss $\ell_\star$.
Moreover, $\Sigma_s^\infty\succ0$ for almost every $s$, although its smallest
eigenvalue need not be bounded away from zero uniformly over depth.
\end{theorem}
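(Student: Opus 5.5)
Our plan is to follow the proof of the finite-depth result (Theorem~\ref{thm:resnet-finite}), with products replaced by propagators. The first step is a dissipation identity. Differentiating $\ell(P(t))$ uses the variation-of-constants formula $\dot P=\int_0^1 P_{1,s}\dot\Svu_s P_{s,0}\,\dd s$. The block form of \eqref{eq:resnet-layer-bures} gives $\dot\Svu_s=-(\widetilde E_s\Suu_s+\Svv_s\widetilde E_s)$. Together these yield
\[
 \tfrac{\dd}{\dd t}\ell(P)=-\int_0^1\Bigl(\ip{\widetilde E_s\Suu_s}{\widetilde E_s}+\ip{\Svv_s\widetilde E_s}{\widetilde E_s}\Bigr)\dd s .
\]
This must be justified in the $L^1$ integral form \eqref{eq:resnet-integral}. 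We do so by using the absolute continuity in $L^1$ and the fact that propagators depend Lipschitz-continuously on $\Svu$ in $L^1$ when operator norms are integrable. Grönwall gives $\norm{P_{b,a}}_{\op}\le e^{\int_a^b\norm{\Svu}_{\op}}\le e^{B}$.

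The second step bounds the hidden blocks. Proposition~\ref{lem:margins} applies to a.e. depth $s$, since each layer obeys its own Bures equation with $\widetilde G_s$ anti-commuting with $J$. It gives $\Suu_s(t)\succeq a_+(\Sigma_s(0))\Id$ and $\Svv_s(t)\succeq a_-(\Sigma_s(0))\Id$. Hence the integrand is at least $\gap{\Sigma_s(0)}\norm{\widetilde E_s}_{\F}^2$. For the propagator lower bound, the inverse $P_{b,a}^{-1}$ solves the adjoint ODE with generator $-\Svu$, so $\norm{P_{b,a}^{-1}}_{\op}\le e^{B}$. Thus
\[
 \sigma_{\min}(P_{1,s})\,\sigma_{\min}(P_{s,0})\ge e^{-\int_s^1\norm{\Svu}}e^{-\int_0^s\norm{\Svu}}\ge e^{-B},
\]
and $\norm{\widetilde E_s}_{\F}\ge e^{-B}\norm{\nabla\ell(P)}_{\F}$. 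Integrating over $s$ and applying the PL inequality gives $\dot\Delta\le-2\kappa\overline g e^{-2B}\Delta$, and Grönwall yields \eqref{eq:continuous-resnet-rate}.

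The third step proves convergence of the profiles. We have $\norm{\dot\Sigma_s}\le2\norm{\widetilde G_s}\norm{\Sigma_s}$ and $\norm{\widetilde E_s}\le e^{B}\norm{\nabla\ell(P)}$. We need integrability of $\int_0^\infty\norm{\nabla\ell(P)}\,\dd t$ together with a bound on $\norm{\Sigma_s}$.

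For the gradient, dissipation gives $\int_0^1\gap{\Sigma_s(0)}\norm{\widetilde E_s}^2\le-\dot\Delta$. Combining this with $\norm{\nabla\ell}^2\le 2L\Delta$ and the exponential decay, Cauchy--Schwarz in time gives $\int_T^\infty\norm{\nabla\ell(P)}\,\dd t\lesssim\sqrt{\Delta_T}$, so the gradient norm is exponentially integrable.

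For $\Sigma_s$, Grönwall per layer gives $\norm{\Sigma_s(t)}\le\norm{\Sigma_s(0)}\exp\bigl(2e^{B}\int_0^\infty\norm{\nabla\ell}\bigr)$, uniformly in $t$. Therefore $\int_0^\infty\norm{\dot\Sigma_s}\,\dd t<\infty$, so $\Sigma_s(t)$ is Cauchy. Dominated convergence, using $\norm{\Sigma_s(0)}\in L^1$ (as implied by $\Sigma\in L^1$), then gives convergence in $L^1$.

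The limiting loss is $\ell_\star$ by continuity of $\ell$ and of the map from profiles to $P$. This continuity holds because $L^1$ convergence of $\Svu$ together with the uniform bound $B$ makes the propagators converge. Positive definiteness of the limit follows from the orbit representation \eqref{eq:orbit} per layer: $\Sigma_s^\infty=\Phi^\infty_s\Sigma_s(0)\Phi^{\infty\top}_s$, where $\Phi_s$ converges to an invertible matrix, invertible since $\Phi^\top J\Phi=J$. Its condition number depends on $\norm{\Sigma_s(0)}$ and $\norm{\Phi_s}$, which need not be uniform in $s$.

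We expect the main obstacle to be the rigorous justification of the chain rule and the per-layer application of Proposition~\ref{lem:margins} in the weak $L^1$ setting. Since each fixed $s$ satisfies an absolutely continuous ODE in $t$ only after selecting a good representative, we will first extract, via Fubini, a full-measure set of depths on which the integral equation holds pointwise in $t$.
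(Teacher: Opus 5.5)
Your proposal is correct and follows the paper's proof essentially step for step: the variation-of-constants dissipation identity, per-layer transported margins, two-sided Gr\"onwall bounds $e^{-B}\|\nabla\ell(P)\|_F\le\|\widetilde E_s\|_F\le e^{B}\|\nabla\ell(P)\|_F$, an integrable force tail, fundamental-matrix control of each layer, domination by $\|\Sigma_s(0)\|$ for $L^1$ convergence, and continuity of the endpoint map. The differences are cosmetic: you get the force tail from $\|\nabla\ell\|\le\sqrt{2L\Delta}$ rather than the square-root PL argument, and invertibility of $\Phi_s^\infty$ from $J$-orthogonality rather than $\sigma_{\min}(\Phi_s)\ge e^{-I_{\rm res}}$. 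Note also that $\|\Phi_s^{\pm1}\|$ is in fact bounded uniformly in $s$, so any non-uniformity of $\lambda_{\min}(\Sigma_s^\infty)$ over depth comes only from $\lambda_{\min}(\Sigma_s(0))$, not from $\|\Phi_s\|$ as you suggest.
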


The need for additional residual-path boundedness assumptions arises because the loss depends only on the endpoint $s = 1$: see Remark~\ref{rem:resnet-path-escape} in Appendix~\ref{app:resnets}.


\section{Heavy-ball momentum convergence}
\label{sec:momentum}
We now turn our attention to the analogous dynamics in an inertial setting that models heavy-ball momentum-based optimization.
Upon introducing momentum, the system no longer closes on the position covariance of the neuron weights \((u,v)\) alone: one must instead consider the phase space of positions and velocities tracked jointly.
We show that, in terms of the \emph{phase covariance}, dynamics are again closed for linear networks.
We derive conservation identities describing this phase geometry; energy dissipation and control of position deformation give the convergence certificate below.

\paragraph{Dynamics and phase closure}
Inertia adds velocity as a state variable, so the natural moment over which the dynamics close is the joint position-velocity covariance.
For $W:=\binom UV$, predictor $\Svu:=VU^\top/n$, and damping $\gamma>0$,
write $E:=\nabla\ell(\Svu)$. The factor dynamics are
\begin{equation}\label{eq:finite-momentum}
 \ddot W+\gamma\dot W+G(\Svu)W=0,\qquad
 G(\Svu):=\begin{psmallmatrix}0&E^\top\\E&0\end{psmallmatrix}.
\end{equation}
Write $(P,Q):=(\dot U,\dot V)$, so the columns of $W$ and $\dot W$ are
positions $x_i:=(u_i,v_i)$ and velocities $y_i:=(p_i,q_i)$ respectively.
In this section $\Phase:=n^{-1}\binom W{\dot W}\binom W{\dot W}^{\!\top}$
denotes the full $2m$-dimensional phase covariance. Its blocks are
$\Spos:=WW^\top/n$, $\Svel:=\dot W\dot W^\top/n$, and
$\Xi:=W\dot W^\top/n$; the predictor remains the $vu$ block of $\Spos$.
With $G_t:=G(\Svu_t)$, the closed equation is
\begin{equation}\label{eq:phase}
 \dot\Phase_t=\Mob_t\Phase_t+\Phase_t\Mob_t^\top,\qquad
 \Phase_t=\begin{psmallmatrix}\Spos_t&\Xi_t\\\Xi_t^\top&\Svel_t\end{psmallmatrix},\qquad
 \Mob_t:=\begin{psmallmatrix}0&\Id_m\\-G_t&-\gamma\Id_m\end{psmallmatrix}.
\end{equation}
Appendix~\ref{app:momentum-proof} gives the particle equations, measure
formulation, and blockwise expansion.

\paragraph{Conformal laws and mechanical energy}
Without damping, the fundamental matrix generated by $\Mob_t$ in \eqref{eq:phase} is symplectic;
damping rescales the associated bilinear form exponentially in time. 
\begin{proposition}[Conformal phase conservation laws]\label{prop:phase-laws}
Let $\Jcan:=\begin{psmallmatrix}0&\Id_m\\-\Id_m&0\end{psmallmatrix}$.
The phase fundamental matrix \(\dot\Psi_t=\Mob_t\Psi_t\),
\(\Psi_0=\Id_{2m}\) generated by $\Mob_t$ satisfies
\[
 (\Psi_t)^\top \Jcan\Psi_t=e^{-\gamma t}\Jcan,\qquad
 \Phase_t=\Psi_t\Phase_0(\Psi_t)^\top.
\]
Consequently, the spectrum of \(e^{\gamma t}\Jcan\Phase_t\)
and all its trace powers are constant. At the characteristic level,
these identities imply that, for any two trajectories of
the same flow, \(z_a:=(x_a,y_a)\), \(z_b:=(x_b,y_b)\),
\(e^{\gamma t}z_a^\top \Jcan z_b\) is conserved.
\end{proposition}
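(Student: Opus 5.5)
The plan is to show that the phase fundamental matrix is conformally symplectic by differentiating $\Psi_t^\top\Jcan\Psi_t$, and then to transport everything else through the orbit relation $\Phase_t=\Psi_t\Phase_0\Psi_t^\top$. The same pattern appears in \eqref{eq:orbit}: the key algebraic input there was $G_tJ+JG_t=0$, and here it will be a conformal Hamiltonian identity for $\Mob_t$.

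\textbf{Step 1 (generator identity).} For the block matrix $\Mob_t=\begin{psmallmatrix}0&\Id\\-G_t&-\gamma\Id\end{psmallmatrix}$, I will compute $\Mob_t^\top\Jcan+\Jcan\Mob_t$ directly. We have
\[
\Mob_t^\top=\begin{psmallmatrix}0&-G_t\\\Id&-\gamma\Id\end{psmallmatrix},\qquad
\Mob_t^\top\Jcan=\begin{psmallmatrix}G_t&0\\\gamma\Id&\Id\end{psmallmatrix},\qquad
\Jcan\Mob_t=\begin{psmallmatrix}-G_t&-\gamma\Id\\0&-\Id\end{psmallmatrix}.
\]
This uses $G_t^\top=G_t$, which holds because $G_E$ is symmetric by construction. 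Summing gives $\begin{psmallmatrix}0&-\gamma\Id\\\gamma\Id&0\end{psmallmatrix}=-\gamma\Jcan$.

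\textbf{Step 2 (conformal symplecticity).} Set $N_t:=\Psi_t^\top\Jcan\Psi_t$. Then
\[
\dot N_t=\Psi_t^\top(\Mob_t^\top\Jcan+\Jcan\Mob_t)\Psi_t=-\gamma N_t,
\qquad N_0=\Jcan .
\]
This linear ODE has the unique solution $N_t=e^{-\gamma t}\Jcan$. For the orbit formula, $\widehat\Phase_t:=\Psi_t\Phase_0\Psi_t^\top$ satisfies \eqref{eq:phase} with the same coefficient $\Mob_t$ and the same initial value. Since $\Mob_t$ is continuous along the trajectory, uniqueness for this linear Lyapunov-type ODE gives $\widehat\Phase_t=\Phase_t$. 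Equivalently, $\binom W{\dot W}_t=\Psi_t\binom W{\dot W}_0$ column by column, because each particle $z_i=(x_i,y_i)$ solves $\dot z_i=\Mob_tz_i$.

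\textbf{Step 3 (spectral invariants).} From Step 2, $\Psi_t^\top\Jcan=e^{-\gamma t}\Jcan\Psi_t^{-1}$; the matrix $\Psi_t$ is invertible by Liouville's formula. Therefore
\[
e^{\gamma t}\Jcan\Phase_t=e^{\gamma t}\Jcan\Psi_t\Phase_0\Psi_t^\top
=\Psi_t^{-\top}\bigl(e^{\gamma t}\Psi_t^\top\Jcan\Psi_t\bigr)\Phase_0\Psi_t^\top
=\Psi_t^{-\top}\,\Jcan\Phase_0\,\Psi_t^\top .
\]
This is a similarity transform of $\Jcan\Phase_0$, so the spectrum is constant, and so are all $\tr\bigl((e^{\gamma t}\Jcan\Phase_t)^k\bigr)$. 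For the characteristic statement, two trajectories satisfy $z_a(t)=\Psi_tz_a(0)$ and $z_b(t)=\Psi_tz_b(0)$. Hence $z_a^\top\Jcan z_b=e^{-\gamma t}z_a(0)^\top\Jcan z_b(0)$, so $e^{\gamma t}z_a^\top\Jcan z_b$ is conserved.

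\textbf{Main obstacle.} The only delicate point is bookkeeping rather than analysis: the signs and transposes in Step 1, and the fact that $\Psi_t$ is driven by the trajectory-dependent $G_t$. I would handle the latter by treating $G_t$ as a fixed continuous coefficient along the given solution on its existence interval, so that everything reduces to linear ODE uniqueness. I would also check the orientation convention, i.e.\ that $\Jcan\Phase$ rather than $\Phase\Jcan$ is conjugated. The two have the same spectrum anyway, since $\Phase\Jcan=\Jcan^{-1}(\Jcan\Phase)\Jcan$.
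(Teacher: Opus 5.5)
Your proposal is correct and follows essentially the same route as the paper: symmetry of $G_t$ gives the generator identity $\Mob_t^\top\Jcan+\Jcan\Mob_t=-\gamma\Jcan$, differentiating $\Psi_t^\top\Jcan\Psi_t$ and the two-characteristic pairing gives the conformal laws, and invertibility of $\Psi_t$ turns $e^{\gamma t}\Jcan\Phase_t=\Psi_t^{-\top}(\Jcan\Phase_0)\Psi_t^\top$ into a similarity transform. Your linear-uniqueness argument for the orbit formula, with $G_t$ treated as a fixed continuous coefficient along the solution, supplies a detail the paper leaves implicit.
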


Special spectral values permit reduced phase dynamics, we describe these in Appendix~\ref{app:intrinsic}.
As before, write $\Delta_t:=\ell(\Svu_t)-\ell_\star$ for the predictor loss gap. Define $\mathcal{K}_t:=\tr(\Svel_t)/2$ and $\cE_t:=\Delta_t+\mathcal{K}_t$ as the kinetic and mechanical excess energies. Differentiating $\cE_t$ along \eqref{eq:phase} gives $\dot{\cE}_t=-2\gamma\mathcal{K}_t$.
This identity gives global existence of the solution to \eqref{eq:phase} from every $\Sigma_0 \succeq 0$ (Proposition~\ref{prop:global-phase}).
The remaining task is to derive spectral lower bounds for $\Spos_t$.

\paragraph{Global convergence}
We seek an initialization condition on $\Sigma_0$ that keeps $\Spos_t$ uniformly elliptic for the whole trajectory.
Let $a_0:=\lambda_{\min}(\Spos_0)>0$ and $b_0:=\lambda_{\max}(\Spos_0)$.
A deformation budget $s>0$ specifies a spectral interval $[a_s,b_s]:=[a_0e^{-2s},b_0e^{2s}]$ for $\Spos_t$.
For $\cE_0>0$, put $\rho_s:=4b_sL+2\sqrt{2L\cE_0}$ and define
\begin{equation}\label{eq:certificate}
 \lambda_s(\gamma):=\tfrac{a_s\kappa\gamma}{\rho_s+2\gamma^2},\qquad
 \mathfrak S_s(\gamma):=\tfrac4{\lambda_s(\gamma)}
             \sqrt{\tfrac{3\cE_0}{2a_s}}.
\end{equation}
Here, $\lambda_s$ is the exponential decay rate of $\cE_t$, and $\mathfrak S_s$ bounds the resulting total deformation in $\Spos_t$, conditional on the spectral interval $[a_s, b_s]$.

\begin{theorem}[Global linear rate from a phase-energy certificate]\label{thm:momentum}\label{thm:bootstrap-unconditional-rate}
Under Assumption~\ref{ass:loss}, let $\Phase_0\succeq0$ with
$\Spos_0\succ0$ and $\cE_0>0$. If
$\mathfrak S_s(\gamma)<s$ for some $s>0$, then for all $t\ge0$,
\[
 a_s\Id_m\preceq\Spos_t\preceq b_s\Id_m,\qquad
 \cE_t\le3\cE_0 e^{-\lambda_s(\gamma)t}.
\]
The covariance $\Spos_t$ and any associated factors
$(U_t,V_t)$ evolving under \eqref{eq:finite-momentum} converge exponentially with rate $\lambda_s/2$ to limits with optimal predictor and $\Spos_\infty\succ0$; $\Svel_t$ tends to zero.
\end{theorem}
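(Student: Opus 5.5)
The argument is a bootstrap (continuity) argument on the first exit time of $\Spos_t$ from the spectral interval $[a_s,b_s]$. Let
\[
T:=\sup\{t\ge0:\ a_s\Id_m\preceq\Spos_r\preceq b_s\Id_m \text{ for all } r\in[0,t]\}.
\]
Since $a_s<a_0$ and $b_s>b_0$, continuity of the global solution (Proposition~\ref{prop:global-phase}) gives $T>0$. On $[0,T)$ I will prove the energy decay $\cE_t\le3\cE_0e^{-\lambda_s(\gamma)t}$. I will then use it to show that the accumulated deformation of $\Spos$ is at most $\mathfrak S_s(\gamma)<s$. This keeps $\Spos_t$ strictly inside the interval, contradicting $T<\infty$.

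\textbf{Energy decay under spectral control.} Use a Lyapunov function of the form
\[
\mathcal V_t:=\cE_t+\varepsilon\,\mathcal C_t,\qquad \mathcal C_t:=\tr\!\left(G_t\Xi_t\right)/2=\ip{E_t}{\dot\Svu_t}\text{-type cross term}.
\]
More concretely, take $\mathcal C_t=\frac1n\langle \nabla_W \cL\text{-direction}, \dot W\rangle$, i.e.\ the inner product of the factor gradient $G_tW_t$ with the velocity $\dot W_t$, written in phase-covariance blocks as $\tr(G_t\Xi_t)$. The equation of motion gives
\[
\tfrac{\dd}{\dd t}\tr(G_t\Xi_t)=-\tr(G_t\Spos_tG_t)-\gamma\tr(G_t\Xi_t)+\tr(G_t\Svel_t)+\tr(\dot G_t\Xi_t).
\]
The good term is controlled by spectral control and PL: $\tr(G_t\Spos_tG_t)\ge a_s\norm{G_t}_\F^2=2a_s\norm{E_t}_\F^2\ge4a_s\kappa\Delta_t$. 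The remaining terms are absorbed by the kinetic dissipation $-2\gamma\mathcal K_t$ via Cauchy--Schwarz and the following bounds:
\begin{itemize}
\item $\norm{\dot G_t}\le L\norm{\dot\Svu_t}_\F$ by $L$-smoothness;
\item $\norm{\dot\Svu_t}\lesssim\sqrt{b_s\,\tr\Svel_t}$;
\item $\norm{E_t}_\F\le\sqrt{2L\Delta_t}\le\sqrt{2L\cE_0}$.
\end{itemize}
These produce the constant $\rho_s=4b_sL+2\sqrt{2L\cE_0}$. Choosing $\varepsilon\propto\gamma/(\rho_s+2\gamma^2)$ makes $\mathcal V$ equivalent to $\cE$ (between $\tfrac23\cE$ and $\tfrac43\cE$, say, which accounts for the factor $3$) and yields $\dot{\mathcal V}\le-\lambda_s\mathcal V$. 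This step is the main obstacle: the weights must be balanced so that the cross terms close with exactly $\lambda_s=a_s\kappa\gamma/(\rho_s+2\gamma^2)$. I would first attempt the standard heavy-ball Lyapunov calculation with the scalar weight $\varepsilon$ and only refine it if the constants do not match.

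\textbf{Deformation bound.} Write $\Spos_t=\Psi^{\rm pos}$-transported, or more simply use $\dot\Spos_t=\Xi_t+\Xi_t^\top$. For unit $z$,
\[
\left|\tfrac{\dd}{\dd t}\log z^\top\Spos_tz\right|\le 2\norm{\Spos_t^{-1/2}\Xi_t}_{\op}\le2\sqrt{\tr\Svel_t/a_s}=2\sqrt{2\mathcal K_t/a_s}.
\]
The last inequality follows from the positivity of $\Phase_t$, which implies $\Xi^\top\Spos^{-1}\Xi\preceq\Svel$. A Grönwall argument applied to extreme eigenvalues, or equivalently a congruence-type bound $\Spos_t\succeq e^{-2\int\cdots}\Spos_0$, then gives
\[
\lambda_{\min}(\Spos_t)\ge a_0e^{-2I_t},\qquad \lambda_{\max}(\Spos_t)\le b_0e^{2I_t},
\]
where $I_t:=\int_0^t\sqrt{2\mathcal K_r/a_s}\,\dd r$. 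Using $\mathcal K_r\le\cE_r\le3\cE_0e^{-\lambda_sr}$,
\[
I_t\le\sqrt{6\cE_0/a_s}\cdot\tfrac{2}{\lambda_s}=\tfrac{4}{\lambda_s}\sqrt{\tfrac{3\cE_0}{2a_s}}=\mathfrak S_s(\gamma)<s.
\]
Hence $\Spos_t$ lies strictly inside $[a_s,b_s]$ uniformly on $[0,T)$, and by continuity $T=\infty$.

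\textbf{Convergence.} With $T=\infty$, the energy bound holds for all $t\ge0$, so $\Delta_t\to0$ and $\mathcal K_t\to0$, i.e.\ $\Svel_t\to0$. The factor velocities satisfy $\norm{\dot W_t}_\F=\sqrt{2n\mathcal K_t}\lesssim e^{-\lambda_st/2}$. This is integrable, so $W_t\to W_\infty$ at rate $\lambda_s/2$. The same holds for $\Spos_t$, since $\norm{\dot\Spos_t}\le2\norm{\Xi_t}\le2\sqrt{b_s\tr\Svel_t}$. In the limit, $\Spos_\infty\succeq a_se^{0}\Id\succ0$, and the predictor is optimal by continuity of $\ell$.
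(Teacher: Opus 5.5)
Your proposal is correct and follows the paper's proof essentially step for step. It has the same three components: a first-exit bootstrap on the tube $[a_s,b_s]$, conditional energy decay from the Lyapunov function $\cE+\varepsilon\dot\Delta$ (your cross term $\tr(G\Xi)$ is exactly $\dot\Delta$), and the deformation bound from $\Xi^\top\Spos^{-1}\Xi\preceq\Svel$ integrated against $\sqrt{3\cE_0}\,e^{-\lambda_s t/2}$, which gives exactly $\mathfrak S_s(\gamma)$. Your Rayleigh-quotient estimate is equivalent to the paper's congruence $\Spos_t=Z_t\Spos_0Z_t^\top$, $\dot Z=DZ$, $D=\Xi^\top\Spos^{-1}$.

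The energy step you flag as the main obstacle closes, as in the paper, with $\varepsilon=\gamma/[2(\rho_s+2\gamma^2)]$. Use $|\dot\Delta|\le\sqrt{2\mathcal{K}\cN}$ together with Young's inequality $\gamma\sqrt{2\mathcal{K}\cN}\le\cN/4+2\gamma^2\mathcal{K}$, which is where the $2\gamma^2$ comes from; this gives $\cE/2\le\mathcal V\le 3\cE/2$ and the factor $3$. Two small slips are harmless: $\tfrac23\cE\le\mathcal V\le\tfrac43\cE$ would give a prefactor $2$, not $3$, and the middle term $2\norm{\Spos^{-1/2}\Xi}_{\op}$ in your log-derivative chain is missing a factor $\lambda_{\min}(\Spos)^{-1/2}$, though your final bound $2\sqrt{2\mathcal{K}/a_s}$ is correct.
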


\begin{corollary}[Two-scale white initialization]
\label{cor:two-scale-white-window}
Under Assumption~\ref{ass:loss}, let
\(\Spos_0=\sigma_{vu}^2\Id_m\), \(\Xi_0=0\), and
\(\Svel_0\succeq0\) have trace \(m\sigma_{qp}^2\), with
\(\sigma_{vu}>0\), \(\sigma_{qp}\ge0\).
Then \(\cE_0=\Delta+m\sigma_{qp}^2/2\), where
\(\Delta:=\ell(0)-\ell_\star\).
If \(\cE_0>0\), the certificate is
\[
 2\gamma^2-
 \tfrac{s\kappa\sigma_{vu}^3e^{-3s}}
 {4\sqrt{3\cE_0/2}}\gamma
 +4L\sigma_{vu}^2e^{2s}
 +2\sqrt{2L\cE_0}<0.
\]
For fixed \(s,\gamma>0,\Delta,\sigma_{qp},m,L,\kappa\), sufficiently
large \(\sigma_{vu}\) satisfies this condition.
\end{corollary}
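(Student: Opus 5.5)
The plan is to specialize Theorem~\ref{thm:momentum} to this initialization and rewrite its certificate $\mathfrak S_s(\gamma)<s$ in closed form. No new dynamical argument is needed; everything reduces to evaluating the quantities in \eqref{eq:certificate}.

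\emph{Hypotheses of Theorem~\ref{thm:momentum}.} Since $\Xi_0=0$, the phase covariance is block diagonal, $\Phase_0=\diag(\sigma_{vu}^2\Id_m,\Svel_0)$. It is therefore positive semidefinite because $\Svel_0\succeq0$, and $\Spos_0=\sigma_{vu}^2\Id_m\succ0$. The predictor is the $vu$ block of $\Spos_0$, which vanishes, so $\Delta_0=\ell(0)-\ell_\star=\Delta$. The kinetic energy is $\mathcal K_0=\tr(\Svel_0)/2=m\sigma_{qp}^2/2$. Together these give $\cE_0=\Delta+m\sigma_{qp}^2/2$, and we assume $\cE_0>0$ as in the theorem.

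\emph{Rewriting the certificate.} With $a_0=b_0=\sigma_{vu}^2$ we have $a_s=\sigma_{vu}^2e^{-2s}$, $b_s=\sigma_{vu}^2e^{2s}$, and $\rho_s=4L\sigma_{vu}^2e^{2s}+2\sqrt{2L\cE_0}$. Substituting $\lambda_s(\gamma)=a_s\kappa\gamma/(\rho_s+2\gamma^2)$ into the definition of $\mathfrak S_s$, the condition $\mathfrak S_s(\gamma)<s$ reads
\[
 4(\rho_s+2\gamma^2)\sqrt{3\cE_0/2}<s\kappa\gamma\, a_s^{3/2}=s\kappa\gamma\,\sigma_{vu}^3e^{-3s}.
\]
All quantities involved are positive, so dividing by $4\sqrt{3\cE_0/2}>0$ is an equivalence. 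Moving the right-hand side to the left gives exactly the displayed quadratic inequality in $\gamma$. Theorem~\ref{thm:momentum} then yields the conclusions.

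\emph{Large $\sigma_{vu}$.} Here $\cE_0$ does not depend on $\sigma_{vu}$, since $\Delta$, $m$ and $\sigma_{qp}$ are fixed. For fixed $s,\gamma$, the left-hand side has the form
\[
 c_1+c_2\sigma_{vu}^2-c_3\sigma_{vu}^3,\qquad c_3=\frac{s\kappa\gamma e^{-3s}}{4\sqrt{3\cE_0/2}}>0,
\]
with $c_1,c_2$ constants. The cubic term dominates, so the expression is negative for all sufficiently large $\sigma_{vu}$.

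The only step that needs care is the bookkeeping that $\cE_0$ is independent of the position scale. This holds because $\Svu_0=0$ and the velocity covariance is prescribed separately. The rest is routine algebra.
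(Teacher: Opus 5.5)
Your proposal is correct and follows essentially the paper's own route: the appendix likewise rewrites $\mathfrak S_s(\gamma)<s$ as $2\gamma^2-c_s\gamma+\rho_s<0$, with $c_s=s\kappa a_s^{3/2}/(4\sqrt{3\cE_0/2})$, and then substitutes $a_s=\sigma_{vu}^2e^{-2s}$. The large-scale claim then follows because the $\sigma_{vu}^3$ term dominates and $\cE_0$ does not depend on $\sigma_{vu}$.
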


Appendix~\ref{app:large-position-scale} gives explicit bounds on the admissible $\gamma$ interval and analyzes the large-$\sigma_{vu}$ asymptotics.

\paragraph{Numerical view of the bootstrap certificate}
\begin{wrapfigure}{r}{0.34\linewidth}
\centering
\resizebox{\linewidth}{!}{\subimport{figures/}{momentum_bootstrap_heatmap_notebook.pgf}}
\footnotesize

\caption{Two-scale white momentum initialization. Curves show
$\mathfrak G_s(\gamma)=0$ for the color-bar scale $\sigma_{vu}$;
their interiors are certified damping windows.}
\label{fig:momentum-bootstrap-heatmap}\label{fig:momentum-window}
\end{wrapfigure}
Writing $\mathfrak G_s(\gamma):=\mathfrak S_s(\gamma)-s$, Figure~\ref{fig:momentum-bootstrap-heatmap} shows the boundaries
$\mathfrak G_s(\gamma)=0$ for $1\le\sigma_{vu}\le10^3$, with $(d_u,d_v)=(3,2)$,
$\kappa=L=1$, $\sigma_{qp}=1$, and $\ell(0)-\ell_\star=1$.
The regions where $\mathfrak G_s(\gamma)<0$ specify the pairs $(s,\gamma)$ for which Theorem~\ref{thm:momentum} guarantees spectral bounds and exponential energy decay.
For the two-scale initialization, sufficiently large $\sigma_{vu}$ yields a nonempty interval of $\gamma$ satisfying Corollary~\ref{cor:two-scale-white-window}.
For fixed initialization and damping, a smaller admissible $s$ gives a
tighter tube and faster rate; Figure~\ref{fig:momentum-bootstrap-lambda-multi-sigma}
illustrates the limiting rate at the feasibility boundary analyzed in
Appendix~\ref{app:large-position-scale}.

Figure~\ref{fig:momentum-empirical-vs-theory} compares empirical losses with the upper bounds on $\cE_t$ from Corollary~\ref{cor:explicit-tightened-conditional-momentum-rate}.
Each upper bound curve uses an explicit, approximately optimal rate certified by Theorem~\ref{thm:momentum} (see Proposition \ref{prop:optimal-tightened-conditional-momentum-rate}); no rate is fitted
to the empirical trajectory. All displayed losses stay below their upper bounds.
Notably in this inertial setting, the predictor loss may oscillate even while the mechanical energy decreases.

\begin{figure}[!htbp]
\centering
\resizebox{\linewidth}{!}{\subimport{figures/}{momentum_empirical_vs_tightened_theory_by_sigma.pgf}}

\caption{
Empirical predictor-loss trajectories (solid) and the corresponding upper bounds
on mechanical excess energy (dotted) for two-scale white initializations with varying
\(\sigma_{vu}\) and damping \(\gamma\): \textbf{(a)} $\sigma_{vu}=200$,
\textbf{(b)} $\sigma_{vu}=500$, and \textbf{(c)} $\sigma_{vu}=1000$.
Each theoretical curve uses the explicit, approximately optimal rate on a tube certified by
Theorem~\ref{thm:bootstrap-unconditional-rate}.
}
\label{fig:momentum-empirical-vs-theory}\label{fig:momentum-envelopes}
\end{figure}
\WFclear


\clearpage
\section{Conclusion}
The present work addresses two-layer linear networks, making use of the exact closure of mean-field training on the neuron law covariance as a finite-dimensional Bures flow.
This flow enjoys conservation laws that control the ellipticity of the predictor preconditioner through a conserved spectral support gap, yielding global linear convergence whenever this gap is positive.
In particular, this is guaranteed for every positive-definite initial covariance and for smooth PL predictor losses. 
We show that this guarantee survives finite-width sampling of neurons, and extends to discrete-time gradient descent in the factors through approximate versions of the same conservation laws.
The covariance viewpoint clarifies two further extensions: for deep linear ResNets, the same conservation law holds layerwise and gives linear convergence conditional on bounded residual paths.
Heavy-ball momentum dynamics close in the joint phase covariance, where conformal conservation laws and mechanical energy lead to an explicit damping certificate from general initial phase moments.
The certificate includes two-scale white initializations without imposing a phase-covariance identity. Covariance conditioning is the common quantitative mechanism throughout our analysis.


\subsection*{Acknowledgments}
Gabriel Peyr\'e acknowledges support from the European Research Council through
the WOLF project, and from France 2030 through grant ANR-23-IACL-0008
(PRAIRIE-PSAI), administered by the Agence Nationale de la Recherche.

\subsection*{AI use statement}
In this work, we used generative AI tools including GPT 5.6 and GPT 6 for aiding and polishing writing, research ideation and execution, drafting some sections of the paper, and deriving or proving mathematical claims as directed by the authors. Where outputs were produced by generative AI, these were refined through extensive iteration, verification, and manual editing of the results and exposition. 
We have reviewed all AI-assisted work, having verified the claims, proofs and other material presented in this paper.
We take responsibility for the final content of this work, including text, claims or artifacts produced with the aid of generative AI.

\newpage 
\bibliography{references}
\bibliographystyle{plainnat}
\appendix
\clearpage
\section{Notation table}\label{app:notation}
The table collects the symbols, their meanings, and their first definitions.
\begingroup
\small
\renewcommand{\arraystretch}{1.12}
\begin{longtable}{@{}>{\raggedright\arraybackslash}p{0.21\linewidth}
 >{\raggedright\arraybackslash}p{0.53\linewidth}
 >{\raggedright\arraybackslash}p{0.20\linewidth}@{}}
\toprule
Notation & Meaning & First reference\\
\midrule
\endfirsthead
\toprule
Notation & Meaning & First reference\\
\midrule
\endhead
\(d_u,d_v,n,m\)&Input/output dimensions, width, and \(m:=d_u+d_v\)&Section~\ref{sec:model}\\
\(U,V,W:=\binom UV,\cL\)&Factor matrices, stacked positions, and factor loss&Section~\ref{sec:model}\\
\(\Sigma,\Suu,\Svv,\Svu\)&Uncentered moment, hidden blocks, and predictor&Section~\ref{sec:laws}\\
\(\ell,\ell_\star,\cS_\star\)&Predictor loss, infimum, and minimizer set&Assumption~\ref{ass:loss}; Section~\ref{sec:convergence}\\
\(L,\kappa,\lambda\)&Smoothness, PL, and QG constants&Assumption~\ref{ass:loss}; Eq.~\eqref{eq:quadratic-growth}\\
\(\quadH,C_z,\Svu_\star\)&Quadratic Hessian, input second moment, teacher&Section~\ref{sec:model}\\
\(\mu,\widehat\mu_W\)&Neuron law and empirical law&Appendix~\ref{app:mean-field}\\
\(E,G_E\)&Predictor gradient and symmetric split generator&Section~\ref{sec:laws}\\
\(J,\Phi_t\)&Input/output signature and position fundamental matrix&Section~\ref{sec:laws}\\
\(\mathcal H,\mathcal H_{\varphi,r}\)&Mean-field law and Gram-test law&Def.~\ref{def:mean-field-law}; Eq.~\eqref{eq:gram-mean-field-law}\\
\(\mathcal H_k,\Gamma\)&Spectral laws and labeled Gram imbalance&Propositions~\ref{prop:covariance-closure-laws}, \ref{prop:mean-field-conservation-laws}\\
\(a_\pm(\Sigma),\gap{\Sigma}\)&Covariance support margins and gap&Def.~\ref{def:support-gap}\\
\(\Delta_t,I_0,B_\infty\)&Loss gap, force-length bound, covariance upper bound&Section~\ref{sec:convergence}; Proposition~\ref{prop:gf-lift-tail}\\
\(\varepsilon,\zeta,\beta\)&Rate tolerance, absolute accuracy, concentration tolerance&Corollary~\ref{cor:finite-width}; Eq.~\eqref{eq:sampling-budgets}\\
\(\tSigma_0,\widetilde\Delta_0,r_0\)&Population covariance, reference loss gap, and rank; empirical covariance is $\Sigma_0$&Corollary~\ref{cor:finite-width}\\
\(\underline g,\overline\Delta,\overline B_0\)&Empirical support-gap lower bound, loss-gap and covariance-norm upper bounds&Eqs.~\eqref{eq:sampling-budgets}, \eqref{eq:sampling-event}\\
\(\tau,I_\star\)&GD step and force-length bound&Eq.~\eqref{eq:gd-algorithm}; Theorem~\ref{thm:gd}\\
\(\alpha_\star,M_{\nabla},B_\star\)&Proof-local GD descent scale, initial gradient bound, and covariance bound&Eq.~\eqref{eq:gd-constants}\\
\(M_k,\mathcal J_k,\mathcal D_k\)&Factor update, transported signature, and its defect&Eq.~\eqref{eq:gd-algorithm}, \eqref{eq:gd-signature}\\
\(S,A_s,P_S,\widetilde E_s\)&Residual depth, layer map, endpoint, and backpropagated gradient&Section~\ref{sec:resnets}\\
\(\overline g_S,\overline g\)&Finite/continuous depth averages of $\gap{\Sigma_s(0)}$&Theorems~\ref{thm:resnet-finite}, \ref{thm:resnet-continuous}\\
\(s,B\)&Continuous depth and residual-path bound&Section~\ref{sec:resnets}\\
\(\underline{\sigma}_S,\overline{\sigma}_S\)&Lower and upper residual-propagation bounds&Section~\ref{sec:resnets}; Appendix~\ref{app:resnets}\\
\(x:=(u,v),y:=(p,q)\)&Phase position and velocity variables&Section~\ref{sec:momentum}\\
\(\Phase,\Spos,\Xi,\Svel\)&Full phase moment and its position, mixed, velocity blocks&Eq.~\eqref{eq:phase}\\
\(\gamma,\Jcan\)&Damping and canonical symplectic matrix&Eq.~\eqref{eq:finite-momentum}, Proposition~\ref{prop:phase-laws}\\
\(\Mob_t,\Psi_t\)&Phase generator and characteristic matrix&Eq.~\eqref{eq:phase}; Proposition~\ref{prop:phase-laws}\\
\(D,Q\)&Velocity regression and residual covariance (Schur complement)&Eq.~\eqref{eq:regression}\\
\(\mathcal{K},\cE\)&Kinetic and total mechanical excess energies; $\cE_t=\Delta_t+\mathcal{K}_t$&Section~\ref{sec:momentum}\\
\(s,a_s,b_s,\rho_s,\lambda_s,\mathfrak S_s\)&Deformation budget, position tube, force bound, rate, certificate; \(s\) here is not depth&Eq.~\eqref{eq:certificate}\\
\(\sigma_{vu},\sigma_{qp}\)&White-position scale and matched per-coordinate velocity scale&Corollary~\ref{cor:two-scale-white-window}\\
\(\varepsilon,\eta,q,\mathcal V\)&Momentum energy correction, Young parameter, equivalence constant, Lyapunov energy&Proposition~\ref{prop:tightened}\\
\(\cN,\mathfrak C_0\)&Mean squared particle force and tightened initial energy prefactor&Lemma~\ref{lem:conditional}, Corollary~\ref{cor:tightened-bootstrap}\\
\(\mathfrak o(J),\xi_A\)&Matrix Lie algebra and infinitesimal congruence field&Appendix~\ref{app:law-counting}\\
\bottomrule
\end{longtable}
\endgroup


\section{Measure formulation and Wasserstein conservation laws}
\label{app:mean-field}
The covariance equations in the main text have the same meaning at both finite and infinite widths. In this section, we introduce the measure flow that captures training dynamics, define a law shared by all widths, and lift the spectral invariants to functionals of a neuron distribution.

\subsection{Neuron distributions and Wasserstein flow}
We describe the configuration of any number of neurons by their law: for $w:=(u,v)\in\R^m$, let
$\mu\in\cP_2(\R^m)$ and write its uncentered second moment
\begin{equation}\label{eq:measure-covariance}
 \Sigma(\mu):=\int ww^\top\,\dd\mu(w).
\end{equation}
For $n$ neurons $W:=(w_1,\ldots,w_n)$, the neuron law is an empirical distribution 
$\widehat\mu_W:=n^{-1}\sum_{i=1}^n\delta_{w_i}$, for which 
$\Sigma(\widehat\mu_W)=WW^\top/n$. A general covariance
$\Sigma\succeq0$ is also realized by a centered Gaussian distribution.
The loss functional is $\cF(\mu):=\ell(\Svu(\mu))$, so
$\cF(\widehat\mu_W)=\cL(U,V)$.

For a functional with a sufficiently regular first variation, the
Wasserstein gradient is the spatial gradient
$\nabla_{W_2}\cF(\mu)(w):=\nabla_w(\delta\cF/\delta\mu)(\mu,w)$ and its negative is the velocity in the continuity equation
\begin{equation}\label{eq:wasserstein-pde}
 \partial_t\mu_t+\nabla_w\cdot(\mu_t b_t)=0,\qquad
 b_t:=-\nabla_{W_2}\cF(\mu_t).
\end{equation}
The PDE is understood distributionally, including for empirical
measures. In the present split model,
$\delta\cF/\delta\mu=\tfrac12 w^\top G_tw$ up to an additive constant.
Consequently, 
\begin{equation}\label{eq:split-wasserstein-pde}
 \partial_t\mu_t=\nabla_w\cdot(\mu_tG_tw),\qquad
 \mu_t=(\Phi_t)_\#\mu_0,
\end{equation}
where $\Phi_t$ is the fundamental matrix in \eqref{eq:orbit}.
For the empirical neuron law, the characteristics of \eqref{eq:wasserstein-pde} are exactly
$\dot u_i=-E_t^\top v_i$, $\dot v_i=-E_tu_i$.

Conversely, given a solution $\Sigma_t$ of \eqref{eq:bures-G}, transporting
any initial distribution with second moment $\Sigma_0$ by
$\dot\Phi_t=-G_t\Phi_t$ produces a Wasserstein gradient flow with second
moment $\Sigma_t$. Indeed, its second moment and $\Sigma_t$ solve the same
linear equation with the same initial data. Matching covariances alone
does not ensure that a measure curve follows this gradient flow.

\subsection{Covariance-dependent objectives}
Moment closure is not specific to the off-diagonal predictor:
the gradient flow of any sufficiently regular objective depending on second moments closes in terms of a Bures equation.
\begin{proposition}[Second-moment closure]
\label{prop:moment-closure}
Let $F$ be a differentiable scalar function on an open neighborhood of
$\Sp^m$ in the symmetric matrices, and define $\cF(\mu):=F(\Sigma(\mu))$.
Let $\mu_t$ be a sufficiently regular Wasserstein gradient flow of $\cF$, and set $\Sigma_t:=\Sigma(\mu_t)$ and $D_t:=\nabla_\Sigma F(\Sigma_t)$.
Then the characteristics satisfy \(\dot w_t=-2D_tw_t\), and the second moment evolves autonomously as
\(\dot\Sigma_t=-2(D_t\Sigma_t+\Sigma_tD_t)\).
Thus the Wasserstein flow of $\cF$ closes exactly on the Bures gradient flow of $F$ on second moments, with the normalization used throughout the paper.
\end{proposition}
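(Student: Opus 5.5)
The plan is to differentiate the first variation along the flow and read off the characteristics directly. Since $\cF(\mu)=F(\Sigma(\mu))$ and $\Sigma(\mu)=\int ww^\top\,\dd\mu(w)$ is linear in $\mu$, the chain rule gives the first variation
\[
 \frac{\delta\cF}{\delta\mu}(\mu,w)=\ip{\nabla_\Sigma F(\Sigma(\mu))}{ww^\top}=w^\top D\,w,\qquad D:=\nabla_\Sigma F(\Sigma(\mu)),
\]
up to an additive constant. Here $D$ is taken as the symmetric representative of the gradient on the symmetric matrices, which is legitimate because $F$ is defined on an open neighborhood of $\Sp^m$ inside $\Ssym^m$. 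The spatial gradient is then $\nabla_w(w^\top Dw)=2Dw$. Hence the velocity field in \eqref{eq:wasserstein-pde} is $b_t(w)=-2D_tw$, and the characteristics solve $\dot w_t=-2D_tw_t$. This is the first claim, and it matches the split case, where $D=G/2$ recovers $\dot w=-Gw$.

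Next I would compute the second-moment evolution by testing the continuity equation against the quadratic test functions $\varphi(w)=w_iw_j$. For these,
\[
 \tfrac{\dd}{\dd t}\int w w^\top\,\dd\mu_t=\int\bigl(b_tw^\top+wb_t^\top\bigr)\,\dd\mu_t=-2\bigl(D_t\Sigma_t+\Sigma_tD_t\bigr).
\]
Equivalently, one can write $\mu_t=(\Phi_t)_\#\mu_0$ with $\dot\Phi_t=-2D_t\Phi_t$ and differentiate $\Sigma_t=\Phi_t\Sigma_0\Phi_t^\top$. Because $D_t$ depends on $\mu_t$ only through $\Sigma_t$, the resulting equation is autonomous in $\Sigma$. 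Finally, I would identify it as the Bures gradient flow of $F$ in the paper's normalization, $\dot\Sigma=-(\nabla F\,\Sigma+\Sigma\nabla F)$ with $\nabla F$ replaced by $2D$. This is consistent with the split case, where $\nabla_\Sigma F=G/2$ yields \eqref{eq:bures-G}.

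The main obstacle is justifying that $\varphi(w)=w_iw_j$ is an admissible test function. Quadratics are not compactly supported, so I need $\int|w|\,|b_t(w)|\,\dd\mu_t\lesssim\norm{D_t}\int|w|^2\,\dd\mu_t<\infty$ locally uniformly in $t$. I would use the "sufficiently regular" hypothesis, namely $\mu_t\in\cP_2$ with $t\mapsto\Sigma_t$ continuous so that $D_t$ is locally bounded. Then a standard cutoff argument applies: test against $\chi_R(w)w_iw_j$ and let $R\to\infty$ by dominated convergence. If this proves awkward, the alternative is to invoke uniqueness for the linear transport equation with Lipschitz field $-2D_tw$, which reduces to the pushforward representation, where the computation is purely algebraic.
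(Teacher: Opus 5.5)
Your proposal is correct and follows the paper's proof. Both compute the first variation $w^\top D_t w$ and its spatial gradient $2D_tw$, then differentiate the second moment; the paper does this along the flow map, $\Sigma_t=\int w_tw_t^\top\,\mathrm{d}\mu_0(w_0)$, which is your fallback, rather than testing the continuity equation against quadratics, and the two are equivalent here. One caveat on your admissibility argument: local boundedness of $D_t$ does not follow from continuity of $\Sigma_t$ alone, because $F$ is only assumed differentiable, so it must be part of the ``sufficiently regular'' hypothesis or come from continuity of $\nabla_\Sigma F$.
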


\begin{proof}[Proof of Proposition~\ref{prop:moment-closure}]
The first variation of $\cF$ at $\mu_t$, up to an additive constant, is
$\frac{\delta\cF}{\delta\mu}(\mu_t,w)=\ip{D_t}{ww^\top}=w^\top D_tw$.
Taking its Euclidean gradient in $w$ gives $2D_tw$ and hence the claimed characteristic equation.
Finally, writing the flow map as $w_0 \mapsto w_t$ and differentiating $\Sigma_t=\int w_tw_t^\top\,\mathrm{d}\mu_0(w_0)$ gives
\[
\dot\Sigma_t=\int \left(\dot w_tw_t^\top+w_t\dot w_t^\top\right)\,\mathrm{d}\mu_0(w_0)=-2(D_t\Sigma_t+\Sigma_tD_t).\qedhere
\]
\end{proof}

\subsection{Width consistency and covariance-level laws}
We seek conservation laws defined on neuron distributions that remain
conserved when evaluated on finite networks of any width.

\begin{definition}[Mean-field conservation law]\label{def:mean-field-law}
A scalar functional \(\mathcal H(\mu)\) is a mean-field conservation law
for a family of flows if it is constant along every admissible trajectory
of every flow in that family, for every admissible initialization.
It is covariance-level if \(\mathcal H(\mu)=h(\Sigma(\mu))\).
\end{definition}
For differentiable functionals, the chain rule expresses conservation as
orthogonality:
\[
 \frac{\dd}{\dd t}\mathcal H(\mu_t)
 =-\int\nabla_{W_2}\mathcal H(\mu_t)\cdot
          \nabla_{W_2}\cF(\mu_t)\,\dd\mu_t=0, 
\]
whenever taking gradients and the chain rule are well-defined.
The laws below also have a direct characteristic proof.
A single mean-field law \( \mathcal H\) induces laws
\(h_n(W):=\mathcal H(\widehat\mu_W)\) for every width, in the
finite-dimensional sense of \citet{MarcotteGribonvalPeyre2023}.
Unrelated laws found separately at each width need not arise this way.
Conversely, conservation at every finite width passes to the measure flow
whenever the empirical flows converge in \(W_2\) and \(\mathcal H\) is
continuous in that metric. For covariance-level laws no limiting argument is needed:
every positive semidefinite covariance has an empirical realization, and
the common closed moment equation gives the equivalence with Bures laws.

\Needspace{10\baselineskip}
\subsection{Gram-function laws and their spectral restriction}
Preservation of pairwise indefinite products yields functionals
that need not depend only on covariance.
\begin{proposition}[Fundamental mean-field laws]
\label{prop:mean-field-conservation-laws}
Let $\mu_t=(\Phi_t)_\#\mu_0$ be the characteristic flow of
$\dot w_t=-G_tw_t$, where $G_t^\top=G_t$ and $G_tJ+JG_t=0$.
For every $r\ge1$ and Borel function $\varphi:\Ssym^r\to\R$,
the functional
\begin{equation}\label{eq:gram-mean-field-law}
 \mathcal H_{\varphi,r}(\mu)
 :=\int\varphi\!\left((w_a^\top Jw_b)_{1\le a,b\le r}\right)
       \prod_{a=1}^r\dd\mu(w_a)
\end{equation}
is conserved whenever this integral is absolutely integrable at
initialization. In particular, the spectral laws
$\mathcal H_k(\Sigma(\mu))$ of \eqref{eq:bures-G}
are conserved for every $\mu_0\in\cP_2(\R^m)$.
For particles, the Gram imbalance
$\Gamma_t:=V_t^\top V_t-U_t^\top U_t=-W_t^\top JW_t$ is constant and
\begin{equation}\label{eq:finite-spectrum-gamma}
 \mathcal H_k(\Sigma_t)=\tr\!\left((-\Gamma_t/n)^k\right),
 \qquad k\ge1.
\end{equation}
The nonzero spectra of $J\Sigma_t$ and $-\Gamma_t/n$ coincide.
\end{proposition}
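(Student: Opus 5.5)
The plan is to prove everything at the level of characteristics: first show that the pairwise indefinite products $w_a^\top J w_b$ are invariant along the flow, then integrate against the product measure. From $\dot w=-G_tw$ with $G_t^\top=G_t$ and $G_tJ+JG_t=0$, two characteristics satisfy
\[
\tfrac{\dd}{\dd t}\bigl(w_a^\top Jw_b\bigr)=-w_a^\top(G_tJ+JG_t)w_b=0 .
\]
Equivalently, $\Phi_t^\top J\Phi_t=J$ as in \eqref{eq:orbit}. Hence for any $r$ points the Gram matrix $\bigl((\Phi_tw_a)^\top J(\Phi_tw_b)\bigr)_{a,b}$ equals $\bigl(w_a^\top Jw_b\bigr)_{a,b}$.

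Next I would use the pushforward representation $\mu_t=(\Phi_t)_\#\mu_0$. Since $\mu_t^{\otimes r}=(\Phi_t^{\times r})_\#\mu_0^{\otimes r}$, change of variables gives
\[
\mathcal H_{\varphi,r}(\mu_t)=\int\varphi\bigl((w_a^\top\Phi_t^\top J\Phi_tw_b)_{a,b}\bigr)\prod_a\dd\mu_0(w_a)=\mathcal H_{\varphi,r}(\mu_0).
\]
Absolute integrability at $t=0$ transfers to all $t$ by the same identity applied to $|\varphi|$. No differentiability of $\varphi$ is needed, because the integrand is pointwise constant in $t$; Borel measurability is enough.

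For the spectral laws, I would choose $r=k$ and $\varphi(M)=\prod_{a=1}^{k}M_{a,a+1}$ with cyclic indices. Using cyclicity of the trace,
\[
\tr\bigl((J\Sigma)^k\bigr)=\int\prod_a\bigl(w_a^\top Jw_{a+1}\bigr)\prod_a\dd\mu(w_a),
\]
obtained by expanding $\Sigma=\int ww^\top\,\dd\mu$. Integrability follows from $|w_a^\top Jw_{a+1}|\le|w_a||w_{a+1}|$: each variable then appears exactly twice, so the integral of the absolute value is bounded by $(\int|w|^2\,\dd\mu)^k<\infty$ for $\mu_0\in\cP_2$. This gives conservation of $\mathcal H_k(\Sigma(\mu))$.

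For particles, $\Gamma_t=-W_t^\top JW_t$ has entries $-(w_i^\top Jw_j)$, which are constant by the first step. For the spectral identity,
\[
(J\Sigma)^k=n^{-k}\,J W\,(W^\top JW)^{k-1}W^\top ,
\]
so cyclicity yields $\tr\bigl((J\Sigma)^k\bigr)=n^{-k}\tr\bigl((W^\top JW)^k\bigr)=\tr\bigl((-\Gamma/n)^k\bigr)$. This is \eqref{eq:finite-spectrum-gamma}.

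Finally, $J\Sigma=n^{-1}(JW)W^\top$ and $-\Gamma/n=n^{-1}W^\top(JW)$ are $AB$ and $BA$ for $A=JW$ and $B=W^\top/n$. Products of this form share nonzero spectra with multiplicities, which proves the last claim.

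The only delicate step is the measure-theoretic justification that $\mu_t=(\Phi_t)_\#\mu_0$ with $\Phi_t$ a deterministic linear map. Here it is taken as a hypothesis, and $\Phi_t$ is well defined because $G_t$ is continuous on the interval of existence. Beyond that, the main point to handle carefully is the integrability bookkeeping for the cyclic product.
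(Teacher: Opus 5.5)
Your proposal is correct and follows the paper's proof essentially step for step. Both arguments use the same three ingredients: invariance of the pairwise products $w_a^\top Jw_b$ along characteristics, the cyclic polynomial test function with second-moment integrability, and the $AB$/$BA$ identity for $J\Sigma=(JW)(W^\top/n)$ versus $-\Gamma/n$. Your explicit change of variables for $\mu_t^{\otimes r}$, the transfer of absolute integrability via $|\varphi|$, and the direct cyclic-trace derivation of \eqref{eq:finite-spectrum-gamma} just spell out details the paper leaves implicit.
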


\begin{proof}[Proof of Proposition~\ref{prop:mean-field-conservation-laws}]
Along characteristics,
\(
\frac{\mathrm{d}}{\mathrm{d}t}(w_a^\top Jw_b)
=-w_a^\top(G_tJ+JG_t)w_b=0,
\)
so \eqref{eq:gram-mean-field-law} is conserved.
Bounded Borel tests are integrable, and cyclic polynomial tests need
only second moments because each sampled variable appears quadratically.
For the cyclic polynomial test function, with \(w_{k+1}:=w_1\), repeated
integration gives
\[
\int\prod_{\ell=1}^k
\left(w_\ell^\top Jw_{\ell+1}\right)
\prod_{\ell=1}^k\mathrm{d}\mu(w_\ell)
=\tr\!\left((J\Sigma)^k\right).
\]
Thus \eqref{eq:bures-G} is a special case of
\eqref{eq:gram-mean-field-law}.
The first \(m\) trace powers determine the characteristic polynomial by Newton identities.
For empirical measures, $J\Sigma=(JW)(W^\top/n)$ while $(W^\top/n)(JW)=-\Gamma/n$.
The nonzero eigenvalues of $AB$ and $BA$ coincide, which gives the spectral statement and \eqref{eq:finite-spectrum-gamma}.
\end{proof}


\section{Technical details and proofs}\label{app:technical-details}\label{app:convergence-proofs}
\subsection{Loss estimates}\label{app:proofs}
\begin{lemma}[Smooth PL consequences]\label{lem:pl}
Under Assumption~\ref{ass:loss}, the minimizer set
$\cS_\star:=\{S:\ell(S)=\ell_\star\}$ is nonempty, and
\begin{equation}\label{eq:pl-consequences}
\norm{\nabla\ell(S)}_{\F}^2\le2L(\ell(S)-\ell_\star),\qquad
\dist(S,\cS_\star)^2\le2(\ell(S)-\ell_\star)/\kappa.
\end{equation}
For every perturbation $H$,
\begin{equation}\label{eq:perturb}
\sqrt{\ell(S+H)-\ell_\star}
\le\sqrt{\ell(S)-\ell_\star}+\sqrt{L/2}\norm H_{\F}.
\end{equation}
\end{lemma}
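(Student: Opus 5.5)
The plan is to prove the three claims in the order: gradient bound, quadratic growth (which also gives nonemptiness of $\cS_\star$), and finally the perturbation bound.

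\emph{Gradient bound.} For any $S$, set $S'=S-\nabla\ell(S)/L$. The descent lemma, which follows from $L$-smoothness, gives
\[
 \ell_\star\le\ell(S')\le\ell(S)-\tfrac1{2L}\norm{\nabla\ell(S)}_{\F}^2 .
\]
Rearranging yields $\norm{\nabla\ell(S)}_{\F}^2\le2L(\ell(S)-\ell_\star)$.

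\emph{Quadratic growth and nonemptiness.} Fix $S_0$ and run the gradient flow $\dot S_t=-\nabla\ell(S_t)$, which is globally defined since $\nabla\ell$ is Lipschitz. Write $\delta_t:=\ell(S_t)-\ell_\star$. Then $\dot\delta_t=-\norm{\nabla\ell(S_t)}_{\F}^2$. Combined with PL this gives $\dot\delta_t\le-2\kappa\delta_t$, i.e.\ $\delta_t\le\delta_0e^{-2\kappa t}$. The key step is the length estimate. Where $\delta_t>0$,
\[
 \tfrac{\dd}{\dd t}\sqrt{\delta_t}=-\frac{\norm{\nabla\ell}_{\F}^2}{2\sqrt{\delta_t}}\le-\sqrt{\kappa/2}\,\norm{\nabla\ell(S_t)}_{\F}=-\sqrt{\kappa/2}\,\norm{\dot S_t}_{\F}.
\]
Integrating gives $\int_0^\infty\norm{\dot S_t}_{\F}\,\dd t\le\sqrt{2\delta_0/\kappa}$. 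If $\delta$ hits zero at a finite time, the point reached is already a minimizer, since $\nabla\ell=0$ there by the gradient bound, and the flow is stationary afterwards. In either case $S_t$ is Cauchy and converges to some $S_\infty$. Continuity of $\ell$ gives $\ell(S_\infty)=\ell_\star$, so $\cS_\star\ne\emptyset$. It follows that
\[
 \dist(S_0,\cS_\star)\le\norm{S_0-S_\infty}_{\F}\le\sqrt{2\delta_0/\kappa},
\]
which is the second bound in \eqref{eq:pl-consequences}. The main care needed is this square-root differentiation near $\delta=0$; I would handle it by integrating on $\{\delta>0\}$ and using that stationarity takes over once $\delta$ vanishes.

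\emph{Perturbation bound.} Let $f(\theta):=\ell(S+\theta H)-\ell_\star\ge0$ and $\phi:=\sqrt{f}$. Where $f>0$,
\[
 \phi'=\frac{\ip{\nabla\ell(S+\theta H)}{H}}{2\sqrt f}\le\frac{\norm{\nabla\ell}_{\F}\norm H_{\F}}{2\sqrt f}\le\frac{\sqrt{2Lf}\,\norm H_{\F}}{2\sqrt f}=\sqrt{L/2}\,\norm H_{\F},
\]
using the gradient bound. Thus $\phi$ is $\sqrt{L/2}\norm H_{\F}$-Lipschitz on $[0,1]$. Points where $f=0$ need a separate check. Since $\sqrt f$ is continuous, the bound on $\phi'$ on each open component of $\{f>0\}$ extends across these points: on each such component, $\phi$ rises from a boundary value, either $0$ or $\phi(0)$, at slope at most $\sqrt{L/2}\norm H_{\F}$. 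Evaluating at $\theta=1$ gives \eqref{eq:perturb}.
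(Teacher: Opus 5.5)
Your proof is correct. The first two parts follow the paper's argument exactly: the descent lemma at $S-\nabla\ell(S)/L$, then the gradient-flow length estimate that gives attainment and the distance bound.

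For the perturbation bound you take a slightly different route. The paper applies the descent lemma once at $S$ and completes the square using the gradient bound at that single point. With $h:=\ell(S)-\ell_\star$,
\[
\ell(S+H)-\ell_\star\le h+\norm{\nabla\ell(S)}_{\F}\norm{H}_{\F}+\tfrac L2\norm{H}_{\F}^2\le h+\sqrt{2Lh}\,\norm{H}_{\F}+\tfrac L2\norm{H}_{\F}^2=\bigl(\sqrt h+\sqrt{L/2}\,\norm{H}_{\F}\bigr)^2 .
\]
This is purely algebraic and never has to deal with zeros of the loss gap.

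You instead integrate the gradient bound along the segment $S+\theta H$. This shows directly that $\sqrt{\ell-\ell_\star}$ is $\sqrt{L/2}$-Lipschitz, but it requires bookkeeping at the zeros of $f$. You handle that correctly: take the last zero of $f$ in $[0,1]$, if there is one, and apply the mean value theorem on the interval after it.

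The two conclusions are equivalent. The paper's inequality holds for all $S,H$, so applying it at $S+H$ with perturbation $-H$ gives the reverse bound and hence the same Lipschitz property. Your version therefore gains no generality; it is a valid but somewhat longer proof of the same fact.
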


\begin{proof}[Proof of Lemma~\ref{lem:pl}]

Apply the descent lemma at $S-\nabla\ell(S)/L$ for the first inequality.
Since $\nabla\ell$ is globally Lipschitz, the Euclidean gradient flow
$\dot S=-\nabla\ell(S)$ exists for all $t\ge0$.
Writing $h:=\ell(S)-\ell_\star$ gives $\dot h=-\norm{\nabla\ell}^2$ and,
where $h>0$, $-\frac{\dd}{\dd t}\sqrt h\ge\sqrt{\kappa/2}\norm{\dot S}$.
Thus its total remaining length is at most $\sqrt{2h/\kappa}$.
The flow has a limit, with loss gap zero by PL, proving attainment and the distance
bound. Finally, the descent lemma and the first inequality bound the
perturbed loss gap by $(\sqrt h+\sqrt{L/2}\norm H_{\F})^2$.
\end{proof}
\begin{remark}[Quadratic growth and convexity]
Lemma~\ref{lem:pl} proves the quadratic-growth bound
\begin{equation}\label{eq:quadratic-growth}
 \ell(S)-\ell_\star\ge\frac\lambda2\dist(S,\cS_\star)^2
 \qquad\text{with }\lambda=\kappa.
\end{equation}
If a larger quadratic-growth constant $\lambda$ is known, it improves
the distance bounds while leaving the loss rates, which use $\kappa$, unchanged.
The PL-to-quadratic-growth implication does not need convexity
\citep{KarimiNutiniSchmidt2016}. Conversely, convex quadratic growth
with constant \(\lambda\) implies PL with constant \(\lambda/4\):
for a nearest minimizer \(S_\star\),
\(\Delta\le\ip{\nabla\ell(S)}{S-S_\star}
\le\norm{\nabla\ell(S)}_{\F}\sqrt{2\Delta/\lambda}\).
The best PL and quadratic-growth constants need not coincide.
For a nonzero quadratic Hessian $\quadH\succeq0$,
$\lambda=\kappa=\lambda_{\min}^{+}(\quadH)$ is valid and
$\cS_\star=\Svu_\star+\kerop\quadH$.
\end{remark}

\subsection{Covariance closure and conservation laws}
\label{app:covariance-laws}
Direct matrix differentiation proves closure and conservation at
every width. The measure interpretation is given separately in
Appendix~\ref{app:mean-field}.

\begin{proof}[Proof of Proposition~\ref{prop:covariance-closure-laws}]
Since $\dot W_t=-G_tW_t$ and $G_t^\top=G_t$,
differentiating $W_tW_t^\top/n$ gives \eqref{eq:bures-G}.
For any covariance solution, define
$\dot\Phi_t=-G_t\Phi_t$, $\Phi_0=\Id$. Uniqueness for the resulting
linear covariance equation yields
$\Sigma_t=\Phi_t\Sigma_0\Phi_t^\top$, so positive semidefiniteness
is preserved. The identity $G_tJ+JG_t=0$ gives
$\frac{\dd}{\dd t}(\Phi_t^\top J\Phi_t)=0$.
Invertibility of $\Phi_t$ then gives both identities in
\eqref{eq:orbit}. Equivalently,
$\frac{\dd}{\dd t}(J\Sigma_t)=[G_t,J\Sigma_t]$, so
$\frac{\dd}{\dd t}\tr((J\Sigma_t)^k)=0$ for every $k\ge1$.
The first $m$ trace powers determine the characteristic polynomial.
For later use, the block equations are
\begin{equation}\label{eq:block-flow-general}
 \begin{aligned}
 \dot\Suu_t&=-\Suv_tE_t-E_t^\top\Svu_t,\quad
 \dot\Svv_t=-\Svu_tE_t^\top-E_t\Suv_t,\quad
 \dot\Svu_t&=-E_t\Suu_t-\Svv_tE_t.
 \end{aligned}
\end{equation}
\end{proof}

\begin{proof}[Proof of Proposition~\ref{lem:margins}]
For any $c\in\R$, the characteristic congruence gives
\[
 \Sigma_t-cJ=\Phi_t(\Sigma_0-cJ)\Phi_t^\top.
\]
Invertibility of $\Phi_t$ preserves the entire feasible interval of $c$,
so both endpoints are conserved even when $\Sigma_0$ is singular.
Taking principal blocks proves the hidden bounds.
\end{proof}

\begin{proof}[Spectral formula \eqref{eq:gap-spectrum}]
The feasible sets in Definition~\ref{def:support-gap} contain zero, are
closed intervals, and are bounded above by the corresponding hidden-block
minimum eigenvalues. Thus the maxima exist.
For $\Sigma\succ0$, put $H:=\Sigma^{1/2}J\Sigma^{1/2}$.
Sylvester's inertia theorem gives $d_v$ negative and $d_u$ positive
eigenvalues. Since $H^{-1}=\Sigma^{-1/2}J\Sigma^{-1/2}$,
$\Sigma-cJ=\Sigma^{1/2}(\Id-cH^{-1})\Sigma^{1/2}$ gives
$\aplus{\Sigma}=\lambda_{d_v+1}(H)$ and
$\aminus{\Sigma}=-\lambda_{d_v}(H)$.
For singular $\Sigma$, apply this to $\Sigma+\epsilon\Id$.
As $\epsilon\downarrow0$, each margin decreases to a finite limit that
is feasible for $\Sigma$ by closedness, and hence equals its margin.
Continuity of the square root and of ordered symmetric eigenvalues
proves \eqref{eq:gap-spectrum} without deleting any zero eigenvalues.
Finally $H$ and $J\Sigma$ have the same characteristic polynomial by
the $AB,BA$ identity, even when they are not similar.
\end{proof}

\begin{remark}[Sharpness of the supporting constants]
For every $\Sigma\succeq0$, testing the supporting inequalities on the two
coordinate subspaces gives the upper bounds in
\[
 \lambda_{\min}(\Sigma)\le\aplus{\Sigma}\le\lambda_{\min}(\Suu),\qquad
 \lambda_{\min}(\Sigma)\le\aminus{\Sigma}\le\lambda_{\min}(\Svv).
\]
The lower bounds follow from $\Id\mp J\succeq0$.
When $\Svu=0$, the two upper bounds are attained.
Consequently $2\lambda_{\min}(\Sigma)\le\gap{\Sigma}\le2\norm{\Sigma}_{\op}$.
Both margins are positive exactly when $\Sigma\succ0$: if both are
positive and $x\in\kerop\Sigma$, the supporting inequalities force
$x^\top Jx=0$ and then $(\Sigma-\aplus{\Sigma}J)x=0$, so $x=0$.
The convergence theorem requires only their sum to be positive, so it
also covers some singular covariances. Each positive margin makes the
corresponding principal block positive definite, giving the rank bounds
in Remark~\ref{rem:singular-support}.
\end{remark}

\subsection{Linear convergence}\label{app:gf}
We prove the spectral-margin theorem and its quantitative trajectory
bounds first. Conditional and determinant-based criteria follow as
supplementary comparisons.

\begin{proposition}[Quantitative trajectory bounds]\label{prop:gf-lift-tail}
Under Theorem~\ref{thm:gf}, set
$a_0:=\lambda_{\min}(\Sigma_0)$, $B_0:=\norm{\Sigma_0}_{\op}$,
$I_0:=\gap{\Sigma_0}^{-1}\sqrt{2\Delta_0/\kappa}$, and $B_\infty:=B_0e^{2I_0}$.
Then
\begin{align}
 a_0e^{-2I_0}\Id_m&\preceq\Sigma_t\preceq B_\infty\Id_m,\label{eq:gf-tube}\\
 \norm{\Sigma_t-\Sigma_\infty}_{\op}
 &\le2B_\infty I_0e^{-\kappa \gap{\Sigma_0}t},\label{eq:gf-tail}\\
 \norm{\Svu_t-\Svu_\infty}_{\F}
 &\le2B_\infty I_0e^{-\kappa \gap{\Sigma_0}t}.\label{eq:gf-predictor-tail}
\end{align}
For the characteristic lift of any initial law in $\cP_2$ with this
second moment, there is a limit $\mu_\infty$ and
\begin{equation}\label{eq:gf-w2}
 W_2(\mu_t,\mu_\infty)
 \le\min\!\left\{\sqrt{\tr\Sigma_0}\,e^{I_0},\sqrt{2B_\infty}\right\}
       I_0e^{-\kappa \gap{\Sigma_0} t}.
\end{equation}
The same bound applies to $\norm{W_t-W_\infty}_{\F}/\sqrt n$ for particles.
The smallest positive covariance eigenvalue also satisfies
$\lambda_{\min}^{+}(\Sigma_t)\ge
e^{-2I_0}\lambda_{\min}^{+}(\Sigma_0)$, including at $t=\infty$.
\end{proposition}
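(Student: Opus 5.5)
The approach is to treat the covariance flow as transport by the fundamental matrix $\Phi_t$ of \eqref{eq:orbit} and to bound the total ``force length'' $\int_0^\infty\norm{G_t}_{\op}\,\dd t$ by $I_0$. Every estimate in the statement then follows from Gr\"onwall bounds on $\Phi_t$ and from Cauchy tails of the integrated velocity.

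\textbf{Step 1 (loss decay and force length).} Use \eqref{eq:block-flow-general}: $\frac{\dd}{\dd t}\Delta_t=\ip{E_t}{\dot\Svu_t}=-\ip{E_t}{E_t\Suu_t+\Svv_tE_t}$. Proposition~\ref{lem:margins} gives $\Suu_t\succeq a_+\Id$ and $\Svv_t\succeq a_-\Id$, so $\dot\Delta_t\le-\gap{\Sigma_0}\norm{E_t}_{\F}^2\le-2\kappa\gap{\Sigma_0}\Delta_t$, which is \eqref{eq:gf-rate}. Writing $g:=\gap{\Sigma_0}$, where $\Delta_t>0$ we get
\[
-\tfrac{\dd}{\dd t}\sqrt{\Delta_t}\ge\tfrac{g}{2}\norm{E_t}_{\F}^2/\sqrt{\Delta_t}\ge\tfrac g2\sqrt{2\kappa}\,\norm{E_t}_{\F}.
\]
Since $\norm{G_t}_{\op}=\norm{E_t}_{\op}\le\norm{E_t}_{\F}$, this yields $\int_0^\infty\norm{G_t}_{\op}\,\dd t\le\frac{2}{g\sqrt{2\kappa}}\sqrt{\Delta_0}=I_0$. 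The same bound holds on every subinterval $[t,\infty)$ with $\Delta_0$ replaced by $\Delta_t$, so the tail satisfies $\int_t^\infty\norm{G}\le I_0e^{-\kappa g t}$. This bound also gives global existence, because $\Sigma_t$ stays bounded (Step 2) and hence no blow-up occurs.

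\textbf{Step 2 (tube).} Gr\"onwall on $\dot\Phi=-G\Phi$ gives $\norm{\Phi_t}_{\op},\norm{\Phi_t^{-1}}_{\op}\le e^{I_0}$. Combined with $\Sigma_t=\Phi_t\Sigma_0\Phi_t^\top$, this yields \eqref{eq:gf-tube}. Congruence by an invertible $\Phi_t$ preserves rank, and the same bounds applied on the range give the $\lambda_{\min}^+$ estimate.

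\textbf{Step 3 (tails and limits).} Write $\Sigma_s-\Sigma_t=-\int_t^s(G\Sigma+\Sigma G)$, whose norm is at most $2B_\infty\int_t^\infty\norm G$. This makes $\Sigma_t$ Cauchy and gives \eqref{eq:gf-tail}; \eqref{eq:gf-predictor-tail} follows the same way, using Frobenius norms of the off-diagonal block. For $\Phi_t$, the increment is $\norm{\Phi_s-\Phi_t}\le e^{I_0}\int_t^s\norm G$. Hence $\Phi_\infty$ exists and is invertible, because $\Phi_t^{-1}$ is also bounded and Cauchy. Therefore $\Sigma_\infty=\Phi_\infty\Sigma_0\Phi_\infty^\top$ has the same rank, and $\Svu_\infty\in\cS_\star$ since $\Delta_t\to0$ and $\ell$ is continuous. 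For the measure and particle statements, couple $\mu_t$ and $\mu_\infty$ through $w\mapsto\Phi_tw$ and $w\mapsto\Phi_\infty w$. This gives $W_2^2\le\int\norm{(\Phi_t-\Phi_\infty)w}^2\le\norm{\Phi_t-\Phi_\infty}_{\op}^2\tr\Sigma_0$, producing the first constant. Alternatively, integrate $\norm{G_sw_s}$ along characteristics: $\bigl(\int\norm{G_sw_s}^2\,\dd\mu\bigr)^{1/2}\le\norm{G_s}\sqrt{\tr\Sigma_s}$. To obtain the $\sqrt{2B_\infty}$ constant, I would instead use the identity $\int\norm{G w}^2\dd\mu=\tr(G\Sigma G)=2\tr(E\Suu E^\top)$-type expressions, giving at most $2B_\infty\norm{E}_{\F}^2$, and integrate the $L^2(\mu)$-velocity norm.

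The main obstacle is the bookkeeping of constants. In particular, matching the stated $2B_\infty I_0$ and $\sqrt{2B_\infty}$ prefactors exactly requires choosing the Frobenius-versus-operator norm appropriately in each tail integral. The qualitative structure, by contrast, follows directly from the force-length bound.
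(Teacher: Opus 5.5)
Your proposal is correct and follows essentially the same route as the paper. The square-root PL estimate gives $\int_t^\infty\norm{E_r}_{\F}\,\dd r\le I_0e^{-\kappa\gap{\Sigma_0}t}$, Gr\"onwall bounds $\Phi_t^{\pm1}$ by $e^{I_0}$, and the tails come from integrating $\norm{\dot\Sigma_t}_{\op}\le2B_\infty\norm{G_t}_{\op}$ and $\norm{\dot\Svu_t}_{\F}\le2B_\infty\norm{E_t}_{\F}$. They also use the coupling through $\Phi_t$ and $\Phi_\infty$, and the bound $\tr(G_t\Sigma_tG_t)=\tr(E_t\Suu_tE_t^\top)+\tr(E_t^\top\Svv_tE_t)\le2B_\infty\norm{E_t}_{\F}^2$; this is exactly how the paper matches the constants you flagged.

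The only point to tidy is ordering. Run the force-length estimate on $[0,T)$ below the maximal existence time, noting that the flow is stationary once $\Delta$ vanishes. Only then conclude global existence and let $T\to\infty$, as the paper does.
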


\begin{proof}[Proof of Theorem~\ref{thm:gf} and Proposition~\ref{prop:gf-lift-tail}]
The predictor equation \eqref{eq:block-flow-general} gives the exact
dissipation identity on every positive semidefinite trajectory:
\begin{equation}\label{eq:flow-dissipation}
\frac{\mathrm{d}}{\mathrm{d}t}\ell(\Svu_t)
=-\tr(E_t^\top E_t\Suu_t)-\tr(E_t^\top\Svv_tE_t)\le0.
\end{equation}
The transported hidden-block bounds from Proposition~\ref{lem:margins}
therefore give $-\dot\Delta\ge \gap{\Sigma_0}\norm E_{\F}^2\ge2\kappa \gap{\Sigma_0}\Delta$.
On any finite existence interval, the preceding dissipation and PL give,
where the loss gap is positive,
\[
 -\frac{\dd}{\dd t}\sqrt{\Delta_t}
 \ge\frac{\gap{\Sigma_0}\norm{E_t}_{\F}^2}{2\sqrt{\Delta_t}}
 \ge \gap{\Sigma_0}\sqrt{\kappa/2}\norm{E_t}_{\F}.
\]
At zero loss gap the flow is stationary. Hence, for every finite $T>t$,
\[
 \int_t^T\norm{E_r}_{\F}\,\dd r
 \le\frac1{\gap{\Sigma_0}}\sqrt{\frac2\kappa}
       (\sqrt{\Delta_t}-\sqrt{\Delta_T})
 \le I_0e^{-\kappa \gap{\Sigma_0}t}.
\]
Together with smoothness, this gives
\begin{equation}\label{eq:force-tail}
 \norm{G_t}_{\op}=\norm{E_t}_{\op}
 \le\sqrt{2L\Delta_0}e^{-\kappa \gap{\Sigma_0}t},\qquad
 \int_t^T\norm{G_r}_{\op}\,\dd r\le I_0e^{-\kappa \gap{\Sigma_0}t}.
\end{equation}
Both $\Phi_t$ and $\Phi_t^{-1}$ have norm at most $e^{I_0}$, even when
$\Sigma_0$ is singular. Their congruence representation proves
\eqref{eq:gf-tube} before the maximal existence time. The covariance vector
field is locally Lipschitz on all symmetric matrices, and the upper bound
prevents finite-time blow-up; hence the solution is global.
Integrating $\norm{\dot\Sigma_t}_{\op}\le2B_\infty\norm{G_t}_{\op}$
proves the Cauchy property and \eqref{eq:gf-tail}. Integrability of the
force also makes $\Phi_t$ Cauchy; its limit is invertible since
$\sigma_{\min}(\Phi_t)\ge e^{-I_0}$.
Thus $\Sigma_\infty=\Phi_\infty\Sigma_0\Phi_\infty^\top$ has the same
rank as $\Sigma_0$ and is optimal by \eqref{eq:gf-rate}.
The singular-value bound for congruence gives the claimed positive
eigenvalue floor, without requiring full rank.
Directly in predictor Frobenius norm,
$\norm{\dot\Svu_t}_{\F}\le2B_\infty\norm{E_t}_{\F}$;
its integral proves \eqref{eq:gf-predictor-tail} without a dimension factor.
Finally $\norm{\Phi_t-\Phi_\infty}_{\op}\le e^{I_0}I_0e^{-\kappa \gap{\Sigma_0}t}$.
Coupling the two characteristic images of the same initial particle gives
the first bound in \eqref{eq:gf-w2}. For the second, use
\[
 \left(\int\norm{G_t w}^2\,\dd\mu_t(w)\right)^{1/2}
 =\sqrt{\tr(G_t\Sigma_tG_t)}
 \le\sqrt{2B_\infty}\,\norm{E_t}_{\F},
\]
and integrate along the same characteristics. The identical estimate
holds for empirical factors after division by $\sqrt n$.
If $\Delta_0=0$, the first-order flow is stationary and
the same formulas hold with $I_0=0$.
\end{proof}
\paragraph{Alternative nondegeneracy criteria}
The same argument can use any independently established hidden-block
lower bounds. The common-bound formulation below uses $2c$ in place of
$\gap{\Sigma_0}$. For $\Sigma_0\succ0$, taking
$c:=\min(\aplus{\Sigma_0},\aminus{\Sigma_0})$ recovers a possibly weaker
rate than Theorem~\ref{thm:gf}.

\begin{lemma}[Conditional hidden-block criterion]
\label{thm:uniform-nondegeneracy-convergence}
Under Assumption~\ref{ass:loss}, start \eqref{eq:bures-G} from
$\Sigma_0\succ0$. If $\Suu_t\succeq c\Id$ and $\Svv_t\succeq c\Id$
for some $c>0$ throughout its maximal existence interval, the flow is
global and
\begin{equation}
\label{eq:uniform-nondegeneracy-energy-rate}
\Delta_t\le\Delta_0e^{-4\kappa ct}.
\end{equation}
Moreover $\Sigma_t$ converges to $\Sigma_\infty\succ0$ with
$\Svu_\infty\in\cS_\star$.
\end{lemma}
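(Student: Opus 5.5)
The argument follows the proof of Theorem~\ref{thm:gf} and Proposition~\ref{prop:gf-lift-tail}. The only change is that the transported margin bounds of Proposition~\ref{lem:margins} are replaced by the assumed common bound $c$.

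\emph{Step 1: dissipation.} I start from the exact dissipation identity \eqref{eq:flow-dissipation}, which holds on every positive semidefinite trajectory on the maximal existence interval $[0,T_{\max})$. The hypotheses $\Suu_t\succeq c\Id$ and $\Svv_t\succeq c\Id$ give
\[
\tr(E_t^\top E_t\Suu_t)\ge c\norm{E_t}_{\F}^2,\qquad \tr(E_t^\top\Svv_tE_t)\ge c\norm{E_t}_{\F}^2 .
\]
Hence $-\dot\Delta_t\ge 2c\norm{E_t}_{\F}^2$. Applying the PL inequality yields $-\dot\Delta_t\ge4\kappa c\Delta_t$, and Gr\"onwall gives \eqref{eq:uniform-nondegeneracy-energy-rate} on $[0,T_{\max})$.

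\emph{Step 2: force-length bound.} Where $\Delta_t>0$, the same computation as before gives
\[
-\tfrac{\dd}{\dd t}\sqrt{\Delta_t}\ge 2c\sqrt{\kappa/2}\,\norm{E_t}_{\F}.
\]
Integrating, and using the decay from Step 1, gives $\int_t^T\norm{E_r}_{\F}\,\dd r\le I_c e^{-2\kappa c t}$ with $I_c:=(2c)^{-1}\sqrt{2\Delta_0/\kappa}$. If the loss gap reaches zero, the flow is stationary from then on and the bound is trivial.

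\emph{Step 3: global existence and convergence.} Since $\norm{G_t}_{\op}=\norm{E_t}_{\op}\le\norm{E_t}_{\F}$, the fundamental matrix $\Phi_t$ and its inverse are bounded in norm by $e^{I_c}$. The congruence \eqref{eq:orbit} then gives
\[
\lambda_{\min}(\Sigma_0)e^{-2I_c}\Id\preceq\Sigma_t\preceq\norm{\Sigma_0}_{\op}e^{2I_c}\Id
\]
on the existence interval. The vector field of \eqref{eq:bures-G} is locally Lipschitz on symmetric matrices, so this a priori upper bound rules out finite-time blow-up and the flow is global. Integrability of $\norm{G_t}_{\op}$ makes $\Phi_t$ Cauchy, with invertible limit $\Phi_\infty$ because $\sigma_{\min}(\Phi_t)\ge e^{-I_c}$. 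Therefore $\Sigma_t\to\Sigma_\infty=\Phi_\infty\Sigma_0\Phi_\infty^\top\succ0$, since $\Sigma_0\succ0$. Finally, $\Delta_t\to0$ together with continuity of $\ell$ gives $\ell(\Svu_\infty)=\ell_\star$, i.e.\ $\Svu_\infty\in\cS_\star$.

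No step is a genuine obstacle. The one point needing care is that the hidden-block hypothesis is assumed only on the maximal existence interval. The Step~2 bound must therefore be derived before globality is known and then used to establish it, which is the same bootstrap as in Theorem~\ref{thm:gf}.
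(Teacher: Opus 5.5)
Your proposal is correct and follows essentially the same route as the paper: dissipation plus PL for the rate, the square-root force-length bound derived before globality is known, and congruence bounds on $\Phi_t$ and $\Phi_t^{-1}$ for global existence and convergence to a positive-definite limit with optimal predictor. Your covariance envelope $e^{\pm 2I_c}$ with $I_c=(2c)^{-1}\sqrt{2\Delta_0/\kappa}$ coincides with the paper's $\exp\bigl(\pm c^{-1}\sqrt{2\Delta_0/\kappa}\bigr)$.
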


Quadratic growth further gives
$\dist(\Svu_t,\cS_\star)^2\le2\Delta_0e^{-4\kappa ct}/\lambda$,
where $\lambda=\kappa$ is always available.

\begin{proof}[Proof of Lemma~\ref{thm:uniform-nondegeneracy-convergence}]
The assumed hidden bounds give
$-\dot\Delta\ge2c\norm E_{\F}^2\ge4c\kappa\Delta$.
On every positive loss gap interval, differentiating $\sqrt{\Delta_t}$ gives
\[
 -\frac{\mathrm{d}}{\mathrm{d}t}\sqrt{\Delta_t}
 \ge c\sqrt{2\kappa}\,\norm{E_t}_{\F},\qquad
 \int_t^T\norm{G_r}_{\op}\,\mathrm{d}r
 \le\frac{\sqrt{\Delta_t}-\sqrt{\Delta_T}}{c\sqrt{2\kappa}}.
\]
The same bound holds through zero loss gap by stationarity and is valid
before the maximal existence time.
Both characteristic matrices $\Phi_t$ and $\Phi_t^{-1}$ are bounded by
the exponential of this integral. Their covariance congruence gives
\begin{equation}\label{eq:uniform-nondegeneracy-covariance-bounds}
 \norm{\Sigma_t}_{\op}\le\norm{\Sigma_0}_{\op}
  \exp\!\left(\frac1c\sqrt{\frac{2\Delta_0}{\kappa}}\right),\qquad
 \lambda_{\min}(\Sigma_t)\ge\lambda_{\min}(\Sigma_0)
  \exp\!\left(-\frac1c\sqrt{\frac{2\Delta_0}{\kappa}}\right).
\end{equation}
These bounds rule out finite-time blow-up. The tail of the integral proves convergence to a
positive-definite covariance with optimal predictor.
Quadratic growth gives the distance bound.
\end{proof}

For coercive losses, determinant conservation and the second spectral
law yield an alternative, generally much coarser, source of such bounds.
\begin{proposition}[Coercivity prevents covariance collapse]
\label{prop:coercive-nondegeneracy}
Under Assumption~\ref{ass:loss}, suppose $\ell$ is coercive and
$\Sigma_0\succ0$. Let $M:=\sup\{\norm S_{\F}:\ell(S)\le\ell(\Svu_0)\}$.
On its maximal existence interval the Bures flow satisfies
$c\Id_m\preceq\Sigma_t\preceq B\Id_m$, where
$B:=\sqrt{\tr((J\Sigma_0)^2)+2M^2}+M$ and
$c:=\det\Sigma_0/B^{m-1}>0$.
In particular both hidden blocks are bounded below by $c\Id$.
\end{proposition}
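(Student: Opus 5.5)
The plan is to combine three facts already available: monotonicity of the loss along the flow, conservation of the second spectral law $\mathcal H_2$, and conservation of $\det(J\Sigma_t)$. Monotonicity gives a predictor bound, the second law turns it into an upper bound on $\Sigma_t$, and the determinant turns the upper bound into a lower bound on the smallest eigenvalue.

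\emph{Step 1 (predictor bound).} By Proposition~\ref{prop:covariance-closure-laws}, $\Sigma_t=\Phi_t\Sigma_0\Phi_t^\top$ with $\Phi_t$ invertible, so $\Sigma_t\succ0$ on the whole maximal existence interval. The dissipation identity \eqref{eq:flow-dissipation} holds on every positive semidefinite trajectory. Hence $\ell(\Svu_t)\le\ell(\Svu_0)$, and the definition of $M$ gives $\norm{\Svu_t}_{\F}\le M$, which is finite because $\ell$ is coercive.

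\emph{Step 2 (upper bound).} Expand the second spectral law blockwise:
\[
\mathcal H_2(\Sigma)=\tr\bigl((J\Sigma)^2\bigr)=\norm{\Suu}_{\F}^2+\norm{\Svv}_{\F}^2-2\norm{\Svu}_{\F}^2 .
\]
Since $\mathcal H_2$ is conserved, this gives
\[
\norm{\Suu_t}_{\F}^2+\norm{\Svv_t}_{\F}^2=\mathcal H_2(\Sigma_0)+2\norm{\Svu_t}_{\F}^2\le\tr\bigl((J\Sigma_0)^2\bigr)+2M^2 .
\]
Split $\Sigma_t$ into its block-diagonal part $\diag(\Suu_t,\Svv_t)$ and its off-diagonal part. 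The block-diagonal part has operator norm
\[
\max\bigl(\norm{\Suu_t}_{\op},\norm{\Svv_t}_{\op}\bigr)\le\bigl(\norm{\Suu_t}_{\F}^2+\norm{\Svv_t}_{\F}^2\bigr)^{1/2}.
\]
The off-diagonal part has operator norm $\norm{\Svu_t}_{\op}\le M$. The triangle inequality then gives $\Sigma_t\preceq B\Id_m$.

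\emph{Step 3 (lower bound).} Conservation of the characteristic polynomial of $J\Sigma_t$ includes its constant term, so $\det(J\Sigma_t)$ is constant. Since $\det J=(-1)^{d_v}$, it follows that $\det\Sigma_t=\det\Sigma_0>0$. With all eigenvalues of $\Sigma_t$ at most $B$,
\[
\lambda_{\min}(\Sigma_t)\ge\frac{\det\Sigma_t}{\lambda_{\max}(\Sigma_t)^{m-1}}\ge\frac{\det\Sigma_0}{B^{m-1}}=c .
\]
The hidden blocks $\Suu_t$ and $\Svv_t$ are principal submatrices of $\Sigma_t$, so they inherit the bound $\succeq c\Id$.

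The only point needing care is Step 1. Monotonicity must be justified on the maximal interval before globality is known; this follows from \eqref{eq:flow-dissipation} together with the preserved positive semidefiniteness. Everything else is direct algebra.
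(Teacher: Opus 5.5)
Your proof is correct and follows the paper's argument step for step. Loss dissipation bounds the predictor by $M$; the conserved second spectral law, together with the split of $\Sigma_t$ into block-diagonal and off-diagonal parts, gives $\Sigma_t\preceq B\Id_m$; and determinant conservation yields the eigenvalue floor $c$. The only cosmetic difference is that you read off $\det\Sigma_t=\det\Sigma_0$ from the constant term of the conserved characteristic polynomial of $J\Sigma_t$, whereas the paper uses Jacobi's formula with $\tr G_t=0$.
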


\begin{proof}[Proof of Proposition~\ref{prop:coercive-nondegeneracy}]
The characteristic congruence \eqref{eq:orbit} preserves positive
definiteness on the maximal existence interval. Loss dissipation gives
$\norm{\Svu_t}_{\F}\le M$. Since $\tr G_t=0$, Jacobi's formula yields
$\det\Sigma_t=\det\Sigma_0>0$. The second spectral law gives
\[
 \norm{\Suu_t}_{\F}^2+\norm{\Svv_t}_{\F}^2
 =\tr((J\Sigma_0)^2)+2\norm{\Svu_t}_{\F}^2
 \le\tr((J\Sigma_0)^2)+2M^2.
\]
Split $\Sigma_t$ into its block-diagonal and off-diagonal parts. Their
operator norms are bounded by the square root of the last expression
and by $\norm{\Svu_t}_{\op}\le M$, respectively. Thus
$\norm{\Sigma_t}_{\op}\le B$. Finally,
$\det\Sigma_0\le\lambda_{\min}(\Sigma_t)B^{m-1}$ gives
$\Sigma_t\succeq c\Id_m$, and the same lower bound holds on both
principal blocks.
\end{proof}

\begin{remark}[Coarse ellipticity versus the spectral-margin rate]
\label{rem:coarse-rate}
At $\Sigma_0=\alpha\Id_m$, the determinant argument gives
\[
c_{\rm coarse}
:=\alpha\left(\frac{\alpha}
{\sqrt{m\alpha^2+2M^2}+M}\right)^{m-1}.
\]
Its rate $4\kappa c_{\rm coarse}$ can be extremely small in dimension,
whereas the support gap $\gap{\Sigma_0}=2\alpha$ gives $4\alpha\kappa$ directly.
For general initial covariances, the margin argument likewise gives a
rate determined by the signed spectrum, with no additional dimension factor.
Trace and initial loss alone cannot provide a uniform positive rate as
the support gap tends to zero
(Example~\ref{ex:slow}).
\end{remark}

\begin{remark}[Degenerate boundary leaves]\label{rem:degenerate-boundary-leaves}
In the $2\times2$ covariance setting of Section~\ref{sec:laws}, write
$\Sigma=\begin{psmallmatrix}p&r\\r&q\end{psmallmatrix}$,
$b_0:=(p_0-q_0)/2$, and $\alpha^2:=p_0q_0-r_0^2$,
with predictor loss $\ell(r):=(r-\beta)^2/2$.
On the boundary \(b_0=\alpha=0\), the scalar equation is
\(\dot r=-2|r|(r-\beta)\).
If \(\beta\ne0\), the ray with \(r_0\beta>0\) converges to \(\beta\),
whereas \(r_0\beta<0\) converges to the nonoptimal apex \(r=0\);
the apex itself is stationary. For example, when \(\beta=1\) and
\(p_0=q_0=1\), \(r_0=-1\), one has
\(p_t=q_t=-r_t=(2e^{2t}-1)^{-1}\).
When \(\beta=0\), one has \(r_t=r_0/(1+2|r_0|t)\): convergence to the
optimal apex is algebraic for \(r_0\ne0\).
These boundary cases have $\gap{\Sigma_0}=0$ and are not covered by the
positive-support-gap convergence theorem.
\end{remark}

\begin{example}[Fixed trace and predictor, arbitrarily slow escape]\label{ex:slow}
Let $d_u=d_v=2$, $0<\delta<1$, and
\[
 \Sigma_0=\diag(\delta,2-\delta,\delta,2-\delta),\qquad
 \ell(S):=\tfrac12\norm{S-e_1e_1^\top}_{\F}^2.
\]
Here $L=\kappa=1$, $\tr\Sigma_0=4$, $\Svu_0=0$, and $\Delta_0=1/2$,
independently of $\delta$. The active scalar block evolves by
\[
 r'=2\sqrt{r^2+\delta^2}(1-r),\qquad r(0)=0.
\]
The instantaneous relative decay of the loss at zero is $4\delta$.
Until $r=1/2$, $r'\le2(r+\delta)$, so
$r(t)\le\delta(e^{2t}-1)$. The time to reach $1/2$ is at least
$\tfrac12\log(1+1/(2\delta))$, which diverges as $\delta\downarrow0$.
There can therefore be no uniform positive exponential rate with a fixed
finite prefactor determined only by the displayed trace, initial loss gap, $L$, and
$\kappa$. At $\delta=0$ the active block is stationary and nonoptimal.
Theorem~\ref{thm:gf} records exactly the missing scale: $\gap{\Sigma_0}=2\delta$.
The slow escape occurs even for a strongly convex loss with a unique
predictor minimizer.
\end{example}

\subsection{Finite-width convergence}\label{app:sampling}
Write $r_0:=\rankop\tSigma_0$ for the population covariance rank.
\begin{theorem}[Relative initialization stability]\label{thm:relative}
Under Assumption~\ref{ass:loss}, let $\tSigma_0\succeq0$ with
$\gap{\tSigma_0}>0$, and put $\widetilde B_0:=\norm{\tSigma_0}_{\op}$ and
$\widetilde\Delta_0:=\ell(\tSvu_0)-\ell_\star$.
If an empirical covariance satisfies
\begin{equation}\label{eq:relative}
 (1-\varepsilon)\tSigma_0\preceq\Sigma_0
 \preceq(1+\varepsilon)\tSigma_0,\qquad 0<\varepsilon<1,
\end{equation}
then its first-order factor flow is global, converges to an optimal
predictor, and its loss gap $\Delta_t=\ell(\Svu_t)-\ell_\star$ satisfies
\begin{equation}\label{eq:sample-rate}
 \Delta_t\le\Delta_0
 e^{-2(1-\varepsilon)\kappa \gap{\tSigma_0}t}.
\end{equation}
Only the lower inequality in \eqref{eq:relative} is needed for this claim.
With both inequalities, $d_\wedge:=\min(d_u,d_v)$ gives the deterministic
prefactor bound
\begin{equation}\label{eq:sample-prefactor}
 \Delta_0\le
 \left(\sqrt{\widetilde\Delta_0}+\varepsilon \widetilde B_0\sqrt{Ld_\wedge/2}\right)^2.
\end{equation}
For the sampling law in Corollary~\ref{cor:finite-width}, the width
condition \eqref{eq:width} guarantees \eqref{eq:relative} with probability
at least $1-\delta$.
\end{theorem}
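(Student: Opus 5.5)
The plan has three parts: the rate, the prefactor bound, and the probability statement. The rate follows from monotonicity of the support margins under the Loewner order. If $\Sigma_0\succeq(1-\varepsilon)\tSigma_0$ and $\tSigma_0\mp a J\succeq0$ with $a=a_\pm(\tSigma_0)$, then
\[
\Sigma_0\mp(1-\varepsilon)aJ\succeq(1-\varepsilon)\bigl(\tSigma_0\mp aJ\bigr)\succeq0 .
\]
By Definition~\ref{def:support-gap} this gives $a_\pm(\Sigma_0)\ge(1-\varepsilon)a_\pm(\tSigma_0)$, and hence $\gap{\Sigma_0}\ge(1-\varepsilon)\gap{\tSigma_0}>0$. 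Theorem~\ref{thm:gf} then applies to the empirical covariance $\Sigma_0=W_0W_0^\top/n$. By Proposition~\ref{prop:covariance-closure-laws}, the factor flow \eqref{eq:mean-field-gradient-flow} realizes exactly this Bures solution. So we get global existence, convergence to an optimal predictor (via Proposition~\ref{prop:gf-lift-tail} for the factors), and \eqref{eq:sample-rate} with $\gap{\Sigma_0}$ replaced by the smaller quantity $(1-\varepsilon)\gap{\tSigma_0}$. Only the lower Loewner inequality is used in this step.

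For the prefactor \eqref{eq:sample-prefactor}, I would apply the perturbation bound \eqref{eq:perturb} of Lemma~\ref{lem:pl} with $S=\tSvu_0$ and $H=\Svu_0-\tSvu_0$. This reduces the claim to
\[
\norm{H}_{\F}\le\varepsilon\widetilde B_0\sqrt{d_\wedge}.
\]
Since $H$ is $d_v\times d_u$, its rank is at most $d_\wedge$, so $\norm H_{\F}\le\sqrt{d_\wedge}\norm H_{\op}$. It then suffices to show $\norm{H}_{\op}\le\varepsilon\widetilde B_0$. Set $D:=\Sigma_0-\tSigma_0$. The two-sided bound gives $-\varepsilon\tSigma_0\preceq D\preceq\varepsilon\tSigma_0$, so $\norm{D}_{\op}\le\varepsilon\widetilde B_0$. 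The off-diagonal block of a symmetric matrix has operator norm at most that of the whole matrix, because $x^\top D_{vu}y=\tfrac12\binom{y}{x}^\top D\binom{y}{x}-\tfrac12\binom{y}{-x}^\top D\binom{y}{-x}$ for unit vectors $x,y$. Hence $\norm{D_{vu}}_{\op}\le\varepsilon\widetilde B_0$, which is what we need.

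For the probability statement, write $\Sigma_0=\tSigma_0^{1/2}\widehat C\,\tSigma_0^{1/2}$ with $\widehat C:=n^{-1}\sum_i\xi_i\xi_i^\top$. Let $P$ be the orthogonal projector onto $\operatorname{range}\tSigma_0$, which has dimension $r_0$. Only $P\widehat CP$ matters, and the vectors $P\xi_i$ are isotropic subgaussian in an $r_0$-dimensional space with the same constant $K$. The standard covariance estimation bound \citep{Vershynin2018} gives $\norm{P\widehat CP-P}_{\op}\le\varepsilon$ with probability at least $1-\delta$ once $n\ge CK^4\varepsilon^{-2}(r_0+\log(2/\delta))$. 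This implies $(1-\varepsilon)P\preceq P\widehat CP\preceq(1+\varepsilon)P$, and conjugating by $\tSigma_0^{1/2}$ yields \eqref{eq:relative}.

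The main obstacle is getting the dependence on $r_0$ rather than $m$, and tracking the $K^4$ scaling correctly. The cleanest route is to restrict to the range of $\tSigma_0$ and apply a net argument in dimension $r_0$. For $\varepsilon<1$ the resulting bound $\max(\sqrt{\cdot},\cdot)$ is dominated by the square-root term, which produces the $\varepsilon^{-2}$ factor.
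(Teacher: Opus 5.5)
Your proposal is correct and follows the paper's proof essentially step for step: the lower Loewner bound transfers both supporting inequalities to give margin sum $(1-\varepsilon)\gap{\tSigma_0}$; the prefactor comes from \eqref{eq:perturb} with $\norm{\Svu_0-\tSvu_0}_{\F}\le\sqrt{d_\wedge}\,\varepsilon\widetilde B_0$; and the probability bound applies the subgaussian covariance estimate on the $r_0$-dimensional range of $\tSigma_0$, where your projector $P$ plays the role of the paper's orthonormal basis $T$. One small slip: with the unnormalized vectors $\binom{y}{\pm x}$ your polarization identity needs the coefficient $\tfrac14$ rather than $\tfrac12$ (or simply note that $D_{vu}$ is a compression of $D$ by coordinate inclusions), and this still yields $\norm{D_{vu}}_{\op}\le\norm{D}_{\op}$.
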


\begin{proof}[Proof of Theorem~\ref{thm:relative}]
From $\tSigma_0-\aplus{\tSigma_0}J\succeq0$ and the lower relative inequality,
$\Sigma_0-(1-\varepsilon)\aplus{\tSigma_0}J\succeq0$, and similarly for
$\aminus{\tSigma_0}$. Transporting these inequalities along the empirical
fundamental matrix allows us to apply the proof of Theorem~\ref{thm:gf} with margin
sum $(1-\varepsilon)\gap{\tSigma_0}$ and initial empirical loss gap. The two-sided relative
comparison gives
$\norm{\Sigma_0-\tSigma_0}_{\op}\le\varepsilon \widetilde B_0$ and hence
$\norm{\Svu_0-\tSvu_0}_{\F}\le\sqrt{d_\wedge}\varepsilon \widetilde B_0$.
Equation~\eqref{eq:sample-prefactor} follows from \eqref{eq:perturb}.
For the probability bound, choose an orthonormal basis
$T\in\R^{m\times r_0}$ of the range of $\tSigma_0$ and write
$\tSigma_0=T\Lambda T^\top$, where $\Lambda\succ0$.
The vectors $\eta_i:=T^\top\xi_i\in\R^{r_0}$ have identity second
moment and subgaussian norm at most $K$, and
$w_i=T\Lambda^{1/2}\eta_i$.
The covariance tail bound~\citep[Exercise 4.7.3]{Vershynin2018}
applied in this range gives
\[
 \norm{n^{-1}\textstyle\sum_i\eta_i\eta_i^\top-\Id_{r_0}}_{\op}
 \le CK^2\left(\sqrt{\frac{r_0+u}{n}}+\frac{r_0+u}{n}\right)
\]
with failure probability at most $2e^{-u}$. Set $u:=\log(2/\delta)$ and
adjust $C$ in \eqref{eq:width}. Multiplication on the left by
$T\Lambda^{1/2}$ and on the right by its transpose gives
\eqref{eq:relative}.
\end{proof}
\begin{proof}[Proof of Corollary~\ref{cor:finite-width}]
Theorem~\ref{thm:relative} gives the relative covariance event and the
loss rate. Its lower bound gives
$\gap{\Sigma_0}\ge(1-\varepsilon)\gap{\tSigma_0}>0$,
so Theorem~\ref{thm:gf} gives global existence and factor convergence.
The two-sided event preserves the kernel of $\tSigma_0$, so the empirical
initial covariance is positive definite exactly when $\tSigma_0$ is.
\end{proof}

\paragraph{Small width under full-rank isotropic initialization}
The rank improvement in Remark~\ref{rem:rank-adaptive-width} concerns
the chosen population law. A positive population support gap alone does not
guarantee a positive empirical support gap at small width when
$\tSigma_0=\sigma^2\Id_m$, $\sigma>0$.
\begin{example}[Isotropic Gaussian sampling can lose the support gap]
\label{ex:white-small-width}
Let $d_u=D$, $d_v=1$, and sample from $N(0,\tSigma_0)$ with
$\tSigma_0=\sigma^2\Id_{D+1}$, $\sigma>0$.
There is a universal $c>0$ such that, if
$1\le n\le D/16$, then
$\mathbb P\{\gap{\Sigma_0}=0\}\ge1-4e^{-cD}$,
although $\gap{\tSigma_0}=2\sigma^2$.
Thus a width bound depending only on $\min(d_u,d_v)$ cannot uniformly
preserve a positive support gap when sampling from $N(0,\sigma^2\Id_m)$. Convergence of
these trajectories is not decided by this example.
\end{example}
\begin{proof}
Write $U_0=\sigma X$ and $V_0=\sigma y^\top$, where
$X\in\R^{D\times n}$ and $y\in\R^n$ have independent standard
Gaussian entries. Gaussian singular-value and norm concentration
\citep[Section~7.3]{Vershynin2018}, with deviation $\sqrt D/8$, gives
\[
 \sigma_{\min}(X)\ge\sqrt D-\sqrt n-\sqrt D/8\ge5\sqrt D/8,
 \qquad \norm y\le\sqrt n+\sqrt D/8\le3\sqrt D/8
\]
with probability at least $1-4e^{-cD}$, for a universal $c>0$.
Hence $X^\top X-yy^\top\succeq(D/4)\Id_n$.
The nonzero spectrum of $J\Sigma_0$ is the spectrum of
$\sigma^2(X^\top X-yy^\top)/n$, so it has $n$ positive eigenvalues
and $D+1-n\ge2$ zero eigenvalues. In \eqref{eq:gap-spectrum},
$d_v=1$ therefore gives $\gap{\Sigma_0}=\lambda_2-\lambda_1=0$.
\end{proof}

\paragraph{A separate initial-accuracy budget}
A relative covariance bound preserves the rate; an absolute bound also
controls the initial loss. In the sampling setting of
Corollary~\ref{cor:finite-width}, write
$\widetilde B_0:=\norm{\tSigma_0}_{\op}$, $\widetilde\Delta_0:=\ell(\tSvu_0)-\ell_\star$, and
$d_\wedge:=\min(d_u,d_v)$.
Besides $\varepsilon,\delta\in(0,1)$, choose an absolute tolerance
$\zeta>0$ and set
\begin{equation}\label{eq:sampling-budgets}
 \beta:=\min\{\varepsilon,\zeta/\widetilde B_0\},\qquad
 \overline\Delta:=
 \left(\sqrt{\widetilde\Delta_0}+\zeta\sqrt{Ld_\wedge/2}\right)^2.
\end{equation}

\begin{theorem}[High-probability finite-width convergence]
\label{thm:sampling-budgets}
Under Assumption~\ref{ass:loss}, let $\tSigma_0\succeq0$ with
$\gap{\tSigma_0}>0$ and use the sampling law and tolerances above.
There is a universal $C>0$ such that, if
\begin{equation}\label{eq:budget-width}
 n\ge CK^4\beta^{-2}\bigl(r_0+\log(2/\delta)\bigr),
\end{equation}
then with probability at least $1-\delta$ the empirical covariance has
positive support gap, the factor flow is global, and
\[
 \Delta_t\le\Delta_0e^{-2\kappa(1-\varepsilon)\gap{\tSigma_0}t}
 \le\overline\Delta e^{-2\kappa(1-\varepsilon)\gap{\tSigma_0}t}.
\]
The factors converge, their limiting predictor belongs to $\cS_\star$, and
$\dist(\Svu_t,\cS_\star)^2\le2\Delta_t/\kappa$.
\end{theorem}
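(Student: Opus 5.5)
The argument runs Theorem~\ref{thm:relative} twice: once with the relative tolerance $\beta$ in place of $\varepsilon$, to get the covariance event and the loss bound, and once more to control the prefactor by $\overline\Delta$. First apply the covariance tail bound from the proof of Theorem~\ref{thm:relative} with $\varepsilon$ replaced by $\beta$. The width condition \eqref{eq:budget-width} is exactly \eqref{eq:width} with $\beta$ in place of $\varepsilon$. Hence, with probability at least $1-\delta$,
\[
(1-\beta)\tSigma_0\preceq\Sigma_0\preceq(1+\beta)\tSigma_0 .
\]
Since $\beta\le\varepsilon$, this implies the weaker two-sided event \eqref{eq:relative} with tolerance $\varepsilon$.

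On this event we get the rate and its consequences. The lower inequality transports the population margins: $\Sigma_0\mp(1-\varepsilon)a_\pm(\tSigma_0)J\succeq0$, so $\gap{\Sigma_0}\ge(1-\varepsilon)\gap{\tSigma_0}>0$. Theorem~\ref{thm:gf} then gives global existence, convergence of the factors, and $\Svu_\infty\in\cS_\star$. It also gives the first inequality $\Delta_t\le\Delta_0e^{-2\kappa(1-\varepsilon)\gap{\tSigma_0}t}$. The distance bound $\dist(\Svu_t,\cS_\star)^2\le2\Delta_t/\kappa$ is the quadratic-growth consequence in Lemma~\ref{lem:pl}.

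It remains to show $\Delta_0\le\overline\Delta$, and this is the only new step. Use the $\beta$-event in absolute form. From $-\beta\tSigma_0\preceq\Sigma_0-\tSigma_0\preceq\beta\tSigma_0$ we get
\[
\norm{\Sigma_0-\tSigma_0}_{\op}\le\beta\widetilde B_0\le\zeta .
\]
The predictor difference $\Svu_0-\tSvu_0$ is an off-diagonal block of this matrix. Its operator norm is therefore at most $\zeta$, and its rank is at most $d_\wedge$, so $\norm{\Svu_0-\tSvu_0}_{\F}\le\sqrt{d_\wedge}\,\zeta$. The perturbation bound \eqref{eq:perturb} with $H=\Svu_0-\tSvu_0$ then gives
\[
\sqrt{\Delta_0}\le\sqrt{\widetilde\Delta_0}+\zeta\sqrt{Ld_\wedge/2},
\]
which is $\Delta_0\le\overline\Delta$.

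The main point to check is that one event serves both purposes. The $\beta$-event must give the rate with tolerance $\varepsilon$ (because $\beta\le\varepsilon$) and the absolute accuracy $\zeta$ (because $\beta\le\zeta/\widetilde B_0$). Both follow directly from the definition of $\beta$, so no union bound is needed and the failure probability stays $\delta$.
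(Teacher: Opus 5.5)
Your proposal is correct and follows the paper's proof essentially step for step. A single concentration event at tolerance $\beta$ gives both the $\varepsilon$-relative bounds (hence the transported margins, $\gap{\Sigma_0}\ge(1-\varepsilon)\gap{\tSigma_0}$, and Theorem~\ref{thm:gf}) and the absolute bound $\norm{\Sigma_0-\tSigma_0}_{\op}\le\beta\widetilde B_0\le\zeta$; with the rank-$d_\wedge$ Frobenius estimate and \eqref{eq:perturb}, that absolute bound yields $\Delta_0\le\overline\Delta$, and the distance bound comes from Lemma~\ref{lem:pl}.
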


\begin{proof}[Proof of Theorem~\ref{thm:sampling-budgets}]
Apply the relative concentration event with tolerance $\beta$.
It implies the relative bounds with tolerance $\varepsilon$ and
$\norm{\Sigma_0-\tSigma_0}_{\op}\le\beta \widetilde B_0\le\zeta$.
The two initial supporting inequalities give the margin bounds, and
Lemma~\ref{lem:pl} gives the loss gap bound. More precisely, putting
$\underline g:=(1-\varepsilon)\gap{\tSigma_0}$ and $\overline B_0:=(1+\varepsilon)\widetilde B_0$,
the same event gives
\begin{equation}\label{eq:sampling-event}
 a_\pm(\Sigma_0)\ge(1-\varepsilon)a_\pm(\tSigma_0),\quad
 \norm{\Sigma_0}_{\op}\le\overline B_0,\quad
 \Delta_0\le\overline\Delta.
\end{equation}
Theorem~\ref{thm:gf} then applies
to the empirical covariance. The distance estimate follows from
Lemma~\ref{lem:pl}. More generally, any valid quadratic-growth constant
$\lambda$ gives $\dist(\Svu_t,\cS_\star)^2\le2\Delta_t/\lambda$.
\end{proof}
\begin{proposition}[The Gaussian initial-error cost]\label{prop:gaussian}
For independent $u_i\sim N(0,\sigma^2\Id_{d_u})$ and
$v_i\sim N(0,\sigma^2\Id_{d_v})$, and the quadratic loss
$\ell(S):=\norm{S-\Svu_\star}_{\F}^2/2$, one has exactly
\begin{equation}\label{eq:gaussian-error}
 \mathbb E\Delta_0
 =\tfrac12\norm{\Svu_\star}_{\F}^2+\frac{\sigma^4d_ud_v}{2n}.
\end{equation}
\end{proposition}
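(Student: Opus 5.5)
The plan is to expand the loss gap explicitly, since it is a polynomial of degree four in the Gaussian entries, and compute its expectation term by term using independence and Gaussian moment identities. Write $\Svu_0=VU^\top/n=\frac1n\sum_{i=1}^n v_iu_i^\top$. Since $\ell(S)=\norm{S-\Svu_\star}_{\F}^2/2$, we have $\ell_\star=0$, attained at $S=\Svu_\star$. Hence
\[
 \Delta_0=\tfrac12\norm{\Svu_0}_{\F}^2-\ip{\Svu_0}{\Svu_\star}+\tfrac12\norm{\Svu_\star}_{\F}^2 .
\]

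First, the cross term vanishes in expectation. Each $v_iu_i^\top$ has mean $\mathbb E[v_i]\,\mathbb E[u_i]^\top=0$, because $u_i$ and $v_i$ are independent and centered. Therefore $\mathbb E\ip{\Svu_0}{\Svu_\star}=0$.

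Second, compute $\mathbb E\norm{\Svu_0}_{\F}^2$. Expand
\[
 \norm{\Svu_0}_{\F}^2=\frac1{n^2}\sum_{i,j}\tr\!\bigl(u_iv_i^\top v_ju_j^\top\bigr)=\frac1{n^2}\sum_{i,j}(v_i^\top v_j)(u_i^\top u_j).
\]
For $i\ne j$, the factor $u_i^\top u_j$ has mean zero and is independent of the $v$'s, so these off-diagonal terms vanish. For $i=j$, independence of $u_i$ and $v_i$ gives
\[
 \mathbb E\norm{v_i}^2\,\mathbb E\norm{u_i}^2=(\sigma^2d_v)(\sigma^2d_u).
\]
Summing the $n$ diagonal terms yields $\mathbb E\norm{\Svu_0}_{\F}^2=\sigma^4d_ud_v/n$. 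Combining the three pieces gives \eqref{eq:gaussian-error}, with the stated $1/2$ factors.

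No step is a real obstacle; this is an exact second-moment computation. The only point needing care is the independence bookkeeping for the off-diagonal terms. The pairs $(u_i,v_i)$ are independent across $i$, and within a pair $u_i$ and $v_i$ are independent. So each term $(v_i^\top v_j)(u_i^\top u_j)$ with $i\ne j$ factors into a $v$-expectation times $\mathbb E[u_i^\top u_j]=0$. Gaussianity is not actually needed beyond second moments.
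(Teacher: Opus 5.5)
Your proposal is correct and matches the paper's argument. The paper likewise drops the teacher cross term by zero mean and obtains $\mathbb E\norm{\Svu_0}_{\F}^2=\sigma^4d_ud_v/n$, but by summing the $d_ud_v$ entrywise variances $\sigma^4/n$ rather than through your Gram expansion $\sum_{i,j}(v_i^\top v_j)(u_i^\top u_j)$; this is the same second-moment computation.
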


For balanced dimensions and fixed $\sigma$, width $n$ proportional to $m$
therefore gives an expected initial sampling error of order $m$, despite
the dimension-free exponent $4(1-\varepsilon)\sigma^2\kappa$ at any
fixed relative tolerance $\varepsilon\in(0,1)$. Keeping that error bounded
in expectation requires $n$ of order $d_ud_v$, when the teacher norm is
fixed. Thus width proportional to $m$ can preserve the convergence rate
while allowing the expected initial error to grow with dimension.

\begin{proof}[Proof of Proposition~\ref{prop:gaussian}]
Each entry of $n^{-1}\sum_i v_i u_i^\top$ has mean zero and variance
$\sigma^4/n$. Sum these variances; the cross term with the teacher has
zero expectation.
\end{proof}

\subsection{Discrete-time convergence}\label{app:gd}
The factor update transports the initial supporting inequalities even
though it no longer preserves $J$ exactly. To track this error, define
\begin{equation}\label{eq:gd-signature}
\Phi_0:=\Id,\quad\Phi_{k+1}:=M_k\Phi_k,\quad
\mathcal J_k:=\Phi_kJ\Phi_k^\top,\quad\mathcal D_k:=\mathcal J_k-J.
\end{equation}
The cross block of \eqref{eq:gd-covariance} also gives
\begin{equation}\label{eq:gd-predictor-update}
 \Svu_{k+1}-\Svu_k
 =-\tau(E_k\Suu_k+\Svv_kE_k)+\tau^2E_k(\Svu_k)^\top E_k.
\end{equation}
The next two lemmas control the accumulated defect $\mathcal D_k$ and
the quadratic correction in the predictor update.

\begin{lemma}[Transported support and accumulated defect]\label{lem:gd-support}
If $\Sigma_0\succeq0$, with initial margins $\aplus{\Sigma_0},\aminus{\Sigma_0}$ as in
\eqref{eq:margins}, then every finite iterate satisfies
\[
 \Sigma_k-\aplus{\Sigma_0}\mathcal J_k\succeq0,\qquad
 \Sigma_k+\aminus{\Sigma_0}\mathcal J_k\succeq0.
\]
If $\norm{\mathcal D_k}_{\op}\le\eta$ for some $\eta\in[0,1)$, then
$\Suu_k\succeq(1-\eta)\aplus{\Sigma_0}\Id$ and
$\Svv_k\succeq(1-\eta)\aminus{\Sigma_0}\Id$. With $e_k:=\norm{E_k}_{\F}$,
\begin{equation}\label{eq:gd-defect-recursion}
 \mathcal D_{k+1}=M_k\mathcal D_kM_k+\tau^2G_kJG_k,\qquad
 \norm{\mathcal D_N}_{\op}
 \le\tau\exp\left(2\sum_{j<N}\tau e_j\right)
             \sum_{j<N}\tau e_j^2.
\end{equation}
\end{lemma}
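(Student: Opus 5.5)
The plan has three parts: transport the supporting inequalities by congruence, read off the hidden-block bounds from principal blocks, and control the defect through its recursion.

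\emph{Transported supporting inequalities.} Since $\Sigma_{k+1}=M_k\Sigma_kM_k$ with $M_k$ symmetric, induction gives $\Sigma_k=\Phi_k\Sigma_0\Phi_k^\top$. Then
\[
\Sigma_k\mp a_\pm(\Sigma_0)\mathcal J_k=\Phi_k\bigl(\Sigma_0\mp a_\pm(\Sigma_0)J\bigr)\Phi_k^\top\succeq0 .
\]
This holds because congruence preserves positive semidefiniteness, and invertibility of $\Phi_k$ is not needed.

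\emph{Hidden-block bounds.} Write $\mathcal J_k=J+\mathcal D_k$. The upper-left block of $\mathcal J_k$ is $\Id_{d_u}+(\mathcal D_k)_{uu}$, and $(\mathcal D_k)_{uu}\succeq-\eta\Id$. So the upper-left block of the first inequality gives $\Suu_k\succeq a_+(\Sigma_0)(1-\eta)\Id$. For the second inequality, the lower-right block of $\mathcal J_k$ is $-\Id+(\mathcal D_k)_{vv}\preceq-(1-\eta)\Id$. Hence $\Svv_k\succeq-a_-(\Sigma_0)(\mathcal J_k)_{vv}\succeq(1-\eta)a_-(\Sigma_0)\Id$.

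\emph{Defect recursion.} We have $\mathcal J_{k+1}=M_k\mathcal J_kM_k$ and $\mathcal D_{k+1}=M_k\mathcal D_kM_k+(M_kJM_k-J)$. Expanding with $M_k=\Id-\tau G_k$ and using $G_kJ+JG_k=0$ gives
\[
M_kJM_k=J-\tau(G_kJ+JG_k)+\tau^2G_kJG_k=J+\tau^2G_kJG_k .
\]
This yields the stated recursion.

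\emph{Norm bound on the defect.} Use $\norm{M_k}_{\op}\le1+\tau\norm{G_k}_{\op}\le e^{\tau e_k}$, since $\norm{G_k}_{\op}=\norm{E_k}_{\op}\le e_k$. Also $\norm{G_kJG_k}_{\op}\le e_k^2$. Unrolling from $\mathcal D_0=0$ gives
\[
\mathcal D_N=\sum_{j<N}\tau^2\,(M_{N-1}\cdots M_{j+1})\,G_jJG_j\,(M_{j+1}\cdots M_{N-1}).
\]
Each term therefore has norm at most $\tau^2e_j^2\exp\bigl(2\sum_{j<i<N}\tau e_i\bigr)$. Bounding the partial exponent by the full sum $2\sum_{j<N}\tau e_j$ and factoring out $\tau$ gives the claimed estimate.

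The only step that needs care is the bookkeeping of the products in the unrolled recursion; the rest is elementary.
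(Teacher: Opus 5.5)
Your proposal is correct and follows the paper's proof essentially step for step. You transport the initial supporting inequalities by congruence with $\Phi_k$ (without needing invertibility), compare principal blocks using $\mathcal J_k=J+\mathcal D_k$, derive $M_kJM_k=J+\tau^2G_kJG_k$ from $G_kJ+JG_k=0$, and unroll with the amplification bound $\norm{M_k}_{\op}\le e^{\tau e_k}$. The only cosmetic difference is that you unroll the matrix recursion into an explicit sum of conjugated terms, whereas the paper unrolls the scalar inequality $\norm{\mathcal D_{k+1}}_{\op}\le(1+\tau e_k)^2\norm{\mathcal D_k}_{\op}+\tau^2e_k^2$; both give the same bound.
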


\begin{proof}[Proof of Lemma~\ref{lem:gd-support}]
The proof of Proposition~\ref{lem:margins} gives the initial supporting
inequalities. Congruence by $\Phi_k$ transports them, regardless of
invertibility. The $uu$ and $vv$ blocks of $\mathcal J_k$ are respectively
at least $(1-\eta)\Id$ and at most $-(1-\eta)\Id$, proving the hidden
lower bounds. Since $G_kJ+JG_k=0$, one has
$M_kJM_k=J+\tau^2G_kJG_k$, giving the recursion. Its norm is at most
$(1+\tau e_k)^2\norm{\mathcal D_k}_{\op}+\tau^2e_k^2$.
Unroll from zero and bound the amplification products by
$\exp(2\sum_{j<N}\tau e_j)$.
\end{proof}
\paragraph{Descent and length}
The bootstrap needs two estimates: loss descent and a finite total
gradient length. The next lemma obtains both from the same step condition.
For the proof, abbreviate the descent scale, initial gradient bound, and
covariance bound as
\begin{equation}\label{eq:gd-constants}
 \alpha_\star:=\frac{\gap{\Sigma_0}}{2\sqrt2},\qquad
 M_{\nabla}:=\sqrt{2L\Delta_0},\qquad
 B_\star:=\norm{\Sigma_0}_{\op}e^{2I_\star}.
\end{equation}
Then $I_\star=\sqrt{2\Delta_0/\kappa}/\alpha_\star$ and
$q=1-2\alpha_\star\kappa\tau$. Condition~\eqref{eq:gd-step} is
equivalent to requiring both
$\tau\le\alpha_\star/(B_\star M_{\nabla}+\frac92LB_\star^2)$ and
$\tau\le(1-1/\sqrt2)\alpha_\star/(e^{2I_\star}\Delta_0)$.
\begin{lemma}[Descent with unequal hidden lower bounds]\label{lem:gd-descent}
Suppose $\norm{\Sigma_k}_{\op}\le B$, $\Delta_k\le\Delta_0$,
$\Suu_k\succeq b_+\Id$, and $\Svv_k\succeq b_-\Id$.
Write $h:=b_++b_->c>0$. If $\tau M_{\nabla}\le1$ and
$\tau(BM_{\nabla}+\frac92LB^2)\le h-c$, then
\begin{equation}\label{eq:gd-descent}
 \Delta_{k+1}\le\Delta_k-c\tau e_k^2
 \le(1-2c\kappa\tau)\Delta_k.
\end{equation}
If this descent holds for $j=r,\ldots,N-1$, then
\begin{equation}\label{eq:gd-length}
 \sum_{j=r}^{N-1}\tau e_j^2\le\Delta_r/c,\qquad
 \sum_{j=r}^{N-1}\tau e_j
 \le\frac1c\sqrt{\frac2\kappa}
            (\sqrt{\Delta_r}-\sqrt{\Delta_N}).
\end{equation}
\end{lemma}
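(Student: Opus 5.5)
The plan is to Taylor-expand the loss along the exact predictor increment \eqref{eq:gd-predictor-update}, using $L$-smoothness. Write $\Svu_{k+1}-\Svu_k=-\tau A_k+\tau^2 R_k$ with $A_k:=E_k\Suu_k+\Svv_kE_k$ and $R_k:=E_k(\Svu_k)^\top E_k$. The descent lemma gives
\[
 \ell(\Svu_{k+1})\le\ell(\Svu_k)-\tau\ip{E_k}{A_k}+\tau^2\ip{E_k}{R_k}
 +\tfrac L2\norm{\tau A_k-\tau^2R_k}_{\F}^2 .
\]

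The first-order term is handled exactly as in \eqref{eq:flow-dissipation}. We have $\ip{E_k}{A_k}=\tr(E_k^\top E_k\Suu_k)+\tr(E_k^\top\Svv_kE_k)\ge h\,e_k^2$ by the hidden lower bounds $b_\pm$.

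The remaining terms are error terms, each to be bounded by a multiple of $\tau^2e_k^2$:
\begin{itemize}
 \item For the quadratic correction, $\norm{\Svu_k}_{\op}\le B$ and $\norm{E_k}_{\op}\le\norm{E_k}_{\F}$. We bound $\norm{E_k}_{\F}\le M_{\nabla}$ via Lemma~\ref{lem:pl} and $\Delta_k\le\Delta_0$. This gives $|\ip{E_k}{R_k}|\le\norm{E_k}_{\op}\,\norm{\Svu_k}_{\op}\,e_k^2\le BM_{\nabla}e_k^2$.
 \item For the smoothness term, $\norm{A_k}_{\F}\le 2Be_k$. Also $\tau\norm{R_k}_{\F}\le\tau M_{\nabla}Be_k\le Be_k$, using $\tau M_{\nabla}\le1$. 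Hence $\norm{\tau A_k-\tau^2R_k}_{\F}\le 3B\tau e_k$, and the term is at most $\tfrac92LB^2\tau^2e_k^2$.
\end{itemize}

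Collecting these bounds gives
\[
 \Delta_{k+1}\le\Delta_k-\tau e_k^2\bigl(h-\tau(BM_{\nabla}+\tfrac92LB^2)\bigr)\le\Delta_k-c\tau e_k^2 .
\]
The PL inequality $e_k^2\ge2\kappa\Delta_k$ then yields the second inequality in \eqref{eq:gd-descent}. The constants $\tfrac92$ and the product $BM_\nabla$ are fixed by these two estimates. The one delicate point is the bound on $\norm{A_k}_{\F}$, which must go through operator norms of $\Suu_k,\Svv_k$ (each at most $B$) times $\norm{E_k}_{\F}$.

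For the length bounds \eqref{eq:gd-length}, sum the descent inequality $c\tau e_j^2\le\Delta_j-\Delta_{j+1}$ telescopically; this gives the first bound. For the second, use a discrete analogue of the $\sqrt{\Delta}$ argument. When $\Delta_j>0$,
\[
 \sqrt{\Delta_j}-\sqrt{\Delta_{j+1}}=\frac{\Delta_j-\Delta_{j+1}}{\sqrt{\Delta_j}+\sqrt{\Delta_{j+1}}}\ge\frac{c\tau e_j^2}{2\sqrt{\Delta_j}} .
\]
PL gives $\sqrt{\Delta_j}\le e_j/\sqrt{2\kappa}$, so the right side is at least $c\tau e_j\sqrt{\kappa/2}$. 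If instead $\Delta_j=0$, then $e_j=0$ by Lemma~\ref{lem:pl} and the inequality is trivial. Summing from $r$ to $N-1$ and dividing by $c\sqrt{\kappa/2}$ gives the stated bound $\tfrac1c\sqrt{2/\kappa}\,(\sqrt{\Delta_r}-\sqrt{\Delta_N})$.

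I expect no real obstacle beyond bookkeeping; the main care is using monotonicity $\Delta_k\le\Delta_0$ to control the gradient uniformly.
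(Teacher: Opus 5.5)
Your proposal is correct and follows the paper's proof essentially step for step. You apply the descent lemma to the exact predictor increment $Z$, with the two estimates $\norm{Z}_{\F}\le3\tau Be_k$ (using $\tau M_{\nabla}\le1$ and $e_k\le M_{\nabla}$) and $\ip{E_k}{Z}\le-\tau he_k^2+\tau^2BM_{\nabla}e_k^2$. The length bounds then follow from telescoping and the same discrete $\sqrt{\Delta}$ argument under PL, with the zero-gap case handled by $e_j=0$.
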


\begin{proof}[Proof of Lemma~\ref{lem:gd-descent}]
Let $Z:=\Svu_{k+1}-\Svu_k$. By \eqref{eq:gd-covariance} and smoothness,
$\norm Z_{\F}\le3\tau Be_k$ and
$\ip{E_k}{Z}\le-\tau h e_k^2+\tau^2BM_{\nabla}e_k^2$.
The descent lemma therefore gives the first assertion. Telescope it for
the squared-gradient sum. At positive loss gap, PL gives
\[
 \sqrt{\Delta_j}-\sqrt{\Delta_{j+1}}
 \ge\frac{c\tau e_j^2}{2\sqrt{\Delta_j}}
 \ge c\tau\sqrt{\kappa/2}\,e_j.
\]
At zero loss gap $e_j=0$. Summation proves the length bound.
\end{proof}

The proof below also establishes the following explicit tails in
Theorem~\ref{thm:gd}:
\begin{equation}\label{eq:gd-tails}
 \frac{\norm{W_k-W_\infty}_{\F}}{\sqrt n}
 \le\sqrt{2B_\star}\,I_\star q^{k/2},\qquad
 \norm{\Sigma_k-\Sigma_\infty}_{\op}
 \le2B_\star I_\star q^{k/2}.
\end{equation}

\begin{proof}[Proof of Theorem~\ref{thm:gd}]
Let $\eta:=1-1/\sqrt2$. Since $\gap{\Sigma_0}\le2B_0$ for
$B_0:=\norm{\Sigma_0}_{\op}$, one has
$\alpha_\star\le B_0/\sqrt2\le B_\star/\sqrt2$.
The first restriction in \eqref{eq:gd-step} implies
\[
 \tau M_{\nabla}<1/\sqrt2,\qquad
 0<2\alpha_\star\kappa\tau
 \le\frac{4\kappa\alpha_\star^2}{9LB_\star^2}\le\frac29.
\]
In particular $q\in[7/9,1)$.
Inductively suppose, through iterate $N$, that
$\Delta_k\le q^k\Delta_0$,
$\norm{\Sigma_k}_{\op}\le B_\star$, and
$\norm{\mathcal D_k}_{\op}\le\eta$, with descent proved for $k<N$.
These assertions start at $N=0$ since $\mathcal D_0=0$.
At $N$, Lemma~\ref{lem:gd-support} supplies hidden lower bounds with sum
$(1-\eta)\gap{\Sigma_0}=2\alpha_\star$. Lemma~\ref{lem:gd-descent} applies with
$c=\alpha_\star$, $h=2\alpha_\star$, and $B=B_\star$, proving the next
loss-gap estimate. Its finite-horizon sums now give
\[
 \sum_{j=0}^{N}\tau e_j\le I_\star,\qquad
 \sum_{j=0}^{N}\tau e_j^2\le\Delta_0/\alpha_\star.
\]
Thus $\norm{\Phi_{N+1}}_{\op}\le e^{I_\star}$ implies the next covariance
upper bound, and \eqref{eq:gd-defect-recursion} gives
$\norm{\mathcal D_{N+1}}_{\op}
\le\tau e^{2I_\star}\Delta_0/\alpha_\star\le\eta$.
This closes the induction.

Taking tails in \eqref{eq:gd-length} yields
$\sum_{j\ge k}\tau e_j\le I_\star q^{k/2}$.
Factor increments have norm
\[
 \norm{W_{j+1}-W_j}_{\F}/\sqrt n
=\tau\sqrt{\tr(G_j\Sigma_jG_j)}
\le\tau\sqrt{2B_\star}\,e_j.
\]
This proves factor convergence and its tail estimate. Similarly
$\norm{\Phi_k-\Phi_\infty}_{\op}\le e^{I_\star}I_\star q^{k/2}$
gives the covariance tail bound by subtracting the two congruences.
The limit predictor is optimal by continuity and the loss-gap bound.
Finally, if $r:=\tau M_{\nabla}<1$, every factor update is invertible and
\[
 \sigma_{\min}(\Phi_k)\ge\exp\left(-\frac{I_\star}{1-r}\right),\qquad
 \lambda_{\min}^{+}(\Sigma_k)\ge\lambda_{\min}^{+}(\Sigma_0)
 \exp\left(-\frac{2I_\star}{1-r}\right).
\]
Indeed $\sigma_{\min}(M_j)\ge1-\tau e_j>0$ and
$\log(1-x)\ge-x/(1-r)$ for $0\le x\le r$. The bound makes
$\Phi_\infty$ invertible, proving rank preservation also at the limit.
\end{proof}
The factor-increment estimate also applies to population discrete
pushforward updates. Coupling each particle with its limiting position
gives a Wasserstein tail at most $\sqrt{2B_\star}I_\star q^{k/2}$.

\paragraph{Isotropic initialization and local stability}
For an exact isotropic empirical second moment
$\Sigma_0=W_0W_0^\top/n=\sigma^2\Id_m$ and $\Delta_0=\ell(0)-\ell_\star>0$,
$I_\star=2\sqrt{\Delta_0}/(\sigma^2\sqrt\kappa)$ and the explicit
condition is
\[
 \tau\le\min\left\{
 \frac1{2e^{2I_\star}\sqrt{L\Delta_0}
                 +\frac9{\sqrt2}L\sigma^2e^{4I_\star}},\quad
 \frac{(\sqrt2-1)\sigma^2}{2e^{2I_\star}\Delta_0}
 \right\}.
\]
The contraction factor is $1-\sqrt2\kappa\sigma^2\tau$.
Here the empirical covariance is exactly $\sigma^2\Id_m$. For iid Gaussian
initialization, its fluctuations are handled by Corollary~\ref{cor:sampled-gd}.
As $\Delta_0\downarrow0$ with $\sigma,L,\kappa$ fixed, the sufficient
threshold tends to $\sqrt2/(9L\sigma^2)$.

\begin{remark}[Local stability and the global stepsize]\label{rem:gd-local}
Suppose $\ell$ is $C^2$ near a minimizer at zero, and linearize at
$\Sigma=\sigma^2\Id$. The predictor perturbation evolves to first order by
$\delta\Svu\mapsto(\Id-2\tau\sigma^2\nabla^2\ell(0))[\delta\Svu]$.
Thus $0<\tau<1/(L\sigma^2)$ contracts its nonzero-curvature directions;
kernel directions are neutral at this order. Theorem~\ref{thm:gd} additionally controls
covariance growth and the accumulated signature defect along the entire
trajectory, explaining its more restrictive global threshold.
\end{remark}

\paragraph{Adjustable rate budgets}
Smaller steps permit a smaller loss in the conserved conditioning,
bringing the discrete exponent closer to its continuous-time value.
\begin{proposition}[Adjustable rate budgets]\label{prop:gd-adjustable}
Under the loss and initialization assumptions of Theorem~\ref{thm:gd},
choose $0<c<h<\gap{\Sigma_0}$ and set
$I_c:=\sqrt{2\Delta_0/\kappa}/c$, $B_c:=B_0e^{2I_c}$.
The convergence and rank conclusions of Theorem~\ref{thm:gd} hold with
contraction factor $q_c:=1-2c\kappa\tau$, and the tail bounds
\eqref{eq:gd-tails} hold with $(I_\star,B_\star,q)$ replaced by
$(I_c,B_c,q_c)$, if
\[
 0<\tau\le\min\left\{\frac1{2M_{\nabla}},\
 \frac{h-c}{B_cM_{\nabla}+\frac92LB_c^2},\
 \frac{(1-h/\gap{\Sigma_0})c}{e^{2I_c}\Delta_0}\right\}.
\]
\end{proposition}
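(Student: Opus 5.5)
The plan reruns the bootstrap from the proof of Theorem~\ref{thm:gd} with the free parameters $(c,h)$ replacing the fixed choices $c=\alpha_\star$, $h=2\alpha_\star$. Define $\eta:=1-h/\gap{\Sigma_0}\in(0,1)$ as the admissible signature defect. Then $(1-\eta)\gap{\Sigma_0}=h$, and this is what enters Lemma~\ref{lem:gd-descent}.

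The induction hypothesis through iterate $N$ has three parts: $\Delta_k\le q_c^k\Delta_0$, $\norm{\Sigma_k}_{\op}\le B_c$, and $\norm{\mathcal D_k}_{\op}\le\eta$, with descent already established for $k<N$. It holds at $N=0$, since $\mathcal D_0=0$ and $B_c\ge B_0$.

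The inductive step proceeds as follows.
\begin{enumerate}
\item Lemma~\ref{lem:gd-support} gives $\Suu_N\succeq(1-\eta)\aplus{\Sigma_0}\Id$ and $\Svv_N\succeq(1-\eta)\aminus{\Sigma_0}\Id$, whose lower bounds sum to $h$.
\item Lemma~\ref{lem:gd-descent} then applies with $B=B_c$. It needs $\tau M_\nabla\le1$, which follows from $\tau\le 1/(2M_\nabla)$, and $\tau(B_cM_\nabla+\tfrac92LB_c^2)\le h-c$, which is the second restriction. It yields $\Delta_{N+1}\le(1-2c\kappa\tau)\Delta_N$.
\item The length bounds \eqref{eq:gd-length} from $r=0$ give $\sum_{j\le N}\tau e_j\le\sqrt{2\Delta_0/\kappa}/c=I_c$ and $\sum_{j\le N}\tau e_j^2\le\Delta_0/c$.
\item From $\norm{M_j}_{\op}\le1+\tau e_j\le e^{\tau e_j}$ we get $\norm{\Phi_{N+1}}_{\op}\le e^{I_c}$. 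The congruence $\Sigma_{N+1}=\Phi_{N+1}\Sigma_0\Phi_{N+1}^\top$ then gives $\norm{\Sigma_{N+1}}_{\op}\le B_c$.
\item The defect bound \eqref{eq:gd-defect-recursion} gives $\norm{\mathcal D_{N+1}}_{\op}\le\tau e^{2I_c}\Delta_0/c$. This is at most $\eta$ exactly by the third restriction $\tau\le(1-h/\gap{\Sigma_0})c/(e^{2I_c}\Delta_0)$.
\end{enumerate}
This closes the induction. We also check that $q_c\in(0,1)$: from $c<h\le\gap{\Sigma_0}\le2B_0\le 2B_c$ and the second restriction, $2c\kappa\tau\le 2c\kappa h/(\tfrac92LB_c^2)<1$, using $\kappa\le L$.

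The tails follow as in Theorem~\ref{thm:gd}. From $\Delta_k\le q_c^k\Delta_0$ and the length bound started at $r=k$, we get $\sum_{j\ge k}\tau e_j\le I_cq_c^{k/2}$. The factor increments satisfy $\norm{W_{j+1}-W_j}_{\F}/\sqrt n\le\tau\sqrt{2B_c}\,e_j$, since $\tr(G_j\Sigma_jG_j)\le B_c\norm{G_j}_{\F}^2=2B_ce_j^2$. Summing these increments gives the first tail in \eqref{eq:gd-tails} with $(I_c,B_c,q_c)$.

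For the covariance, $\norm{\Phi_k-\Phi_\infty}_{\op}\le e^{I_c}I_cq_c^{k/2}$. Subtracting the two congruences then gives $\norm{\Sigma_k-\Sigma_\infty}_{\op}\le2B_cI_cq_c^{k/2}$. The limit predictor is optimal by continuity of $\ell$.

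For rank preservation, $r:=\tau M_\nabla\le1/2$ makes every $M_j$ invertible, with $\sigma_{\min}(M_j)\ge1-\tau e_j$. Hence $\sigma_{\min}(\Phi_k)\ge e^{-2I_c}$ uniformly in $k$. So $\Phi_\infty$ is invertible and $\rankop\Sigma_\infty=\rankop\Sigma_0$.

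The main obstacle is the circular dependence inside the induction. The defect bound requires the length bound, the length bound requires descent, and descent requires the defect bound. The constant $B_c$ must be defined from the a priori length $I_c$ before any iterate is known, which is why it appears in the step restriction. Care is needed that each estimate at step $N$ uses only hypotheses on iterates $\le N$. A minor point is that the three restrictions on $\tau$ are stated non-strictly, so the defect only reaches $\le\eta<1$, which is enough.
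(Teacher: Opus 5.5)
Your proposal follows the paper's proof: rerun the induction of Theorem~\ref{thm:gd} with $\eta=1-h/g(\Sigma_0)$, apply Lemma~\ref{lem:gd-descent} with $(c,h)$ and $B=B_c$, and redo the length, defect, tail and rank estimates with $I_c,B_c$. There is one slip, in your check that $q_c\in(0,1)$. After replacing $h-c$ by $h$, the bound $ch<4B_c^2$ only yields $2c\kappa\tau<16/9$, not $<1$. Keep $h-c$ and use $c(h-c)\le h^2/4\le g(\Sigma_0)^2/4\le B_0^2\le B_c^2$; this gives $2c\kappa\tau\le 4/9$, which is exactly the paper's positivity guard.
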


Taking $c=(1-\epsilon)\gap{\Sigma_0}$, $h=(1-\epsilon/2)\gap{\Sigma_0}$ for any
$\epsilon\in(0,1)$ certifies exponent $2\kappa(1-\epsilon)\gap{\Sigma_0}$ in time
$k\tau$, at an explicit sufficiently small step. This is arbitrarily
close to the flow exponent.

\begin{proof}[Proof of Proposition~\ref{prop:gd-adjustable}]
Use $\eta=1-h/\gap{\Sigma_0}$ in the preceding induction and
Lemma~\ref{lem:gd-descent} with $c,h$. The positivity guard follows from
$c(h-c)\le \gap{\Sigma_0}^2/4\le B_0^2$: $2c\kappa\tau\le4/9$.
The remaining estimates use $I_c,B_c$.
\end{proof}
\begin{corollary}[A stepsize fixed before sampling]\label{cor:sampled-gd}
Use the population parameters and width condition of
Theorem~\ref{thm:sampling-budgets}, with the bounds in
\eqref{eq:sampling-event}, and define before sampling
\[
 \overline\alpha_\star:=\frac{\underline g}{2\sqrt2},\quad
 \overline M_{\nabla}:=\sqrt{2L\overline\Delta},\quad
 \overline I_\star:=\frac1{\overline\alpha_\star}
                    \sqrt{\frac{2\overline\Delta}{\kappa}},\quad
 \overline B_\star:=\overline B_0e^{2\overline I_\star}.
\]
Choose a deterministic positive stepsize satisfying
\begin{equation}\label{eq:sampled-gd-step}
 \tau\le\min\left\{
 \frac{\overline\alpha_\star}
 {\overline B_\star\overline M_{\nabla}+\frac92L\overline B_\star^2},\quad
 \frac{(1-1/\sqrt2)\overline\alpha_\star}
 {e^{2\overline I_\star}\overline\Delta}\right\}.
\end{equation}
With probability at least $1-\delta$, factor GD converges to an optimal
predictor, and
\[
 \Delta_k
 \le(1-2\overline\alpha_\star\kappa\tau)^k\Delta_0
 \le(1-2\overline\alpha_\star\kappa\tau)^k\overline\Delta.
\]
Its factor/covariance tail estimates hold with the corresponding barred
constants and contraction factor.
\end{corollary}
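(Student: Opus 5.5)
We reduce the statement to Theorem~\ref{thm:gd}, applied to the realized empirical covariance on the high-probability event of Theorem~\ref{thm:sampling-budgets}. The key observation is that every constant entering the step condition \eqref{eq:gd-step} is monotone in the three empirical quantities $\gap{\Sigma_0}$, $\norm{\Sigma_0}_{\op}$, and $\Delta_0$. On that event, \eqref{eq:sampling-event} gives three bounds:
\[
\gap{\Sigma_0}\ge\underline g,\qquad
\norm{\Sigma_0}_{\op}\le\overline B_0,\qquad
\Delta_0\le\overline\Delta.
\]
We would therefore not compare with $\tau_{\rm GD}$ directly. Instead we rerun the induction in the proof of Theorem~\ref{thm:gd} with the barred constants in place of the empirical ones, using Lemma~\ref{lem:gd-descent} with $c=\overline\alpha_\star$ and $h=2\overline\alpha_\star$.

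Fix the event. By Lemma~\ref{lem:gd-support}, $\norm{\mathcal D_k}_{\op}\le\eta:=1-1/\sqrt2$ yields hidden lower bounds with sum $(1-\eta)\gap{\Sigma_0}\ge(1-\eta)\underline g=2\overline\alpha_\star$. We then apply Lemma~\ref{lem:gd-descent} with $B=\overline B_\star$. Its hypothesis $\tau(BM_\nabla+\frac92LB^2)\le h-c$ holds because $M_\nabla=\sqrt{2L\Delta_0}\le\overline M_\nabla$, together with the first branch of \eqref{eq:sampled-gd-step}. The guard $\tau M_\nabla\le1$ follows as in the proof of Theorem~\ref{thm:gd}: the first branch gives $\tau\overline M_\nabla<\overline\alpha_\star/\overline B_\star\le1/\sqrt2$, since $\underline g\le\gap{\Sigma_0}\le2\norm{\Sigma_0}_{\op}\le2\overline B_0$. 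The same bound also gives $2\overline\alpha_\star\kappa\tau\le2/9$, so the contraction factor lies in $(0,1)$.

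The induction hypotheses become $\Delta_k\le(1-2\overline\alpha_\star\kappa\tau)^k\Delta_0$, $\norm{\Sigma_k}_{\op}\le\overline B_\star$, and $\norm{\mathcal D_k}_{\op}\le\eta$. The length bound \eqref{eq:gd-length} gives
\[
\sum_{j\le N}\tau e_j\le\frac1{\overline\alpha_\star}\sqrt{\frac{2\Delta_0}{\kappa}}\le\overline I_\star,
\]
so $\norm{\Phi_{N+1}}_{\op}\le e^{\overline I_\star}$ and hence $\norm{\Sigma_{N+1}}_{\op}\le\norm{\Sigma_0}_{\op}e^{2\overline I_\star}\le\overline B_\star$. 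The defect recursion \eqref{eq:gd-defect-recursion} gives
\[
\norm{\mathcal D_{N+1}}_{\op}\le\tau e^{2\overline I_\star}\Delta_0/\overline\alpha_\star\le\tau e^{2\overline I_\star}\overline\Delta/\overline\alpha_\star\le\eta,
\]
where the last inequality is the second branch of \eqref{eq:sampled-gd-step}. This closes the induction. The loss bound then follows, and the second inequality in the statement is $\Delta_0\le\overline\Delta$.

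The tails, factor convergence, optimality of the limit, and rank preservation follow verbatim from the end of the proof of Theorem~\ref{thm:gd}, with $(I_\star,B_\star,q)$ replaced by the barred constants. Since the step size is deterministic and all the above holds on the event, the probability is at least $1-\delta$.

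The main point requiring care is checking that every inequality used in the original induction is monotone in the right direction when empirical quantities are replaced by their event bounds. In particular, $\overline\alpha_\star$ must appear only as a lower bound on the margin sum, and never as the exact value $\gap{\Sigma_0}/(2\sqrt2)$.
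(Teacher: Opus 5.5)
Your argument is correct, but it takes a somewhat different route from the paper. The paper does not rerun the induction. It uses the equivalent form of \eqref{eq:gd-step} stated after \eqref{eq:gd-constants}, namely $\tau\le\alpha_\star/(B_\star M_{\nabla}+\frac92LB_\star^2)$ and $\tau\le(1-1/\sqrt2)\alpha_\star/(e^{2I_\star}\Delta_0)$. Both thresholds increase with the descent scale $\alpha_\star$ and decrease with $\Delta_0$ and $\|\Sigma_0\|_{\mathrm{op}}$, through $M_\nabla$, $I_\star$ and $B_\star$. On the event \eqref{eq:sampling-event} the barred step therefore lies below the realized $\tau_{\rm GD}$. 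Theorem~\ref{thm:gd} then applies as a black box with the empirical constants. Its contraction factor $1-2\alpha_\star\kappa\tau$ and tail constants $I_\star\le\overline I_\star$, $B_\star\le\overline B_\star$ are dominated by the barred ones. That route is shorter. However, Theorem~\ref{thm:gd} assumes $\Delta_0>0$, which enters a denominator of $\tau_{\rm GD}$, so the paper must treat a realized zero loss gap separately by stationarity. Your rerun with $c=\overline\alpha_\star$, $h=2\overline\alpha_\star$ and $B=\overline B_\star$ costs a re-verification of each step. In exchange, it covers $\Delta_0=0$ automatically and yields the barred contraction factor and tails directly, with no final comparison. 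Your checks of the guard $\tau M_\nabla<1/\sqrt2$, of $2\overline\alpha_\star\kappa\tau\le 2/9$, and of the defect bound from the second branch of \eqref{eq:sampled-gd-step} are all sound. Treating $2\overline\alpha_\star$ as a lower bound on the actual margin sum $(1-\eta)\gap{\Sigma_0}$ is also legitimate.
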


At fixed relative tolerance, the sampled stepsize can deteriorate with
the dimension through $\overline\Delta$. The independent absolute
accuracy budget controls this initial-loss contribution.

\begin{proof}[Proof of Corollary~\ref{cor:sampled-gd}]
On \eqref{eq:sampling-event}, the empirical descent scale is at least
$\overline\alpha_\star$, whereas the empirical loss gap, force bound, length
bound, and covariance upper bound are at most their barred counterparts.
Both admissible stepsize terms increase with the descent scale and
decrease with the loss gap and initial covariance norm. Hence the chosen
step satisfies Theorem~\ref{thm:gd} for the empirical initialization.
Its contraction factor is no larger than the stated one. If the realized
initial loss gap vanishes, the empirical factors are stationary, and the
conclusions hold directly.
\end{proof}

\paragraph{Boundary cases}
For the scalar loss $\ell(s):=(s+1)^2/2$, the singular initialization
$U_0=V_0\ne0$ preserves equality of the factors and keeps
$\Svu_k=\norm{U_k}^2/n\ge0$, so its loss gap is at least $1/2$.
Conversely, even full-rank isotropic initialization does not justify arbitrary steps:
for $\ell(s):=(s-1)^2/2$, $\Sigma_0=\Id_2$, and $\tau=1$, one gets
$\Sigma_1=\bigl(\begin{smallmatrix}2&2\\2&2\end{smallmatrix}\bigr)$
and $\Sigma_2=0$, after which training is trapped.

\subsection{Deep linear residual networks}\label{app:resnets}
Write
$P_{b:a}:=A_b\cdots A_a$ when $a\le b$, and $P_{b:a}:=\Id$ otherwise.
Thus $P_{>s}=P_{S-1:s+1}$ and $P_{<s}=P_{s-1:0}$.
Differentiating $P_S=P_{>s}A_sP_{<s}$ gives
\begin{equation}
  \label{eq:resnet-layer-gradient}
\nabla_{\Svu_s}\ell(P_S)
=h\widetilde E_s,
\qquad
\widetilde E_s=P_{>s}^\top E P_{<s}^\top.
\end{equation}
In residual mean-field time the particle equations are
$(\dot u_{s,i},\dot v_{s,i})
=-(\widetilde E_s^\top v_{s,i},\widetilde E_su_{s,i})$.
The cross block of \eqref{eq:resnet-layer-bures} is therefore
$\dot\Svu_s=-\widetilde E_s\Suu_s-\Svv_s\widetilde E_s$.

In the finite-depth setting of Theorem~\ref{thm:resnet-finite}, put
$\overline{\sigma}_S:=(1+B/S)^{S-1}$ and
$I_{\rm res}:=\overline{\sigma}_S(\overline g_S\underline{\sigma}_S^2)^{-1}\sqrt{2\Delta_0/\kappa}$.
The proof gives the common layer-force tail
\begin{equation}\label{eq:resnet-length}
 \int_t^\infty\norm{\widetilde E_s(r)}_{\F}\,\dd r
 \le I_{\rm res}e^{-\kappa\overline g_S\underline{\sigma}_S^2t}.
\end{equation}

\begin{proof}[Proof of Theorem~\ref{thm:resnet-finite}]
Write $g_s:=\gap{\Sigma_s(0)}$ within this proof.
The endpoint derivative and layer covariance equations give
\[
 -\dot\Delta
 =\frac1S\sum_{s=0}^{S-1}
 \left[\tr(\widetilde E_s^\top \widetilde E_s\Suu_s)+\tr(\widetilde E_s\widetilde E_s^\top\Svv_s)\right]
 \ge\frac1S\sum_{s=0}^{S-1}g_s\norm{\widetilde E_s}_{\F}^2.
\]
Both the lower and upper singular-value bounds of the residual factors
multiply through the prefix and suffix, so
$\underline{\sigma}_S\norm E_{\F}\le\norm{\widetilde E_s}_{\F}\le \overline{\sigma}_S\norm E_{\F}$.
Therefore
\[
 -\dot\Delta\ge\overline g_S\underline{\sigma}_S^2\norm E_{\F}^2
 \ge2\kappa\overline g_S\underline{\sigma}_S^2\Delta.
\]
This proves the loss bound on each finite existence interval using the
average layer margin. The square-root PL
argument used in Theorem~\ref{thm:gf}, followed by the upper force bound,
gives \eqref{eq:resnet-length} first with any finite endpoint.
Thus all layer fundamental matrices and their inverses have norm at most
$e^{I_{\rm res}}$. Their covariances remain bounded and positive definite.
The finite-dimensional vector field is locally Lipschitz; these bounds
prevent a finite maximal existence time. The force-tail bound now holds
on $[t,\infty)$ and implies layer convergence by the argument of
Theorem~\ref{thm:gf}. Continuity of the finite residual product proves
optimality of its limit.
\end{proof}

\begin{proof}[Proof of Theorem~\ref{thm:resnet-continuous}]
The $L^1$ training equation in Section~\ref{sec:resnets} means
\begin{equation}\label{eq:resnet-integral}
 \Sigma_s(t)=\Sigma_s(0)-\int_0^t
 (\widetilde G_s(r)\Sigma_s(r)+\Sigma_s(r)\widetilde G_s(r))\,\dd r,
 \qquad\widetilde G_s=G_{\widetilde E_s}.
\end{equation}
This equality in $L^1$ gives an absolutely continuous representative for
almost every layer. The path bound makes the forces uniformly bounded
over depth on finite training intervals, justifying differentiation
of the endpoint and integration over depth.
An $L^1$ residual field has invertible propagators. Gronwall applied to
each propagator and its inverse gives
$e^{-B}\norm E_{\F}\le\norm{\widetilde E_s}_{\F}\le e^B\norm E_{\F}$.
The depth integrals of the conserved supporting inequalities yield
\[
 -\dot\Delta
 =\int_0^1\left[\tr(\widetilde E_s^\top \widetilde E_s\Suu_s)
                    +\tr(\widetilde E_s\widetilde E_s^\top\Svv_s)\right]\,\dd s
 \ge\overline g e^{-2B}\norm E_{\F}^2.
\]
These integrals are finite by the $L^1$ covariance assumption and the
uniform-in-depth force bounds. PL proves \eqref{eq:continuous-resnet-rate}.
As $\gap{\Sigma_s(0)}>0$ almost everywhere and
$\gap{\Sigma_s(0)}\le2\norm{\Sigma_s(0)}_{\op}$, its integral is finite and positive.
The square-root argument gives the common layer-force tail bound
\[
 \int_t^\infty\norm{\widetilde E_s(r)}_{\F}\,\dd r
 \le I_{\rm res}e^{-\kappa\overline g e^{-2B}t},\qquad
 I_{\rm res}:=\frac{e^{3B}}{\overline g}
                         \sqrt{\frac{2\Delta_0}{\kappa}}.
\]
The fundamental-matrix argument proves almost-everywhere convergence
and positive definiteness. The covariance tails are bounded by
$2e^{2I_{\rm res}}\norm{\Sigma_s(0)}_{\op}I_{\rm res}
e^{-\kappa\overline g e^{-2B}t}$, an integrable bound over depth.
This proves $L^1$ convergence. Continuity of the endpoint map on bounded
$L^1$ sets, obtained by variation of constants and Gronwall, proves
optimality of the limiting endpoint.
\end{proof}

\begin{remark}[Sharpness of the boundedness hypothesis]
\label{rem:resnet-path-escape}
Unlike the two-layer flow of Theorem~\ref{thm:gf}, positive conserved
layer margins do not rule out unbounded weight growth as $S\to\infty$
or along continuous-depth training. A linear neural ODE has positive
determinant, so it cannot fit an orientation-reversing target; training
may instead approach a singular endpoint through unbounded residuals.
For a concrete example, take the invertible scalar target $p_\star:=-1$ and loss
$\ell(p):=(p+1)^2/2$, with $L=\kappa=1$. Initialize every layer with
$\Sigma_s(0)=\Id_2$, realizable at width $n=2$; every conserved support
gap equals $2$. At each finite depth $S\ge2$, all residuals decrease
to $-S$, so $\sup_{t\ge0}\max_s|\Svu_s(t)|=S$ and no uniform $B<S$
exists. In continuous depth they decrease without bound, giving
$\sup_{t\ge0}\int_0^1|\Svu_s(t)|\,\dd s=\infty$.
Both flows are global, but their endpoints tend to zero and their losses
to $1/2>\ell_\star=0$. At finite depth the target is representable:
failure occurs along this symmetric trajectory, whereas the neural ODE
excludes the target altogether. In both cases, positive conserved margins
therefore fail to ensure the residual bounds required by the theorems.
\end{remark}

\begin{proof}[Verification of Remark~\ref{rem:resnet-path-escape}]
The scalar target $-1$ is invertible, and
$|\ell'(p)|^2=(p+1)^2=2\ell(p)$, so $L=\kappa=1$ and
$\ell_\star=0$. The width-two initialization
$U_s(0)=(\sqrt2,0)$, $V_s(0)=(0,\sqrt2)$ gives
$\Sigma_s(0)=\Id_2$ and initial endpoint $1$ at every depth.
For scalar layers write $a_s:=\Suu_s=\Svv_s$ and
$r_s:=\Svu_s$. Equality of the diagonal blocks is preserved, and the
layer equations reduce to
\[
 \dot a_s=-2\widetilde E_s r_s,\qquad
 \dot r_s=-2\widetilde E_s a_s.
\]
Thus $a_s^2-r_s^2=1$, and positivity gives
\[
 \Sigma_s(t)=\begin{pmatrix}
  \sqrt{1+r_s(t)^2}&r_s(t)\\
  r_s(t)&\sqrt{1+r_s(t)^2}
 \end{pmatrix}.
\]
The eigenvalues of $J\Sigma_s$ are $\pm1$, whereas those of
$\Sigma_s$ are $\sqrt{1+r_s^2}\pm|r_s|>0$.
Uniqueness and symmetry preserve identical layers. Denote their common
residual by $r_S$. At finite depth,
$\widetilde E_s=(p_S+1)(1+r_S/S)^{S-1}$, giving
\begin{equation}\label{eq:resnet-escape-finite}
 \begin{aligned}
 p_S&=(1+r_S/S)^S,\qquad r_S(0)=0,\\
 \dot r_S&=-2\sqrt{1+r_S^2}\big((1+r_S/S)^S+1\big)(1+r_S/S)^{S-1}.
 \end{aligned}
\end{equation}
For $S\ge2$, its vector field is locally Lipschitz and vanishes at $r_S=-S$.
Starting at zero, the solution decreases and cannot reach this equilibrium
in finite time. It stays in the compact interval $[-S,0]$, so it is global.
Any limit strictly above $-S$ would have strictly negative velocity;
hence its limit is $-S$, $p_S\to0$, and $\ell(p_S)\to1/2$.
In particular, every candidate bound $B<S$ is exceeded at a finite
training time, although $|r_S(t)|<S$ at every finite time.

The finite-depth architecture can represent $-1$ by taking one residual
factor equal to $-1$ and the others equal to $1$. The obstruction here
is the preserved symmetry of the initialized trajectory: all its
residual factors stay positive and approach zero. At $S=1$, this
obstruction disappears; $\dot r_1=-2\sqrt{1+r_1^2}(r_1+2)$ instead
gives $r_1\to-2$ and the optimal endpoint $-1$.

For a depth-constant field, the continuous propagators are
$P_{b,a}=e^{(b-a)r}$, so $\widetilde E_s=(e^r+1)e^r$ and
\begin{equation}\label{eq:resnet-escape-continuous}
 p=e^r,\qquad \dot r=-2\sqrt{1+r^2}(e^r+1)e^r,\qquad r(0)=0.
\end{equation}
For $r\le0$ its velocity is bounded, strictly negative, and has no finite
zero. The solution is therefore global and tends to $-\infty$.
Since $\int_0^1|\Svu_s(t)|\,\dd s=|r(t)|$, every finite integral
bound is eventually exceeded as well. The endpoint tends to zero.
The limiting loss $1/2$ is the infimum over positive scalar endpoints
and exceeds the ambient minimum $\ell_\star=0$.
The associated depth-constant covariance is a global locally absolutely
continuous $L^1$ trajectory, positive definite at every finite time.
Moreover, for $S\ge2$ and $-S<r_S\le0$,
$(1+r_S/S)^{S-1}\le e^{r_S/2}$ and $(1+r_S/S)^S+1\le2$.
Since $\sqrt{1+r^2}e^{r/2}\le1$ for $r\le0$, both $|\dot r_S|$ and
$|\dot r|$ are at most $4$. On $[0,T]$, all residuals therefore lie in
$[-4T,0]$. The finite-depth vector fields converge uniformly on this
compact interval, and continuous dependence gives $r_S\to r$ uniformly
on every bounded training interval $[0,T]$.
\end{proof}

\begin{remark}[Zero-target variant]
The same path bounds can fail while the loss tends to zero. With
$\ell(p):=p^2/2$ and the same initialization, the equations are
$\dot r_S=-2\sqrt{1+r_S^2}(1+r_S/S)^{2S-1}$ and
$\dot r=-2\sqrt{1+r^2}e^{2r}$. The preceding argument gives
$r_S\to-S$ for every $S\ge1$, $r\to-\infty$ in continuous depth,
and endpoints tending to the zero target. A uniform residual-path
bound would keep the endpoints bounded away from zero instead.
\end{remark}

\begin{remark}[Residual-path control and orientation]
Positive-definite layer covariances can coexist with singular residual
products. For a scalar depth-two network, take both layer predictors equal to $-2$,
so $A_0=A_1=0$, and the endpoint loss $\ell(p):=(p-1)^2/2$.
Both backpropagated gradients vanish although the endpoint is not optimal.
These predictors admit positive-definite layer covariances
$\bigl(\begin{smallmatrix}\sqrt5&-2\\-2&\sqrt5\end{smallmatrix}\bigr)$,
whose signed eigenvalues are $+1,-1$. Extending the earlier discretization and sampling arguments to this setting
requires control of the residual products along the corresponding training
trajectory; the two-layer GD theorem does not apply verbatim.

In continuous depth, $\det P(t)=
\exp(\int_0^1\tr\Svu_s(t)\,\dd s)>0$.
The uniform path bound also keeps $\sigma_{\min}(P(t))\ge e^{-B}$,
so any convergent endpoint remains in the orientation-preserving
invertible component. If every global minimizer has negative determinant,
these path bounds are incompatible with convergence to a global minimizer.
\end{remark}

\subsection{Heavy-ball momentum}\label{app:momentum-proof}
We first expand the closed phase equation, then establish global
existence, position-deformation estimates, and the energy bootstrap.
For positions $x:=(u,v)$ and velocities $y:=(p,q)$, let
$\nu_t\in\cP_2(\R^{2m})$ be their joint distribution. Its covariance is
$\Phase_t:=\int\binom xy\binom xy^{\!\top}\,\dd\nu_t(x,y)$,
with mixed block $\Xi_t:=\int xy^\top\,\dd\nu_t(x,y)$.
Finite-width factors correspond to
$\nu_t:=n^{-1}\sum_i\delta_{(x_i(t),y_i(t))}$.
The characteristics satisfy
\begin{equation}
\label{eq:single-particle-momentum-main}
\dot x=y,\qquad
\dot y=-G(\Svu)x-\gamma y.
\end{equation}
Taking its second moments gives the block system
\begin{equation}
\label{eq:phase-block-ode}
\begin{aligned}
\dot\Spos_t&=\Xi_t+\Xi_t^\top,\qquad
\dot\Xi_t=\Svel_t-\Spos_tG_t-\gamma\Xi_t,\\
\dot\Svel_t&=-G_t\Xi_t-\Xi_t^\top G_t-2\gamma\Svel_t.
\end{aligned}
\end{equation}
In particular, the predictor velocity and acceleration are
\begin{equation}\label{eq:predictor-speed}
 \dot\Svu=\Sigma^{qu}+\Sigma^{vp},\qquad
 \ddot\Svu+\gamma\dot\Svu=2\Sqp-E\Suu-\Svv E.
\end{equation}

\begin{proof}[Proof of Proposition~\ref{prop:phase-laws}]
Since \(G_t\) is symmetric,
\(\Mob_t^\top \Jcan+\Jcan\Mob_t=-\gamma \Jcan\).
Differentiate the two-characteristic pairing and the fundamental-matrix
identity. Invertibility also gives
\(\Psi_t\Jcan(\Psi_t)^\top=e^{-\gamma t}\Jcan\), whence
\(\Jcan\Phase_t=e^{-\gamma t}
(\Psi_t)^{-\top}(\Jcan\Phase_0)(\Psi_t)^\top\).
\end{proof}

\begin{proposition}[Global full phase flow]\label{prop:global-phase}
If $\ell$ is $C^1$, has locally Lipschitz gradient, and is bounded below,
every $\Phase_0\succeq0$ has a global positive semidefinite solution of
\eqref{eq:phase-block-ode}.  This assertion alone gives neither uniform position ellipticity
nor convergence to a minimizer.
\end{proposition}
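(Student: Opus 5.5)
The plan is a standard local-existence-plus-a-priori-bound argument for the finite-dimensional system \eqref{eq:phase-block-ode}. The unknown is the triple $(\Spos,\Xi,\Svel)$, or equivalently the symmetric matrix $\Phase\in\Ssym^{2m}$. The right-hand side $\Mob(\Phase)\Phase+\Phase\Mob(\Phase)^\top$ depends on $\Phase$ through $G_t=G(\Svu_t)$, where $\Svu$ is a block of $\Spos$. Since $\nabla\ell$ is locally Lipschitz, the map $\Phase\mapsto\Mob(\Phase)$ is locally Lipschitz, so the vector field is locally Lipschitz on all of $\Ssym^{2m}$. Picard--Lindel\"of then gives a unique maximal solution on $[0,T^\ast)$. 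Moreover, if $T^\ast<\infty$ then $\norm{\Phase_t}\to\infty$ as $t\uparrow T^\ast$. The task is therefore to rule out blow-up in finite time.

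\emph{Positive semidefiniteness.} On $[0,T^\ast)$, freeze the continuous coefficient $\Mob_t$ and let $\Psi_t$ solve the linear equation $\dot\Psi_t=\Mob_t\Psi_t$, $\Psi_0=\Id$. Then $\Psi_t\Phase_0\Psi_t^\top$ solves the same linear matrix equation as $\Phase_t$, with the same initial datum. By uniqueness for linear ODEs, $\Phase_t=\Psi_t\Phase_0\Psi_t^\top\succeq0$, as in Proposition~\ref{prop:phase-laws}.

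\emph{Energy identity.} Using the block equations, compute $\frac{\dd}{\dd t}\ell(\Svu_t)=\ip{E_t}{\dot\Svu_t}=\ip{E_t}{\Sigma^{qu}_t+\Sigma^{vp}_t}$ via \eqref{eq:predictor-speed}. Also compute $\frac{\dd}{\dd t}\tfrac12\tr\Svel_t=-\tr(G_t\Xi_t)-\gamma\tr\Svel_t$. The block structure of $G_t$ should give $\tr(G_t\Xi_t)=\ip{E_t}{\Sigma^{qu}_t+\Sigma^{vp}_t}$; this is the one bookkeeping step to check, and if it works the two terms cancel. That yields $\dot\cE_t=-2\gamma\mathcal K_t\le0$, so $\cE_t\le\cE_0$. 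Since $\ell$ is bounded below and $\Svel_t\succeq0$, this bounds $\tr\Svel_t\le2\cE_0$ and $\ell(\Svu_t)\le\ell(\Svu_0)$ uniformly in time.

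\emph{Ruling out blow-up.} Loss control alone does not bound $\Svu$, since $\ell$ need not be coercive, so the position block must be controlled through the velocity bound. Use $\dot\Spos=\Xi+\Xi^\top$ together with Cauchy--Schwarz for the positive semidefinite $\Phase$, namely $\norm{\Xi}_{\op}\le\norm{\Spos}_{\op}^{1/2}\norm{\Svel}_{\op}^{1/2}$. This gives $\frac{\dd}{\dd t}\tr\Spos\le2\sqrt{\tr\Spos}\sqrt{2\cE_0}$, hence $\sqrt{\tr\Spos_t}\le\sqrt{\tr\Spos_0}+\sqrt{2\cE_0}\,t$. Thus $\Spos$ and $\Svel$ are bounded on bounded time intervals, and so is $\Xi$ by the same Cauchy--Schwarz inequality. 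Consequently $\Phase_t$ stays bounded on $[0,T^\ast)$ whenever $T^\ast<\infty$, contradicting the blow-up alternative, so $T^\ast=\infty$.

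The main obstacle is the cancellation in the energy identity. It needs the exact pairing between the $E$-blocks of $G$ and the $qu$, $vp$ blocks of $\Xi$, including the transposition conventions. Everything else is routine, and this part of the argument gives neither ellipticity nor convergence, consistent with the statement.
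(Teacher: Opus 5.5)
Your proposal is correct and follows the paper's proof essentially step for step: local well-posedness from the locally Lipschitz vector field, positivity via the linear characteristic lift $\Psi_t\Phase_0\Psi_t^\top$, and the energy identity $\dot\cE_t=-2\gamma\mathcal{K}_t$ giving $\tr\Svel_t\le2\cE_0$. The cancellation you flagged, $\tr(G_t\Xi_t)=\ip{E_t}{\Sigma^{qu}_t+\Sigma^{vp}_t}$, does hold, and the argument then closes as in the paper with Cauchy--Schwarz growth bounds on $\Spos_t$ and $\Xi_t$ followed by continuation.

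One small precision: the inequality $\tfrac{\dd}{\dd t}\tr\Spos\le2\sqrt{2\cE_0\tr\Spos}$ comes from the trace-level bound $|\tr\Xi|\le\sqrt{\tr\Spos\,\tr\Svel}$, obtained by writing $\Phase=RR^\top$. The operator-norm bound you cite would cost a harmless factor $m$. Passing to $\sqrt{\tr\Spos_t}$ at times of zero trace also needs a regularization such as $\sqrt{\tr\Spos_t+\epsilon}$, or the paper's $L^2$ triangle inequality along characteristics.
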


\begin{proof}[Proof of Proposition~\ref{prop:global-phase}]
Local well-posedness follows from local Lipschitz continuity. The linear
phase characteristic lift preserves positivity. The energy identity gives
$\tr\Svel_t\le2\cE_0$. Positivity and Cauchy--Schwarz yield
\[
 \sqrt{\tr\Spos_t}
 \le\sqrt{\tr\Spos_0}+t\sqrt{2\cE_0},\qquad
 \norm{\Xi_t}_{\F}^2\le\tr\Spos_t\tr\Svel_t.
\]
The first inequality also follows directly from the $L^2$ triangle
inequality along characteristics, including times when the trace is zero.
Every block is bounded on each finite interval; continuation proves
global existence.
\end{proof}
For the matrix calculations in the momentum sections, abbreviate
$A:=\Spos$ and $B:=\Svel$. Whenever $A\succ0$, define
\begin{equation}\label{eq:regression}
 D:=\Xi^\top A^{-1},\qquad Q:=B-DAD^\top\succeq0.
\end{equation}
Here $D$ is a regression matrix, not assumed symmetric, and $Q$ is its
positive semidefinite residual covariance (the Schur complement).
They encode all phase blocks by $\Xi=AD^\top$ and $B=DAD^\top+Q$.
\begin{proposition}[General covariance coordinates]\label{prop:regression}
On the chart $A\succ0$, \eqref{eq:phase-block-ode} is equivalent to
\begin{equation}\label{eq:regression-ode}
 \begin{aligned}
 \dot A&=DA+AD^\top,\\
 \dot D&=-G-\gamma D-D^2+QA^{-1},\\
 \dot Q&=-2\gamma Q-DQ-QD^\top.
 \end{aligned}
\end{equation}
Moreover, positive semidefiniteness controls the position deformation:
\begin{equation}\label{eq:regression-bound-main}
 \norm D_{\op}^2\le\frac{2\mathcal{K}}{\lambda_{\min}(A)}.
\end{equation}
\end{proposition}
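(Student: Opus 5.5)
We prove Proposition~\ref{prop:regression} by direct differentiation, using the substitution $\Xi=AD^\top$, $B=DAD^\top+Q$. On the chart $A\succ0$ this map $(A,\Xi,B)\leftrightarrow(A,D,Q)$ is a smooth bijection onto $\{A\succ0\}\times\R^{m\times m}\times\Ssym^m$. Hence equivalence of the two systems follows once we show that each transformed equation follows from \eqref{eq:phase-block-ode}; the converse is the same computation read backwards. Throughout we use that $G$ and $A$ are symmetric, so $\Xi^\top=DA$.

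\textbf{The $A$ and $D$ equations.} The first equation is immediate: $\dot A=\Xi+\Xi^\top=AD^\top+DA$.

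For $D=\Xi^\top A^{-1}$, we use $\tfrac{\dd}{\dd t}A^{-1}=-A^{-1}\dot AA^{-1}$ together with the transpose of the $\Xi$ equation, $\dot\Xi^\top=B-GA-\gamma\Xi^\top$. This gives
\[
\dot D=(B-GA-\gamma DA)A^{-1}-DA\,A^{-1}(DA+AD^\top)A^{-1}
=BA^{-1}-G-\gamma D-D^2-DAD^\top A^{-1}.
\]
Substituting $B=DAD^\top+Q$ cancels the last term and leaves $\dot D=-G-\gamma D-D^2+QA^{-1}$.

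\textbf{The $Q$ equation.} This is the step needing the most bookkeeping. We expand
\[
\dot Q=\dot B-\dot DAD^\top-D\dot AD^\top-DA\dot D^\top .
\]
We insert $\dot B=-GAD^\top-DAG-2\gamma B$ and the formula for $\dot D$ just derived. Then we collect terms:

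\begin{itemize}[leftmargin=*]
\item The $G$ terms cancel: $-GAD^\top-DAG$ from $\dot B$ against $+GAD^\top+DAG$ from the two $\dot D$ terms.
\item The $\gamma$ terms combine to $-2\gamma B+2\gamma DAD^\top=-2\gamma Q$.
\item The cubic terms cancel: $+D^2AD^\top+DAD^{\top2}$ from the $\dot D$ terms against $-D(DA+AD^\top)D^\top$ from the $\dot A$ term.
\item What remains is $-QA^{-1}AD^\top-DAA^{-1}Q=-QD^\top-DQ$.
\end{itemize}

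This yields $\dot Q=-2\gamma Q-DQ-QD^\top$. The Schur complement $Q$ is positive semidefinite because $\Phase\succeq0$ and $A\succ0$.

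\textbf{The deformation bound.} For \eqref{eq:regression-bound-main} we use $Q\succeq0$, which gives $B\succeq DAD^\top\succeq\lambda_{\min}(A)\,DD^\top$. Taking traces,
\[
\lambda_{\min}(A)\norm D_{\op}^2\le\lambda_{\min}(A)\norm D_{\F}^2\le\tr B=2\mathcal K .
\]
The only potential pitfall in the whole argument is keeping track of transposes, since $D$ is not symmetric; the computation is otherwise routine.
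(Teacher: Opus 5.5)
Your proof is correct and follows the paper's route: you differentiate $D=\Xi^\top A^{-1}$ and $Q=B-DAD^\top$ along \eqref{eq:phase-block-ode}, then bound $\lambda_{\min}(A)\norm D_{\op}^2\le\tr(DAD^\top)\le\tr B=2\mathcal K$ using the Schur complement $Q\succeq0$. Your term-by-term cancellation in the $Q$ equation and the change-of-variables argument for the converse supply details that the paper leaves implicit.
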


\begin{proof}[Proof of Proposition~\ref{prop:regression}]
Differentiate $D=\Xi^\top A^{-1}$ and $Q=B-DAD^\top$, using
\eqref{eq:phase-block-ode}. This gives the three displayed identities.
Since $DAD^\top\preceq B$, its trace is at most $2\mathcal{K}$, whereas
$\lambda_{\min}(A)\norm D_{\op}^2\le\tr(DAD^\top)$.
\end{proof}
\begin{lemma}[Conditional phase-energy decay]\label{lem:conditional}
Under Assumption~\ref{ass:loss}, let a positive semidefinite solution of
\eqref{eq:phase-block-ode} satisfy $a\Id_m\preceq\Spos_t\preceq b\Id_m$ for
$0\le t\le T$, where $0<a\le b<\infty$. Put
$\mathfrak K:=4a\kappa$, $\mathfrak L:=4bL$, and
$\rho:=\mathfrak L+2\sqrt{2L\cE_0}$. Then
\begin{equation}\label{eq:conditional}
 \cE_t\le3\cE_0 e^{-\lambda t},\qquad
 \lambda:=\frac{a\kappa\gamma}{\rho+2\gamma^2},\qquad 0\le t\le T.
\end{equation}
\end{lemma}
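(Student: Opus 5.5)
The plan is a standard heavy-ball Lyapunov argument on the phase blocks. We perturb the mechanical energy by a small cross term that recovers dissipation in the loss direction. The natural cross term is the time derivative of the loss itself. The cross block of \eqref{eq:phase-block-ode} gives $\dot\Svu=\Sigma^{qu}+\Sigma^{vp}$, so $\dot\Delta_t=\ip{E_t}{\dot\Svu_t}=\tr(G_t\Xi_t)$. We therefore set
\[
 \mathcal V_t:=\cE_t+\varepsilon\,\tr(G_t\Xi_t),\qquad \varepsilon>0 \text{ to be chosen},
\]
and work on $[0,T]$, where $a\Id\preceq\Spos_t\preceq b\Id$. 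By the energy identity $\dot\cE_t=-2\gamma\mathcal K_t$ we also have $\Delta_t\le\cE_t\le\cE_0$ throughout, hence $\norm{E_t}_{\op}\le\norm{E_t}_{\F}\le\sqrt{2L\cE_0}$ by Lemma~\ref{lem:pl}.

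The key algebraic tool is the regression decomposition of Proposition~\ref{prop:regression}: $\Xi=AD^\top$ with $\tr(DAD^\top)\le\tr\Svel=2\mathcal K$. By Cauchy--Schwarz in the $A$-weighted inner product,
\[
 |\tr(G\Xi)|=|\tr(GAD^\top)|\le\sqrt{\tr(GAG)}\sqrt{2\mathcal K},\qquad \tr(GAG)\in[\,2a\norm E_{\F}^2,\;2b\norm E_{\F}^2\,].
\]
The $a$-side of this bound gives, with PL, $\tr(GAG)\ge 4a\kappa\Delta=\mathfrak K\Delta$.

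Next we differentiate the cross term using $\dot\Xi=\Svel-\Spos G-\gamma\Xi$:
\[
 \tfrac{\dd}{\dd t}\tr(G\Xi)=\tr(\dot G\Xi)+\tr(G\Svel)-\tr(GAG)-\gamma\tr(G\Xi).
\]
We bound the four terms as follows. First, $\tr(G\Svel)\le\norm{E}_{\op}\,2\mathcal K\le2\sqrt{2L\cE_0}\,\mathcal K$. Second, $\tr(\dot G\Xi)=2\ip{\dot E}{\cdot}$ is controlled by $L$-smoothness: $\norm{\dot E}_{\F}\le L\norm{\dot\Svu}_{\F}$, while $\norm{\dot\Svu}_{\F}$ and $\norm{\Xi}_{\F}$ are each bounded by $\sqrt{b\cdot2\mathcal K}$ up to constants. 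This yields a term of order $bL\mathcal K$, which we aim to make exactly $\mathfrak L\mathcal K=4bL\mathcal K$. Third, the damping term is absorbed by Young's inequality, $\gamma|\tr(G\Xi)|\le\frac12\tr(GAG)+\gamma^2\mathcal K$. Fourth, half of $-\tr(GAG)$ remains. Altogether,
\[
 \dot{\mathcal V}\le-\bigl(2\gamma-\varepsilon(\rho+\gamma^2)\bigr)\mathcal K-\tfrac{\varepsilon}{2}\mathfrak K\Delta .
\]
The choice $\varepsilon\approx\gamma/(\rho+2\gamma^2)$ keeps the kinetic coefficient at least a constant multiple of $\gamma$. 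It also yields $\dot{\mathcal V}\le-c\,\varepsilon a\kappa\,\cE$ with a rate of the form $a\kappa\gamma/(\rho+2\gamma^2)$.

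It then remains to show that $\mathcal V$ and $\cE$ are equivalent. Using $|\tr(G\Xi)|\le\sqrt{2b}\,\norm E_{\F}\sqrt{2\mathcal K}\le 2\sqrt{bL\Delta\mathcal K}$ and AM--GM, we get $\varepsilon|\tr(G\Xi)|\le\varepsilon\sqrt{bL}\,\cE$. With the chosen $\varepsilon$ we have $\varepsilon\sqrt{bL}\le\gamma\sqrt{bL}/(4bL+2\gamma^2)\le\tfrac{1}{2\sqrt2}$. Hence $(1-\theta)\cE\le\mathcal V\le(1+\theta)\cE$ with $(1+\theta)/(1-\theta)\le3$. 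Grönwall on $\mathcal V$, converted back to $\cE$, then gives \eqref{eq:conditional}.

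The main obstacle is the bookkeeping of constants so that exactly $\rho=4bL+2\sqrt{2L\cE_0}$ and the factor $3$ emerge. The delicate term is $\tr(\dot G\Xi)$, since $\dot G$ involves $\dot\Svu$ and the factor-$2$ symmetrisation of $G$. I would first try to write $\dot\Svu$ through $D$ and $A$ and bound it by $\sqrt{2b\cdot 2\mathcal K}$, then tune $\varepsilon$ and the Young parameter last to match the stated $\lambda$.
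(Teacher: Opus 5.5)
Your proposal is correct and follows the paper's proof essentially step for step: the corrected energy is $\mathcal V=\cE+\varepsilon\dot\Delta$ with $\dot\Delta=\tr(G\Xi)$, the expansion $\ddot\Delta=\tr(\dot G\Xi)+\tr(G\Svel)-\tr(G\Spos G)-\gamma\dot\Delta$ has its first two terms bounded by $\rho\mathcal K$, the damping term is handled by Young's inequality, and the argument closes with Lyapunov equivalence plus Gr\"onwall; the paper uses Young weight $1/4$ with $\varepsilon=\gamma/[2(\rho+2\gamma^2)]$, while your weight $1/2$ with $\varepsilon=\gamma/(\rho+2\gamma^2)$ also closes, giving $\dot{\mathcal V}\le-2\lambda\cE$ and in fact a prefactor $5/3$. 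The only slip is a harmless factor $\sqrt2$ in the equivalence step ($|\tr(G\Xi)|\le\sqrt{2\mathfrak L\Delta\mathcal K}=2\sqrt{2bL\Delta\mathcal K}$, so the constant is $\theta=\varepsilon\sqrt{2bL}\le1/4$), and you should state that $\tr(G\Xi)$ is differentiable only almost everywhere, because $E_t$ is merely Lipschitz in $t$.
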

\begin{proof}
Set $\cN:=\tr(G\Spos G)$, the mean squared particle force.
The hidden principal blocks are between $a\Id$ and $b\Id$, hence
\[
 2a\norm E_{\F}^2\le\cN\le2b\norm E_{\F}^2.
\]
Combining PL with the smooth-gradient upper bound in
Lemma~\ref{lem:pl}, and applying Cauchy--Schwarz to the phase law, gives
\begin{equation}\label{eq:conditional-force-bounds}
 \mathfrak K\Delta\le \cN\le\mathfrak L\Delta,\qquad
 |\dot\Delta|=|\tr(G\Xi)|\le\sqrt{2\mathcal{K} \cN}.
\end{equation}
All second derivatives below are understood almost everywhere. Lipschitz
continuity of $\nabla\ell$ implies that $E_t$ is locally absolutely
continuous and $\norm{\dot E_t}_{\F}\le L\norm{\dot\Svu_t}_{\F}$.
Differentiating \eqref{eq:predictor-speed} gives
\[
 \ddot\Delta=H-\cN-\gamma\dot\Delta,\qquad
 H:=\ip{\dot E}{\dot\Svu}+2\ip E{\Sqp}\le\rho\mathcal{K}.
\]
For the last bound, positive phase blocks give
$\norm{\Sigma^{qu}}_{\F}^2\le b\tr\Sigma^q$ and
$\norm{\Sigma^{vp}}_{\F}^2\le b\tr\Sigma^p$, hence
$\norm{\dot\Svu}_{\F}^2\le4b\mathcal{K}$. Also
$\norm{\Sqp}_{\F}\le\sqrt{\tr\Sigma^q\tr\Sigma^p}\le\mathcal{K}$ and
$\norm E_{\F}\le\sqrt{2L\cE_0}$.

Choose $\epsilon:=\gamma/[2(\rho+2\gamma^2)]$ and
$\mathcal V:=\cE+\epsilon\dot\Delta$. Since
\[
 \epsilon\le\frac1{4\sqrt{2\rho}}
 \le\frac1{\sqrt{2\mathfrak L}},\qquad
 |\dot\Delta|\le\sqrt{\mathfrak L/2}\,\cE,
\]
one has $\cE/2\le\mathcal V\le3\cE/2$.
The inequality $\gamma\sqrt{2\mathcal{K} \cN}\le \cN/4+2\gamma^2\mathcal{K}$ yields
\[
 \dot{\mathcal V}\le-\tfrac34\epsilon \cN
       -\bigl(2\gamma-\epsilon(\rho+2\gamma^2)\bigr)\mathcal{K}
 \le-\tfrac34\epsilon\mathfrak K\Delta-\gamma\mathcal{K}.
\]
Now $\tfrac34\epsilon\mathfrak K\le\gamma$, since $a\kappa\le bL$
and $\rho\ge4bL$. It follows that
$\dot{\mathcal V}\le-\tfrac23(\tfrac34\epsilon\mathfrak K)\mathcal V
=-\lambda\mathcal V$.
Integrating the a.e. inequality proves \eqref{eq:conditional}.

\end{proof}

Under Theorem~\ref{thm:momentum}, the position-covariance tail is
\begin{equation}\label{eq:momentum-tail}
 \norm{\Spos_t-\Spos_\infty}_{\F}
 \le\frac{4\sqrt{6b_s\cE_0}}{\lambda_s(\gamma)}
       e^{-\lambda_s(\gamma)t/2}.
\end{equation}
For the position factors $W$ or the position marginal of the characteristic
phase law, the normalized Frobenius or $W_2$ tail has prefactor
$2\sqrt{6\cE_0}/\lambda_s(\gamma)$ at the same exponent.

\begin{proof}[Proof of Theorem~\ref{thm:momentum}]
The full phase flow is global by Proposition~\ref{prop:global-phase}.
On the connected position-positive chart containing time zero, let
$h(t):=\int_0^t\norm{D_r}_{\op}\,\dd r$.
The equation $\dot A=DA+AD^\top$ yields
$A_t=Z_t A_0Z_t^\top$ with $\dot Z=DZ$, $Z_0=\Id_m$, and thus
\[
 a_0e^{-2h(t)}\Id_m\preceq A_t\preceq b_0e^{2h(t)}\Id_m.
\]
As long as $h\le s$, Lemma~\ref{lem:conditional} and
Proposition~\ref{prop:regression} imply
\[
 h(t)\le\int_0^t\sqrt{2\mathcal{K}_r/a_s}\,\dd r
 \le\frac{2}{\lambda_s}\sqrt{\frac{6\cE_0}{a_s}}
 =\mathfrak S_s(\gamma)<s.
\]
There is no first time at which $h=s$. Nor can the chart terminate earlier:
on each such interval $A\succeq a_s\Id$ and the globally defined full
phase blocks stay bounded on finite horizons. This proves the global tube
and energy estimate. Furthermore the positive-block factorization gives
$\norm\Xi_{\F}^2\le b_s\tr B\le6b_s\cE_0 e^{-\lambda_st}$.
Integrate $\dot A=\Xi+\Xi^\top$ to get \eqref{eq:momentum-tail}.
For particles or a characteristic phase law, the remaining $L^2$ position displacement is
at most $2\sqrt{6\cE_0}\,e^{-\lambda_st/2}/\lambda_s$.
The limit is therefore attained, with zero loss gap and full phase
covariance $\diag(A_\infty,0)$.
\end{proof}

If $\cE_0=0$, the nonnegative kinetic and potential energies both vanish,
the force vanishes, and the momentum trajectory is stationary. Positive
definiteness of the \emph{phase} covariance is never required; in particular
zero-speed initialization is covered.

\subsection{An exact decomposition of hidden kinetic energy}
Prescribing the position covariance and its derivative does not determine
the kinetic energy: velocity components that leave the covariance
unchanged can still contribute to it.

\begin{proposition}[Minimum-energy lift of a covariance velocity]\label{prop:kinetic}
Fix $A\succ0$ and a symmetric matrix $H$ as the desired value of $\dot A$.
Let $C=C^\top$ be the unique solution of $CA+AC=H$.
Every positive semidefinite phase covariance with this position moment and
derivative has regression coordinates
\[
 D=C+\Omega,\qquad \Omega A+A\Omega^\top=0,\qquad Q\succeq0,
\]
and its kinetic energy decomposes as
\begin{equation}\label{eq:kinetic-decomposition}
 \mathcal{K}=\tfrac12\tr(CAC)
    +\tfrac12\tr(\Omega A\Omega^\top)+\tfrac12\tr Q.
\end{equation}
Thus the graph lift $D=C$, $Q=0$ has minimum kinetic energy.
An additional positive semidefinite Schur complement contributes exactly
$\tr Q/2$.
\end{proposition}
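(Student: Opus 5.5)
The plan rests on one fact: $A\succ0$ makes the Lyapunov operator $C\mapsto CA+AC$ invertible on symmetric matrices. Its eigenvalues are the sums $\lambda_i(A)+\lambda_j(A)>0$, so $C$ exists and is unique. Now take any positive semidefinite phase covariance with position block $A$ and derivative $\dot A=H$. Proposition~\ref{prop:regression} supplies its regression coordinates $D=\Xi^\top A^{-1}$ and $Q=B-DAD^\top\succeq0$. It also gives the constraint $DA+AD^\top=H$, from the first line of \eqref{eq:regression-ode}, which is just $\dot A=\Xi+\Xi^\top$ rewritten.

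Set $\Omega:=D-C$. Subtracting $CA+AC=H$ from $DA+AD^\top=H$, and using $C=C^\top$, gives $\Omega A+A\Omega^\top=0$. This proves the parametrization. Conversely, every such triple $(C+\Omega,Q)$ with $Q\succeq0$ defines a positive semidefinite phase covariance with the prescribed $A$ and $\dot A$. That direction is not needed for the decomposition, but it justifies the phrase ``graph lift''.

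For the energy, write $\mathcal K=\tfrac12\tr B=\tfrac12\tr(DAD^\top)+\tfrac12\tr Q$. Expanding $D=C+\Omega$ gives
\[
 \tr(DAD^\top)=\tr(CAC)+\tr(\Omega A\Omega^\top)+2\tr(CA\Omega^\top).
\]
The whole claim then reduces to showing that the cross term vanishes. This is the only nonroutine step. The defining relation says that $A\Omega^\top=-\Omega A$ is antisymmetric: its transpose is $\Omega A$, and the relation makes that equal to $-A\Omega^\top$. Since $C$ is symmetric, the trace of the product of $C$ with an antisymmetric matrix is zero, so $\tr(CA\Omega^\top)=0$. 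This is exactly an orthogonality in the $A$-weighted inner product $\langle X,Y\rangle_A:=\tr(XAY^\top)$: the space $\{\Omega:\Omega A+A\Omega^\top=0\}$ is the $A$-orthogonal complement of the symmetric matrices.

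This yields \eqref{eq:kinetic-decomposition}. The last two terms there are nonnegative, because $\Omega A\Omega^\top\succeq0$ and $Q\succeq0$. The minimum is therefore attained at $\Omega=0$, $Q=0$, with value $\tfrac12\tr(CAC)$. The excess over the minimum is exactly $\tfrac12\tr(\Omega A\Omega^\top)+\tfrac12\tr Q$, which gives the Schur-complement contribution $\tr Q/2$.
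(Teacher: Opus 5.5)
Your proof is correct and follows essentially the same route as the paper: positivity of the eigenvalue sums $\lambda_i(A)+\lambda_j(A)$ gives existence and uniqueness of $C$, subtracting $CA+AC=H$ from $DA+AD^\top=H$ gives the constraint on $\Omega$, and the cross term $2\tr(CA\Omega^\top)$ vanishes because $A\Omega^\top$ (equivalently $\Omega A$) is skew-symmetric while $C$ is symmetric. Your extra remarks on the converse lift and on the $A$-weighted orthogonality are accurate, though the stated claim does not need them.
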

\begin{proof}
In an eigenbasis of $A$, the Sylvester equation divides $H_{ij}$ by the
strictly positive sum of eigenvalues, proving existence and uniqueness.
Equation $DA+AD^\top=H$ is equivalent to the stated condition on $\Omega$.
Since $B=DAD^\top+Q$, expand its trace. The two cross terms vanish in sum:
$\Omega A$ is skew-symmetric and $C$ is symmetric. Each term in
\eqref{eq:kinetic-decomposition} is nonnegative.
\end{proof}

\begin{example}[Identical position covariance and covariance derivative, arbitrarily large energy]
Let $A=\sigma^2\Id_m$, $C=0$, choose any skew-symmetric $\Omega$, and set
$D=\Omega$, $Q=0$. The initial predictor is zero because
$A=\sigma^2\Id_m$, and $\dot A=0$ makes its derivative zero as well,
independently of $\Omega$. Nevertheless
$\mathcal{K}=\sigma^2\norm\Omega_{\F}^2/2$ can be arbitrarily large. The phase
covariance is positive semidefinite; nonzero $\Omega$ gives a nonsymmetric regression matrix.
For fixed $a_0,b_0,L,\kappa$, the best feasibility ratio
\eqref{eq:budget} tends to zero as $\cE_0\to\infty$. Thus the initial
position covariance and predictor speed do not determine whether the
energy-damping certificate holds.
\end{example}



\subsection{Tightened conditional momentum rates}
\label{app:tightened-conditional-momentum}\label{app:momentum-details}

This subsection sharpens Lemma~\ref{lem:conditional}. It gives the exact
optimizer within the same Lyapunov estimates, the simpler parameter
choice used in the numerical figures, and the corresponding global
bootstrap with its energy prefactor.

\begin{proposition}[Tightened conditional energy decay]
\label{prop:tightened-conditional-momentum-pl}\label{prop:tightened}
Under the loss and position bounds of Lemma~\ref{lem:conditional} on
$[0,T]$, fix $\eta\in(0,1)$ and $\varepsilon>0$. With its constants
$\mathfrak K,\mathfrak L,\rho$, set
\begin{align}
 q&:=\varepsilon\sqrt{\mathfrak L/2},\label{eq:tightened-momentum-q-definition}\\
 r_{\varepsilon,\eta}
 &:=\min\{\varepsilon(1-\eta)\mathfrak K,\,
          2\gamma-\varepsilon(\rho+\gamma^2/(2\eta))\},\label{eq:tightened-momentum-dissipation-definition}\\
 \lambda_{\varepsilon,\eta}
 &:=r_{\varepsilon,\eta}/(1+q).\label{eq:tightened-momentum-rate-definition}
\end{align}
If $q<1$ and $r_{\varepsilon,\eta}>0$, the corrected energy
$\mathcal V_t:=\cE_t+\varepsilon\dot\Delta_t$ satisfies, for $0\le t\le T$,
\begin{align}
 (1-q)\cE_t&\le\mathcal V_t\le(1+q)\cE_t,\label{eq:tightened-momentum-lyapunov-equivalence}\\
 \mathcal V_t&\le\mathcal V_0e^{-\lambda_{\varepsilon,\eta}t},\label{eq:tightened-conditional-momentum-rate}\\
 \ell(\Svu_t)-\ell_\star
 &\le\cE_t\le\frac{\mathcal V_0}{1-q}e^{-\lambda_{\varepsilon,\eta}t}
 \le\frac{1+q}{1-q}\cE_0e^{-\lambda_{\varepsilon,\eta}t}.\label{eq:tightened-conditional-momentum-loss-rate}
\end{align}
If $\dot\Delta_0=0$, the first energy prefactor is $\cE_0/(1-q)$.
\end{proposition}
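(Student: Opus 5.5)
The plan is to reuse the Lyapunov computation from the proof of Lemma~\ref{lem:conditional}, now with $\varepsilon$ and the Young parameter $\eta$ left free instead of tuned. Three ingredients carry over verbatim on $[0,T]$:
\[
\mathfrak K\Delta\le\cN\le\mathfrak L\Delta,\qquad |\dot\Delta|\le\sqrt{2\mathcal K\cN},\qquad \ddot\Delta=H-\cN-\gamma\dot\Delta,\quad H\le\rho\mathcal K .
\]
These use only the tube $a\Id\preceq\Spos_t\preceq b\Id$, PL and smoothness, and the energy bound $\norm{E}_{\F}\le\sqrt{2L\cE_0}$, which follows from $\Delta_t\le\cE_t\le\cE_0$. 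All derivatives are taken almost everywhere, as before.

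\textbf{Equivalence \eqref{eq:tightened-momentum-lyapunov-equivalence}.} Combining $|\dot\Delta|\le\sqrt{2\mathcal K\mathfrak L\Delta}$ with AM--GM gives $2\sqrt{\mathcal K\Delta}\le\mathcal K+\Delta=\cE$. Hence $|\dot\Delta|\le\sqrt{\mathfrak L/2}\,\cE$, so $|\varepsilon\dot\Delta|\le q\cE$, and the two-sided bound follows immediately.

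\textbf{Dissipation.} Using $\dot\cE=-2\gamma\mathcal K$,
\[
\dot{\mathcal V}=-2\gamma\mathcal K+\varepsilon(H-\cN-\gamma\dot\Delta)\le-2\gamma\mathcal K+\varepsilon\rho\mathcal K-\varepsilon\cN+\varepsilon\gamma\sqrt{2\mathcal K\cN}.
\]
Apply Young's inequality in the form $\gamma\sqrt{2\mathcal K\cN}\le\eta\cN+\gamma^2\mathcal K/(2\eta)$. This leaves
\[
\dot{\mathcal V}\le-\varepsilon(1-\eta)\cN-\bigl(2\gamma-\varepsilon(\rho+\gamma^2/(2\eta))\bigr)\mathcal K\le-\varepsilon(1-\eta)\mathfrak K\Delta-\bigl(\cdots\bigr)\mathcal K\le-r_{\varepsilon,\eta}\cE,
\]
where the last step uses $\cE=\Delta+\mathcal K$ and the definition of $r_{\varepsilon,\eta}$ as the smaller of the two coefficients. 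Since $r_{\varepsilon,\eta}>0$, the upper equivalence $\mathcal V\le(1+q)\cE$ converts this to $\dot{\mathcal V}\le-\lambda_{\varepsilon,\eta}\mathcal V$. Gr\"onwall's inequality, applied to the absolutely continuous function $\mathcal V$, then gives \eqref{eq:tightened-conditional-momentum-rate}.

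\textbf{Loss bound and special case.} For \eqref{eq:tightened-conditional-momentum-loss-rate}, chain $\Delta\le\cE\le\mathcal V/(1-q)$ with the rate bound, and then use $\mathcal V_0\le(1+q)\cE_0$. If $\dot\Delta_0=0$, then $\mathcal V_0=\cE_0$ exactly, which yields the stated prefactor $\cE_0/(1-q)$.

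\textbf{Main obstacle.} The only delicate step is the Young split. The constant in front of $\mathcal K$ must come out exactly as $\rho+\gamma^2/(2\eta)$, which requires writing the cross term with the $\sqrt2$ absorbed correctly. In the one-dimensional form, $\gamma\sqrt{2}\,x y\le\eta x^2+\gamma^2 y^2/(2\eta)$ with $x=\sqrt{\cN}$ and $y=\sqrt{\mathcal K}$; this is valid because $\eta x^2+\tfrac{\gamma^2}{2\eta}y^2\ge 2\sqrt{\tfrac{\gamma^2}{2}}\,xy=\sqrt2\gamma xy$, so the split closes as written.
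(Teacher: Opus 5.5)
Your proof is correct and follows the paper's argument essentially step for step. The equivalence comes from $|\dot\Delta|\le\sqrt{2\mathfrak L\Delta\mathcal K}\le\sqrt{\mathfrak L/2}\,\cE$; the dissipation bound uses $\dot\cE=-2\gamma\mathcal K$, $\ddot\Delta\le\rho\mathcal K-\mathcal N-\gamma\dot\Delta$, the same $\eta$-parametrized Young split and $\mathcal N\ge\mathfrak K\Delta$; and the conversion to $-\lambda_{\varepsilon,\eta}\mathcal V$ uses the upper equivalence, followed by integration on $[0,T]$.
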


\begin{proof}
The force bounds \eqref{eq:conditional-force-bounds} give
$|\dot\Delta|\le\sqrt{2\mathfrak L\Delta\mathcal{K}}
\le\sqrt{\mathfrak L/2}\,\cE$, proving the equivalence.
For the derivative, use $\dot{\cE}=-2\gamma\mathcal{K}$,
$\ddot\Delta\le\rho\mathcal{K}-\mathcal N-\gamma\dot\Delta$, and Young's inequality:
\begin{equation}\label{eq:tightened-momentum-parameterized-young}
 \gamma|\dot\Delta|\le\gamma\sqrt{2\mathcal{K}\mathcal N}
 \le\eta\mathcal N+\frac{\gamma^2}{2\eta}\mathcal{K}.
\end{equation}
Since $\mathcal N\ge\mathfrak K\Delta$, this yields
\[
 \dot{\mathcal V}
 \le-\varepsilon(1-\eta)\mathcal N
     -[2\gamma-\varepsilon(\rho+\gamma^2/(2\eta))]\mathcal{K}
 \le-r_{\varepsilon,\eta}\cE
 \le-\lambda_{\varepsilon,\eta}\mathcal V.
\]
Integrate on $[0,T]$ and use the equivalence once more.
\end{proof}

\begin{proposition}[Optimal rate within the Lyapunov bounds]
\label{prop:optimal-tightened-conditional-momentum-rate}
Under Proposition~\ref{prop:tightened}, fix $\eta\in(0,1)$ and write
\begin{equation}\label{eq:tightened-momentum-A-B-definition}
 A_\eta:=(1-\eta)\mathfrak K,\qquad
 B_\eta:=\rho+\gamma^2/(2\eta).
\end{equation}
The unique maximizing correction at fixed $\eta$ is admissible and gives
\begin{align}
 \varepsilon_\eta^\star&=\frac{2\gamma}{A_\eta+B_\eta},\label{eq:tightened-momentum-optimal-epsilon}\\
 q_\eta^\star&:=\varepsilon_\eta^\star\sqrt{\mathfrak L/2}<1,\label{eq:tightened-momentum-optimal-q}\\
 \lambda_\eta&:=\frac{2\gamma A_\eta}
 {A_\eta+B_\eta+\gamma\sqrt{2\mathfrak L}}.\label{eq:tightened-momentum-fixed-eta-rate}
\end{align}
The unique maximizer over $\eta$ and its rate are
\begin{align}
 \eta_\star&=\frac{\gamma}
 {\gamma+\sqrt{\gamma^2+2\rho+2\gamma\sqrt{2\mathfrak L}}}
 \in(0,1/2),\label{eq:tightened-momentum-optimal-eta}\\
 \lambda_\star&:=\max_{0<\eta<1}\lambda_\eta
 =\frac{2\gamma\mathfrak K}
 {\mathfrak K+\frac12
 (\gamma+\sqrt{\gamma^2+2\rho+2\gamma\sqrt{2\mathfrak L}})^2}.\label{eq:tightened-momentum-optimal-rate}
\end{align}
\end{proposition}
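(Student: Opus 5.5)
For fixed $\eta\in(0,1)$ the plan is to maximize $\lambda_{\varepsilon,\eta}$ over $\varepsilon>0$ by elementary monotonicity, and then to maximize the resulting closed form over $\eta$ by a one-variable calculus argument. Throughout, write $c:=\sqrt{\mathfrak L/2}$, so that $q=c\varepsilon$. By \eqref{eq:tightened-momentum-dissipation-definition} and \eqref{eq:tightened-momentum-A-B-definition},
\[
 \lambda_{\varepsilon,\eta}=\frac{\min\{\varepsilon A_\eta,\;2\gamma-\varepsilon B_\eta\}}{1+c\varepsilon}.
\]

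\textbf{Step 1 (fixed $\eta$).} The two competitors behave oppositely in $\varepsilon$.
\begin{itemize}
\item The first branch $\varepsilon A_\eta/(1+c\varepsilon)$ is strictly increasing, since its derivative is $A_\eta/(1+c\varepsilon)^2>0$.
\item The second branch $(2\gamma-\varepsilon B_\eta)/(1+c\varepsilon)$ is strictly decreasing, since its derivative has numerator $-B_\eta-2\gamma c<0$.
\end{itemize}
Hence the minimum is strictly increasing up to the crossing point and strictly decreasing after it. The unique maximizer is therefore the crossing $\varepsilon A_\eta=2\gamma-\varepsilon B_\eta$, which gives \eqref{eq:tightened-momentum-optimal-epsilon}. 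Substituting, and using $2\gamma c=\gamma\sqrt{2\mathfrak L}$, gives
\[
 \lambda_\eta=\frac{\varepsilon^\star_\eta A_\eta}{1+c\varepsilon^\star_\eta}=\frac{2\gamma A_\eta}{A_\eta+B_\eta+\gamma\sqrt{2\mathfrak L}},
\]
which is \eqref{eq:tightened-momentum-fixed-eta-rate}.

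\textbf{Step 2 (admissibility).} Proposition~\ref{prop:tightened} requires $q<1$ and $r_{\varepsilon,\eta}>0$ at $\varepsilon^\star_\eta$.
\begin{itemize}
\item Positivity is immediate: $r=\varepsilon^\star_\eta A_\eta>0$.
\item The condition $q^\star_\eta<1$ is equivalent to $\gamma\sqrt{2\mathfrak L}<A_\eta+B_\eta$. Since $\rho\ge\mathfrak L$ and $\eta<1$, we have $B_\eta\ge\mathfrak L+\gamma^2/2\ge\gamma\sqrt{2\mathfrak L}$ by AM--GM. Strictness then comes from $A_\eta>0$.
\end{itemize}

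\textbf{Step 3 (optimizing over $\eta$).} Put $C:=\rho+\gamma\sqrt{2\mathfrak L}$. Dividing numerator and denominator of $\lambda_\eta$ by $(1-\eta)\mathfrak K$ shows that $1/\lambda_\eta$ is an increasing affine function of
\[
 h(\eta):=\frac{C+\gamma^2/(2\eta)}{1-\eta}.
\]
So it suffices to minimize $h$ on $(0,1)$.
\begin{itemize}
\item $h\to\infty$ at both endpoints.
\item Setting $h'=0$ and multiplying by $2\eta^2$ gives the quadratic $2C\eta^2+2\gamma^2\eta-\gamma^2=0$.
\item This quadratic has exactly one root in $(0,1)$. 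With $s:=\sqrt{\gamma^2+2C}$ and after rationalizing, the root is $\eta_\star=\gamma/(\gamma+s)$, which is \eqref{eq:tightened-momentum-optimal-eta}.
\item Since $s>\gamma$, we get $\eta_\star\in(0,1/2)$.
\end{itemize}
A unique critical point of a function blowing up at both ends is its unique global minimizer.

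To evaluate $h(\eta_\star)$, use $1-\eta_\star=s/(\gamma+s)$ and $\gamma^2/(2\eta_\star)=\gamma(\gamma+s)/2$, together with $C=(s^2-\gamma^2)/2$. This yields
\[
 C+\frac{\gamma(\gamma+s)}{2}=\frac{s(s+\gamma)}{2},\qquad h(\eta_\star)=\frac{(\gamma+s)^2}{2}.
\]
Substituting back, $\lambda_\star=2\gamma\mathfrak K/\bigl(\mathfrak K+\tfrac12(\gamma+s)^2\bigr)$, which is \eqref{eq:tightened-momentum-optimal-rate}.

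The main care point is Step 2: verifying $q^\star_\eta<1$ uniformly in $\eta$, which needs both $\rho\ge\mathfrak L$ and the AM--GM bound. The remaining steps are routine algebra.
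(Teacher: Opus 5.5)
Your proposal is correct and follows the paper's proof essentially step for step. Both arguments use the crossing of the increasing and decreasing branches at fixed $\eta$, admissibility of $q_\eta^\star<1$ via $B_\eta\ge\gamma\sqrt{2\mathfrak L}$, and the reduction to minimizing $(\rho+\gamma\sqrt{2\mathfrak L}+\gamma^2/(2\eta))/(1-\eta)$ through the same quadratic critical-point equation. The only cosmetic differences are that you apply AM--GM to $\mathfrak L+\gamma^2/2$ directly rather than passing through $\gamma\sqrt{2\rho}$, and that you justify uniqueness by blow-up at both endpoints rather than by the sign pattern of the derivative.
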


\begin{proof}
At fixed $\eta$, the rate is the minimum of
$\varepsilon A_\eta/(1+\varepsilon\sqrt{\mathfrak L/2})$ and
$(2\gamma-\varepsilon B_\eta)/(1+\varepsilon\sqrt{\mathfrak L/2})$.
The first is strictly increasing and the second strictly decreasing,
so their intersection gives the unique optimum.
It is admissible because
$B_\eta>\rho+\gamma^2/2\ge\gamma\sqrt{2\rho}
\ge\gamma\sqrt{2\mathfrak L}$, hence $q_\eta^\star<1$;
both dissipation terms there equal $\varepsilon_\eta^\star A_\eta>0$.

For the remaining optimization, set
$\widetilde\rho:=\rho+\gamma\sqrt{2\mathfrak L}$ and divide numerator
and denominator of \eqref{eq:tightened-momentum-fixed-eta-rate} by $1-\eta$.
Maximizing the rate is equivalent to minimizing
$(\widetilde\rho+\gamma^2/(2\eta))/(1-\eta)$.
Its derivative has the sign of
$\widetilde\rho\eta^2+\gamma^2\eta-\gamma^2/2$, which increases strictly
and vanishes only at \eqref{eq:tightened-momentum-optimal-eta}.
The quotient there is
$\frac12(\gamma+\sqrt{\gamma^2+2\widetilde\rho})^2$, proving the formula.
\end{proof}

The numerical figures use the simpler admissible choice
\begin{equation}\label{eq:tightened-momentum-eta-approx}
 \eta_{\mathrm{approx}}
 :=\frac{\gamma}{\gamma+\sqrt{\gamma^2+2\rho}}\in(0,1/2).
\end{equation}
It minimizes $B_\eta/A_\eta$: differentiation gives
$\rho\eta^2+\gamma^2\eta-\gamma^2/2=0$.
Thus it optimizes the dissipation rate before accounting for the
equivalence factor $1+q_\eta^\star$. That factor is retained in the
following rigorous bound.

\begin{corollary}[Explicit rate used in the figures]
\label{cor:explicit-tightened-conditional-momentum-rate}\label{cor:tightened-choices}
Under Proposition~\ref{prop:tightened}, use
$\eta_{\mathrm{approx}}$ from \eqref{eq:tightened-momentum-eta-approx} and set
\begin{align}
 A_{\mathrm{approx}}&:=(1-\eta_{\mathrm{approx}})\mathfrak K,&
 B_{\mathrm{approx}}&:=\rho+\frac{\gamma^2}{2\eta_{\mathrm{approx}}},\label{eq:tightened-momentum-explicit-A-B}\\
 \varepsilon_{\mathrm{approx}}&:=\frac{2\gamma}{A_{\mathrm{approx}}+B_{\mathrm{approx}}},&
 q_{\mathrm{approx}}&:=\varepsilon_{\mathrm{approx}}\sqrt{\mathfrak L/2}.\label{eq:tightened-momentum-explicit-epsilon-q}
\end{align}
These parameters are admissible. With
$\mathcal V_0=\cE_0+\varepsilon_{\mathrm{approx}}\dot\Delta_0$,
\begin{align}
 \lambda_{\mathrm{approx}}
 &:=\frac{\varepsilon_{\mathrm{approx}}A_{\mathrm{approx}}}
         {1+q_{\mathrm{approx}}},\label{eq:tightened-momentum-explicit-rate}\\
 \ell(\Svu_t)-\ell_\star
 &\le\cE_t\le\frac{\mathcal V_0}{1-q_{\mathrm{approx}}}
       e^{-\lambda_{\mathrm{approx}}t},\qquad 0\le t\le T.\label{eq:tightened-momentum-explicit-loss-rate}
\end{align}
If $\dot\Delta_0=0$, the prefactor is $\cE_0/(1-q_{\mathrm{approx}})$.
\end{corollary}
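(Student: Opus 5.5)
The plan is to obtain this corollary as a direct specialization of Proposition~\ref{prop:tightened}, taking $\eta=\eta_{\mathrm{approx}}$ and $\varepsilon=\varepsilon_{\mathrm{approx}}$. That requires checking the two admissibility hypotheses $q<1$ and $r_{\varepsilon,\eta}>0$, and then identifying the resulting rate $\lambda_{\varepsilon,\eta}$ with $\lambda_{\mathrm{approx}}$. The argument should mirror the fixed-$\eta$ half of the proof of Proposition~\ref{prop:optimal-tightened-conditional-momentum-rate}, now evaluated at the particular value $\eta_{\mathrm{approx}}$ instead of the optimizer.

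First I will check $\eta_{\mathrm{approx}}\in(0,1/2)$. Since $\rho>0$ (indeed $\rho\ge\mathfrak L=4bL>0$), we have $\sqrt{\gamma^2+2\rho}>\gamma>0$. So $\eta_{\mathrm{approx}}\in(0,1)$ is a legitimate Young parameter, $A_{\mathrm{approx}}=(1-\eta_{\mathrm{approx}})\mathfrak K>0$, and $\varepsilon_{\mathrm{approx}}>0$.

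Next I will compute the dissipation coefficient. By definition $\varepsilon_{\mathrm{approx}}(A_{\mathrm{approx}}+B_{\mathrm{approx}})=2\gamma$, so
\[
2\gamma-\varepsilon_{\mathrm{approx}}B_{\mathrm{approx}}=\varepsilon_{\mathrm{approx}}A_{\mathrm{approx}} .
\]
Hence both entries of the minimum in \eqref{eq:tightened-momentum-dissipation-definition} coincide, giving $r_{\varepsilon,\eta}=\varepsilon_{\mathrm{approx}}A_{\mathrm{approx}}>0$. Consequently $\lambda_{\varepsilon,\eta}=r_{\varepsilon,\eta}/(1+q)$ is exactly \eqref{eq:tightened-momentum-explicit-rate}.

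The one step with genuine content is $q_{\mathrm{approx}}<1$. I will use $\eta_{\mathrm{approx}}<1/2$ to get $B_{\mathrm{approx}}>\rho+\gamma^2/2$. By AM--GM, $\rho+\gamma^2/2\ge\gamma\sqrt{2\rho}$, and since $\rho\ge\mathfrak L$ this is at least $\gamma\sqrt{2\mathfrak L}$. It follows that
\[
q_{\mathrm{approx}}=\frac{2\gamma\sqrt{\mathfrak L/2}}{A_{\mathrm{approx}}+B_{\mathrm{approx}}}<\frac{\gamma\sqrt{2\mathfrak L}}{B_{\mathrm{approx}}}<1 .
\]

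With admissibility established, \eqref{eq:tightened-conditional-momentum-loss-rate} yields \eqref{eq:tightened-momentum-explicit-loss-rate} on $[0,T]$, with prefactor $\mathcal V_0/(1-q_{\mathrm{approx}})$ and $\mathcal V_0=\cE_0+\varepsilon_{\mathrm{approx}}\dot\Delta_0$. When $\dot\Delta_0=0$ we have $\mathcal V_0=\cE_0$, which gives the stated simplified prefactor $\cE_0/(1-q_{\mathrm{approx}})$.
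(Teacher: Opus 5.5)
Your proposal is correct and follows the paper's route. The paper's proof notes that the fixed-$\eta$ choice $\varepsilon=2\gamma/(A_\eta+B_\eta)$ from Proposition~\ref{prop:optimal-tightened-conditional-momentum-rate} is admissible for every $\eta\in(0,1)$, via $B_\eta>\rho+\gamma^2/2\ge\gamma\sqrt{2\rho}\ge\gamma\sqrt{2\mathfrak L}$ together with equality of the two dissipation terms, and then applies Proposition~\ref{prop:tightened} at $\eta=\eta_{\mathrm{approx}}$; you simply re-derive that admissibility explicitly at $\eta_{\mathrm{approx}}$, where $\eta<1$ would already suffice for the bound on $B_{\mathrm{approx}}$.
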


\begin{proof}
The fixed-$\eta$ optimum in
Proposition~\ref{prop:optimal-tightened-conditional-momentum-rate}
is admissible for every $\eta\in(0,1)$.
Apply Proposition~\ref{prop:tightened} with $\eta=\eta_{\mathrm{approx}}$.
\end{proof}

The approximate choice concerns only optimization: its bound is exact,
and $\lambda_{\mathrm{approx}}\le\lambda_\star$. Neither exponent is
claimed to be the optimal rate of the underlying training dynamics.

\begin{corollary}[Global bootstrap with the correct prefactor]
\label{cor:tightened-bootstrap}
Under the loss and initialization hypotheses of Theorem~\ref{thm:momentum},
choose a tube $a_s:=a_0e^{-2s}$, $b_s:=b_0e^{2s}$ and an admissible
$(\varepsilon,\eta)$ for its constants. Put
\[
 \mathfrak C_0:=\frac{\cE_0+\varepsilon\dot\Delta_0}{(1-q)\cE_0}.
\]
If $\cE_0>0$ and
\begin{equation}\label{eq:tightened-bootstrap}
 \frac2{\lambda_{\varepsilon,\eta}}
 \sqrt{\frac{2\mathfrak C_0\cE_0}{a_s}}<s,
\end{equation}
then the tube and \eqref{eq:tightened-conditional-momentum-loss-rate} hold globally.
A sufficient condition using only initial position bounds and total
energy replaces $\mathfrak C_0$ by $(1+q)/(1-q)$.
\end{corollary}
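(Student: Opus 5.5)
The plan is to rerun the bootstrap of Theorem~\ref{thm:momentum}, using the tightened Lyapunov bound of Proposition~\ref{prop:tightened} in place of Lemma~\ref{lem:conditional}. Global existence of the phase flow comes from Proposition~\ref{prop:global-phase}. On the connected chart where $A_t=\Spos_t\succ0$, put $h(t):=\int_0^t\norm{D_r}_{\op}\,\dd r$. The congruence $A_t=Z_tA_0Z_t^\top$ with $\dot Z=DZ$ then gives $a_0e^{-2h}\Id\preceq A_t\preceq b_0e^{2h}\Id$. Consequently, on any interval $[0,T]$ with $h\le s$, the tube $a_s\Id\preceq A_t\preceq b_s\Id$ holds and Proposition~\ref{prop:tightened} applies with the fixed admissible $(\varepsilon,\eta)$. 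Here $\mathfrak K,\mathfrak L,\rho$ are the constants of that tube and $\cE_0$ is the total initial energy.

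On such an interval, \eqref{eq:tightened-conditional-momentum-loss-rate} gives the kinetic bound
\[
\mathcal K_t\le\cE_t\le\frac{\mathcal V_0}{1-q}e^{-\lambda_{\varepsilon,\eta}t}=\mathfrak C_0\cE_0e^{-\lambda_{\varepsilon,\eta}t}.
\]
Combining this with \eqref{eq:regression-bound-main}, $\norm{D_t}_{\op}\le\sqrt{2\mathcal K_t/a_s}$, and integrating the exponential yields
\[
h(t)\le\sqrt{\frac{2\mathfrak C_0\cE_0}{a_s}}\int_0^\infty e^{-\lambda_{\varepsilon,\eta}r/2}\,\dd r=\frac2{\lambda_{\varepsilon,\eta}}\sqrt{\frac{2\mathfrak C_0\cE_0}{a_s}}.
\]
By \eqref{eq:tightened-bootstrap}, this bound is strictly smaller than $s$.

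A continuity argument finishes the bootstrap. Let $T^\ast$ be the supremum of times at which the chart persists and $h\le s$. Suppose $T^\ast<\infty$. Then $h(T^\ast)<s$ by the bound above. Moreover, $A\succeq a_s\Id$ on $[0,T^\ast)$, and the phase blocks are bounded on finite horizons, so both the chart and the condition $h<s$ extend past $T^\ast$. This contradicts the definition of $T^\ast$, so the tube holds for all $t\ge0$. Proposition~\ref{prop:tightened} then applies on every $[0,T]$, which gives \eqref{eq:tightened-conditional-momentum-loss-rate} globally.

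One point needs care: admissibility of $(\varepsilon,\eta)$ refers to tube constants that are fixed in advance. This is consistent, because the constants depend only on $s$, $a_0$ and $b_0$, not on $T$.

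For the sufficient condition, the equivalence \eqref{eq:tightened-momentum-lyapunov-equivalence} at $t=0$ gives $\mathcal V_0\le(1+q)\cE_0$. Hence $\mathfrak C_0\le(1+q)/(1-q)$, and the left side of \eqref{eq:tightened-bootstrap} only increases under this replacement. This replacement uses only the initial position bounds and the total energy.

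The main obstacle is the circularity of the bootstrap: the decay estimate needs the tube, while the tube needs the decay estimate. The step that breaks this is showing that $h(T^\ast)$ falls strictly below $s$, together with ruling out an earlier exit from the chart $A\succ0$, exactly as in the proof of Theorem~\ref{thm:momentum}.
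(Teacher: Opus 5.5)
Your proposal is correct and follows the paper's proof. Like the paper, you bound $\int_0^t\|D_r\|\,\mathrm{d}r$ using \eqref{eq:regression-bound-main} and the tightened decay $\mathcal{E}_t\le\mathfrak C_0\mathcal{E}_0e^{-\lambda_{\varepsilon,\eta}t}$ from Proposition~\ref{prop:tightened}, which gives exactly the left side of \eqref{eq:tightened-bootstrap}; you then rerun the strict first-exit argument of Theorem~\ref{thm:momentum} and use the Lyapunov equivalence at $t=0$ to get $\mathfrak C_0\le(1+q)/(1-q)$.
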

\begin{proof}
Within the initial connected positive-position chart,
$\int_0^t\norm D_{\op}\le\int_0^t\sqrt{2\cE_r/a_s}\,\dd r$.
Proposition~\ref{prop:tightened} bounds this by the left side of
\eqref{eq:tightened-bootstrap}. The strict first-exit proof of
Theorem~\ref{thm:momentum} then applies unchanged. Initial positivity and
the Lyapunov equivalence imply $1\le \mathfrak C_0\le(1+q)/(1-q)$.
\end{proof}

\paragraph{The role of the prefactor}
A faster conditional exponent must be accompanied by its own prefactor
when testing the global deformation budget.
The sharper state-dependent prefactor uses the initial loss derivative
$\dot\Delta_0=\langle E_0,\Sigma^{qu}_0+\Sigma^{vp}_0\rangle$
as well as total energy. In particular, zero predictor speed gives
$\mathfrak C_0=1/(1-q)$. The parameter-only prefactor uses no extra initial derivative.
Maximizing the conditional exponent alone need not minimize the global
budget: the same choice of parameters also changes $q$ and hence the
prefactor. The closed-form optimizer should therefore be tested in
\eqref{eq:tightened-bootstrap}, not substituted there with an unchanged
prefactor.

\subsection{Large-position-scale certificate asymptotics}
\label{app:large-position-scale-certificate}\label{app:large-position-scale}
\label{subsec:large-position-scale-certificate}

Increasing position variance can open a damping window without reducing
the initial loss gap. We quantify that tradeoff and the best rates
certified by the same bootstrap.
For the two-scale white initialization of
Corollary~\ref{cor:two-scale-white-window}, the initial mechanical excess
energy is
\[
\cE_0
=
\ell(0)-\ell_\star+\frac{m\sigma_{qp}^2}{2}.
\]
This quantity is fixed when \(\sigma_{qp}\) is fixed and
\(\sigma_{vu}\to+\infty\).  It is useful to name the bootstrap gap
\[
\mathfrak G_s(\gamma):=\mathfrak S_s(\gamma)-s,
\qquad
s_{\mathrm{cert}}(\gamma)
:=
\inf\{s>0:\mathfrak G_s(\gamma)<0\}.
\]
Whenever this set is nonempty, define the corresponding boundary rate by
\[
\lambda_{\mathrm{cert}}(\gamma)
:=
\lambda_{s_{\mathrm{cert}}(\gamma)}(\gamma),
\]
using continuity in \(s\).  The strict bootstrap condition need not hold at
the boundary itself: the theorem applies at interior budgets
\(s>s_{\mathrm{cert}}\) with \(\mathfrak G_s(\gamma)<0\), whose rates approach
this boundary value.  Thus \(\lambda_{\mathrm{cert}}\) is a limiting supremal
certificate, rather than a rate asserted at a possibly nonadmissible boundary
point.
Define also the largest damping for which the bootstrap window is nonempty:
\[
\gamma^{\mathrm{cert}}_{\max}(\sigma_{vu})
:=
\sup\{\gamma>0:\text{ there exists }s>0
\text{ such that }\mathfrak G_s(\gamma)<0\}.
\]

The assumption \(\cE_0>0\) below excludes only the trivial zero-excess
case, in which both the predictor loss gap and kinetic energy vanish and the
trajectory is stationary.

\paragraph{Explicit damping interval and feasibility}
At a fixed deformation budget, admissibility reduces to a quadratic
inequality, making both damping boundaries explicit.
For \(\cE_0>0\) and fixed deformation budget \(s>0\), write
\begin{equation}\label{eq:window}
 c_s:=\frac{s\kappa a_s^{3/2}}{4\sqrt{3\cE_0/2}}.
\end{equation}
The theorem's strict certificate is exactly
\(2\gamma^2-c_s\gamma+\rho_s<0\), or
\begin{equation}\label{eq:window-roots}
 c_s^2>8\rho_s,\qquad
 \frac{c_s-\sqrt{c_s^2-8\rho_s}}4<\gamma<
 \frac{c_s+\sqrt{c_s^2-8\rho_s}}4.
\end{equation}
These are sufficient boundaries, not instability thresholds.

\paragraph{Optimizing feasibility}
To make this interval nonempty as easily as possible, maximize
\begin{equation}\label{eq:budget}
 \frac{c_s^2}{8\rho_s}
 =\frac{s^2\kappa^2a_0^3e^{-6s}}
 {192\cE_0(4b_0Le^{2s}+2\sqrt{2L\cE_0})}.
\end{equation}
Its unique maximizer $s_{\rm feas}$ belongs to $(1/4,1/3)$ and satisfies
\begin{equation}\label{eq:best-budget}
 \frac1{s_{\rm feas}}
 =3+\frac{4b_0Le^{2s_{\rm feas}}}
 {4b_0Le^{2s_{\rm feas}}+2\sqrt{2L\cE_0}}.
\end{equation}
For a proof, write the denominator term as $Ae^{2s}+B$, with $A,B>0$.
The logarithmic derivative is $2/s-6-2Ae^{2s}/(Ae^{2s}+B)$, and its
derivative is $-2/s^2-4ABe^{2s}/(Ae^{2s}+B)^2<0$.
It changes sign between $1/4$ and $1/3$. The resulting scalar equation
is explicit and uniquely solvable; no trajectory has to be computed.
This budget optimizes \emph{nonemptiness}, not the upper damping endpoint
or the decay exponent. Those are different optimization questions.

\paragraph{Asymptotic certificate bounds}
The large-position limit distinguishes fixed-damping behavior from the
largest damping that the certificate can accommodate.
\begin{proposition}[Large-position-scale certificate asymptotics]
\label{prop:large-position-scale-certificate}\label{prop:large-position}
For the two-scale white initialization above, fix \(\sigma_{qp}\ge0\), assume
\(\cE_0>0\), and
let \(\sigma_{vu}\to+\infty\).  For each fixed \(\gamma>0\) and fixed
\(s>0\),
\[
\mathfrak S_s(\gamma)
=
\frac{16Le^{5s}}{\kappa\gamma\,\sigma_{vu}}
\sqrt{\frac{3\cE_0}{2}}
+O(\sigma_{vu}^{-3}).
\]
The expansion is uniform for \(s\) in bounded sets.  Consequently,
\[
s_{\mathrm{cert}}(\gamma)
=
\frac{16L}{\kappa\gamma\,\sigma_{vu}}
\sqrt{\frac{3\cE_0}{2}}
+o(\sigma_{vu}^{-1}),
\]
and
\[
\lambda_{\mathrm{cert}}(\gamma)
=
\frac{\kappa\gamma}{4L}
\;-\;
\frac{16}{\sigma_{vu}}\sqrt{\frac{3\cE_0}{2}}
+o(\sigma_{vu}^{-1}).
\]
Moreover, the upper damping boundary satisfies
\[
\gamma^{\mathrm{cert}}_{\max}(\sigma_{vu})
=
\frac{\kappa\sigma_{vu}^3}
{24e\sqrt{3\cE_0/2}}
\bigl(1+o(1)\bigr).
\]
\end{proposition}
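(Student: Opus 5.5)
The proof rests on one observation. For the two-scale white initialization, $a_s=\sigma_{vu}^2e^{-2s}$, $b_s=\sigma_{vu}^2e^{2s}$ and $\cE_0$ does not depend on $\sigma_{vu}$. Hence every certificate quantity in \eqref{eq:certificate} is an explicit elementary function of $(s,\gamma,\sigma_{vu})$, and no dynamics need to be analysed. Substituting $\lambda_s(\gamma)=a_s\kappa\gamma/(\rho_s+2\gamma^2)$ into $\mathfrak S_s$ gives
\[
 \mathfrak S_s(\gamma)=\frac{4(\rho_s+2\gamma^2)}{\kappa\gamma\,a_s^{3/2}}\sqrt{\tfrac{3\cE_0}{2}},\qquad
 \frac{\rho_s+2\gamma^2}{a_s^{3/2}}=\frac{4Le^{5s}}{\sigma_{vu}}+\frac{(2\sqrt{2L\cE_0}+2\gamma^2)e^{3s}}{\sigma_{vu}^{3}}.
\]
This yields the first expansion, with an $O(\sigma_{vu}^{-3})$ remainder that is uniform for $s$ in bounded sets, since $e^{3s}$ is bounded there. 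Write $C:=\frac{16L}{\kappa\gamma}\sqrt{3\cE_0/2}$.

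For $s_{\mathrm{cert}}$ I will use a two-sided squeeze rather than an implicit-function argument. Lower bound: both displayed terms are positive and $e^{5s}\ge1$, so $\mathfrak S_s(\gamma)\ge C/\sigma_{vu}$ for every $s>0$. Hence $\mathfrak G_s(\gamma)\ge0$ whenever $s\le C/\sigma_{vu}$, and $s_{\mathrm{cert}}\ge C/\sigma_{vu}$. Upper bound: fix $\delta>0$ and take $s=(1+\delta)C/\sigma_{vu}$. This $s$ is bounded, so the uniform expansion gives $\mathfrak S_s=C e^{5s}/\sigma_{vu}+O(\sigma_{vu}^{-3})=C/\sigma_{vu}+O(\sigma_{vu}^{-2})$, which is below $s$ for large $\sigma_{vu}$. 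Therefore $s_{\mathrm{cert}}=C\sigma_{vu}^{-1}(1+o(1))$. For the rate, write
\[
\lambda_s(\gamma)=\frac{\kappa\gamma e^{-4s}}{4L}\bigl(1+O(\sigma_{vu}^{-2})\bigr)
\]
uniformly in bounded $s$. Evaluating at $s=s_{\mathrm{cert}}=O(\sigma_{vu}^{-1})$ and expanding $e^{-4s}=1-4s+O(s^2)$ gives $\kappa\gamma/(4L)-\kappa\gamma C/(L\sigma_{vu})+o(\sigma_{vu}^{-1})$. Substituting $C$ turns the correction term into $-16\sigma_{vu}^{-1}\sqrt{3\cE_0/2}$. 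Continuity of $\lambda_s$ in $s$ justifies the boundary definition of $\lambda_{\mathrm{cert}}$.

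For $\gamma^{\mathrm{cert}}_{\max}$ I will use the explicit window \eqref{eq:window-roots}. The window at budget $s$ is nonempty exactly when $c_s^2>8\rho_s$, and its upper endpoint then satisfies
\[
\tfrac14\bigl(c_s+\sqrt{c_s^2-8\rho_s}\bigr)\le c_s/2 .
\]
Since $c_s=s\kappa\sigma_{vu}^3e^{-3s}/(4\sqrt{3\cE_0/2})$ and $\max_{s>0}se^{-3s}=1/(3e)$, attained at $s=1/3$, this gives $\gamma^{\mathrm{cert}}_{\max}\le\kappa\sigma_{vu}^3/(24e\sqrt{3\cE_0/2})$ for every $\sigma_{vu}$. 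For the matching lower bound I fix $s=1/3$. There $\rho_{1/3}=O(\sigma_{vu}^2)$ while $c_{1/3}^2\asymp\sigma_{vu}^6$, so $\sqrt{c^2-8\rho}=c\,(1-O(\sigma_{vu}^{-4}))$. The window at $s=1/3$ is therefore nonempty, and its upper endpoint is $\frac{c_{1/3}}{2}(1-o(1))$. Since $\gamma^{\mathrm{cert}}_{\max}$ is a supremum over an open window, this endpoint is approached but not attained, which is harmless.

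The step I expect to need the most care is the $s_{\mathrm{cert}}$ and $\lambda_{\mathrm{cert}}$ analysis. The difficulty is that the infimum over $s$ ranges over an unbounded set while the expansion is only uniform on bounded sets. The global lower bound $\mathfrak S_s\ge C/\sigma_{vu}$, which holds for every $s>0$, is what makes the infimum tractable. I will check that the $o(\sigma_{vu}^{-1})$ in $s_{\mathrm{cert}}$ propagates only as $o(\sigma_{vu}^{-1})$ into $\lambda_{\mathrm{cert}}$; this holds because $\partial_s\lambda_s$ is bounded near $s=0$.
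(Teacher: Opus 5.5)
Your proposal is correct and follows essentially the same route as the paper. Both arguments expand the explicit certificate in $\sigma_{vu}$ and evaluate $\lambda_s=\frac{\kappa\gamma}{4L}e^{-4s}(1+O(\sigma_{vu}^{-2}))$ at $s_{\mathrm{cert}}$. For $\gamma^{\mathrm{cert}}_{\max}$, both drop $\rho_s$ to obtain the uniform bound $\gamma<c_s/2\le\kappa\sigma_{vu}^3/(24e\sqrt{3\cE_0/2})$, and both use the budget $s=1/3$ for the matching lower bound.

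Your deviations are minor refinements rather than a different method. You use the exact identity for $\mathfrak S_s$ in place of an $O(\cdot)$ expansion. You also use the global bound $\mathfrak S_s\ge C/\sigma_{vu}$, which gives a non-asymptotic inequality $s_{\mathrm{cert}}\ge C/\sigma_{vu}$; the paper instead argues via the rescaling $s=c/\sigma_{vu}$ with $c$ in compact sets.
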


\begin{proof}
For fixed $\gamma>0$, the exact expression \eqref{eq:certificate}
can be expanded directly. Since
\[
\rho_s=4L\sigma_{vu}^2e^{2s}
+2\sqrt{2L\cE_0},
\]
we have
\[
\lambda_s(\gamma)
=
\frac{\kappa\gamma}{4Le^{4s}}
+O(\sigma_{vu}^{-2}).
\]
Substituting this into the definition of \(\mathfrak S_s(\gamma)\), with
\(a_s=\sigma_{vu}^2e^{-2s}\), gives the first expansion.
Write
\[
c_0:=\frac{16L}{\kappa\gamma}
\sqrt{\frac{3\cE_0}{2}}.
\]
The expansion is uniform near \(s=0\).  Thus every
\(s=c/\sigma_{vu}\), with \(c>0\) fixed, satisfies
\[
\mathfrak G_s(\gamma)
=\frac{c_0-c}{\sigma_{vu}}+o(\sigma_{vu}^{-1}),
\]
uniformly when \(c\) ranges over a compact set.  Hence budgets whose leading
constant is below \(c_0\) are inadmissible, whereas those whose leading
constant is strictly above \(c_0\) are admissible for all sufficiently large
\(\sigma_{vu}\).  This proves the displayed expansion for
\(s_{\mathrm{cert}}(\gamma)\).  Evaluating
\[
\lambda_s(\gamma)
=
\frac{\kappa\gamma}{4L}e^{-4s}
+O(\sigma_{vu}^{-2})
\]
at \(s=s_{\mathrm{cert}}(\gamma)\) gives the expansion for
\(\lambda_{\mathrm{cert}}(\gamma)\).

It remains to identify the high-damping boundary.  First observe the exact
upper bound
\[
\lambda_s(\gamma)
=\frac{a_s\kappa\gamma}{\rho_s+2\gamma^2}
\le\frac{\kappa\sigma_{vu}^2e^{-2s}}{2\gamma}.
\]
Consequently every admissible pair satisfies
\[
\mathfrak S_s(\gamma)
\ge
\frac{8\gamma e^{3s}}{\kappa\sigma_{vu}^3}
\sqrt{\frac{3\cE_0}{2}},
\]
and hence
\[
\gamma
<
\frac{\kappa\sigma_{vu}^3}{8\sqrt{3\cE_0/2}}\,s e^{-3s}
\le
\frac{\kappa\sigma_{vu}^3}{24e\sqrt{3\cE_0/2}}.
\]
This proves the required asymptotic upper bound uniformly over all budgets,
not only over bounded \(s\).

For the matching lower bound, take \(\gamma\) of order
\(\sigma_{vu}^3\) and keep \(s\) in a compact subset of \((0,+\infty)\).
Then $2\gamma^2$ dominates $\rho_s$ in \eqref{eq:certificate}, and
\[
\lambda_s(\gamma)
=
\frac{\kappa\sigma_{vu}^2e^{-2s}}{2\gamma}
\bigl(1+o(1)\bigr).
\]
Consequently,
\[
\mathfrak S_s(\gamma)
=
\frac{8\gamma e^{3s}}{\kappa\sigma_{vu}^3}
\sqrt{\frac{3\cE_0}{2}}
\bigl(1+o(1)\bigr),
\]
so the condition \(\mathfrak S_s(\gamma)<s\) is asymptotically equivalent to
\[
\gamma
<
\frac{\kappa\sigma_{vu}^3}{8\sqrt{3\cE_0/2}}\,
s e^{-3s}.
\]
The right-hand side is maximized at \(s=1/3\).  Choosing this budget and any
fixed multiplicative factor below the maximum gives an admissible pair for all
sufficiently large \(\sigma_{vu}\), proving the matching lower bound and the
claimed asymptotic formula.
\end{proof}

Proposition~\ref{prop:large-position-scale-certificate} has two complementary
messages.  At fixed damping, increasing the initial position scale makes the
certified ellipticity deformation vanish and the certified rate converges to
\(\kappa\gamma/(4L)\).  At fixed \(\sigma_{vu}\), however, this
particular bootstrap theorem cannot certify arbitrarily large damping: the
admissible region in \((s,\gamma)\) eventually disappears, with the upper
edge scaling like \(\sigma_{vu}^3\).  This is a limitation of the certificate,
not a claim that the underlying damped dynamics fail in the large-damping
regime.

\begin{figure}[t]
  \centering
  \def\mathdefault#1{#1}
  \subimport{figures/}{momentum_bootstrap_lambda_multi_sigma.pgf}
\caption{
Limiting bootstrap rate at the admissibility boundary as a function of $\gamma$ for several initial position
scales $\sigma_{vu}$.
Solid curves are obtained by grid-searching the bootstrap certificate
for the boundary value \(\lambda_{s_{\mathrm{cert}}}(\gamma)\); rates at
strictly admissible nearby budgets approach these curves.  The dotted black curve is
the large-\(\sigma_{vu}\) limiting rate \(\kappa\gamma/(4L)\).  The
plot uses \((d_u, d_v) = (3, 2)\), \(\kappa=L=1\), \(\sigma_{qp}=1\), and
\(\Delta_0=\ell(0)-\ell_\star=1\).
For scales whose upper damping boundary lies within the scanned range, markers show numerical and asymptotic approximations to the maximal admissible damping and its corresponding boundary rate.
}
\label{fig:momentum-bootstrap-lambda-multi-sigma}\label{fig:momentum-boundary}
\end{figure}

\FloatBarrier


\clearpage

\section{Counting covariance-level mean-field laws}
\label{app:law-counting}

This appendix records the Lie-algebra computation behind the covariance-level
completeness statement used in Section~\ref{sec:laws}.
It is a continuum, covariance-level analogue of the local/generic finite-width
completeness theorem of \citet{MarcotteGribonvalPeyre2023}.

Width-consistent laws and their Wasserstein interpretation are defined
in Appendix~\ref{app:mean-field}. We work here directly with functions
of the covariance and the vector fields that must annihilate them.

\subsection{Split Bures vector fields}
\label{subsec:split-bures-fields}

To characterize the laws shared by the architecture, we allow every
smooth split loss:
\[
F(\Sigma):=\ell(\Svu),
\qquad
\Sigma=
\begin{pmatrix}
\Suu&\Suv\\
\Svu&\Svv
\end{pmatrix}.
\]
If \(E:=\nabla \ell(\Svu)\), set
\[
G_E:=
\begin{pmatrix}
0&E^\top\\
E&0
\end{pmatrix}\in\mathfrak p_J,
\qquad
\mathfrak p_J:=\{G\in\Ssym^m:GJ+JG=0\}.
\]
The corresponding Bures vector field is
\begin{equation}
\label{eq:appendix-split-vector-field}
X_\ell(\Sigma):=-(G_{\nabla \ell(\Svu)}\Sigma+\Sigma G_{\nabla \ell(\Svu)}).
\end{equation}
We denote the resulting space of split Bures vector fields by
\[
\mathcal X_{\rm split}
:=
\{X_\ell:\ell\in C^\infty(\R^{d_v\times d_u})\}.
\]
It is a real vector space.  Its constant-gradient subspace is obtained from
linear losses \(\ell_E(\Svu):=\ip{E}{\Svu}\):
\[
\mathcal X_{\rm lin}
:=
\{X_E:E\in\R^{d_v\times d_u}\},
\qquad
X_E(\Sigma):=-(G_E\Sigma+\Sigma G_E).
\]

\subsection{Generated Lie algebra}

Lie brackets reveal the directions generated by all split losses,
which determine how many independent common laws can remain.
Let
\[
\mathfrak o(J):=\{A\in\R^{m\times m}:A^\top J+JA=0\}
\]
be the Lie algebra of \(O(d_u,d_v)\).  For \(A\in\mathfrak o(J)\), define the
infinitesimal congruence field
\[
\xi_A(\Sigma):=-(A\Sigma+\Sigma A^\top).
\]
Its flow is \(\Sigma(t)=e^{-tA}\Sigma(0)e^{-tA^\top}\), so it is tangent to
the \(O(d_u,d_v)\)-congruence orbit of \(\Sigma\).  When \(A=G_E\in\mathfrak p_J\),
\(\xi_A=X_E\).

\begin{proposition}[Split fields generate \(\mathfrak o(d_u,d_v)\)]
\label{prop:appendix-lie-generated}
With the bracket convention \([X,Y]:=DY[X]-DX[Y]\), the map
\[
A\longmapsto \xi_A
\]
is a Lie-algebra homomorphism:
\[
[\xi_A,\xi_B]=\xi_{[A,B]}.
\]
Moreover,
\[
\operatorname{Lie}(\mathfrak p_J)=\mathfrak o(J).
\]
Consequently the Lie algebra generated by
\(\mathcal X_{\rm lin}\), the constant-gradient split Bures fields, is
\[
\{\xi_A:A\in\mathfrak o(J)\}.
\]
\end{proposition}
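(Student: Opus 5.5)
The proof has three parts: the homomorphism identity by direct differentiation, the identification $\operatorname{Lie}(\mathfrak p_J)=\mathfrak o(J)$ through the Cartan decomposition, and the transfer to vector fields via the homomorphism.

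\textbf{Homomorphism identity.} Since $\xi_A$ is linear in $\Sigma$, its derivative is $D\xi_A[H]=-(AH+HA^\top)$. With the convention $[X,Y]=DY[X]-DX[Y]$ this gives
\[
D\xi_B[\xi_A(\Sigma)]=B(A\Sigma+\Sigma A^\top)+(A\Sigma+\Sigma A^\top)B^\top .
\]
Subtracting the same expression with $A$ and $B$ exchanged, the mixed terms $A\Sigma B^\top$ and $B\Sigma A^\top$ cancel. What remains is
\[
(BA-AB)\Sigma+\Sigma(A^\top B^\top-B^\top A^\top)=-\bigl([A,B]\Sigma+\Sigma[A,B]^\top\bigr),
\]
using $[A,B]^\top=B^\top A^\top-A^\top B^\top$. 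This equals $\xi_{[A,B]}(\Sigma)$. The only thing to watch is the sign convention.

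\textbf{Cartan decomposition.} Write $\mathfrak o(J)=\mathfrak k\oplus\mathfrak p_J$, where $\mathfrak k$ is the block-diagonal skew part $\mathfrak o(d_u)\oplus\mathfrak o(d_v)$ and $\mathfrak p_J$ is the symmetric off-diagonal part. This splitting holds because $A^\top J+JA=0$ forces the diagonal blocks to be skew and the off-diagonal blocks to satisfy $A_{uv}=A_{vu}^\top$. Two facts follow by block multiplication: $[\mathfrak p_J,\mathfrak p_J]\subseteq\mathfrak k$, and $\mathfrak o(J)$ is closed under brackets. Hence $\operatorname{Lie}(\mathfrak p_J)\subseteq\mathfrak o(J)$.

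For the reverse inclusion it suffices to show $[\mathfrak p_J,\mathfrak p_J]=\mathfrak k$. Compute $[G_E,G_F]=\diag(E^\top F-F^\top E,\;EF^\top-FE^\top)$ and take elementary matrices $E=e_ie_k^\top$, $F=e_je_k^\top$ with $i\neq j$. This gives $\diag(0,\;e_ie_j^\top-e_je_i^\top)$, which spans the $\mathfrak o(d_v)$ factor. Symmetrically, $E=e_ke_i^\top$, $F=e_ke_j^\top$ spans the $\mathfrak o(d_u)$ factor.

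This spanning step is the main obstacle: a full index check is needed, together with the degenerate cases. If $d_u=1$ or $d_v=1$, the corresponding $\mathfrak o(1)=0$ and nothing needs to be generated. If $d_u,d_v\ge1$, a spare index $k$ on the other side is always available, which is exactly what the constructions above require.

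\textbf{Transfer to vector fields.} The map $A\mapsto\xi_A$ is a linear homomorphism sending $\mathfrak p_J$ onto $\mathcal X_{\rm lin}$, since $X_E=\xi_{G_E}$ and $E\mapsto G_E$ is a bijection onto $\mathfrak p_J$. The Lie algebra generated by the image of a set is the image of the Lie algebra it generates, so $\operatorname{Lie}(\mathcal X_{\rm lin})=\{\xi_A:A\in\operatorname{Lie}(\mathfrak p_J)\}=\{\xi_A:A\in\mathfrak o(J)\}$.

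Injectivity of $A\mapsto\xi_A$ is not needed for this statement. If desired, it follows from evaluating at $\Sigma=\Id$, which gives $\xi_A(\Id)=-(A+A^\top)$, and then at rank-one $\Sigma$.
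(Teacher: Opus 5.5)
Your proof is correct and follows essentially the same route as the paper. You prove the homomorphism identity by differentiating the linear fields $\xi_A$, and you span $\mathfrak k_J$ by evaluating $[G_E,G_F]=\diag(E^\top F-F^\top E,\,EF^\top-FE^\top)$ on elementary matrices sharing a common row or column. You then conclude with $\mathfrak o(J)=\mathfrak k_J\oplus\mathfrak p_J$, and you additionally spell out the sign check and the transfer $\operatorname{Lie}(\mathcal X_{\rm lin})=\{\xi_A:A\in\operatorname{Lie}(\mathfrak p_J)\}$, both of which the paper leaves implicit.
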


\begin{proof}
The homomorphism identity follows by differentiating the linear vector fields
\(\xi_A\).  It remains to compute the matrix Lie algebra generated by
\(\mathfrak p_J\).  For
\[
G_E=
\begin{pmatrix}
0&E^\top\\
E&0
\end{pmatrix},
\qquad
G_F=
\begin{pmatrix}
0&F^\top\\
F&0
\end{pmatrix},
\]
one has
\[
[G_E,G_F]
=
\begin{pmatrix}
E^\top F-F^\top E&0\\
0&EF^\top-FE^\top
\end{pmatrix}.
\]
These brackets span the block-diagonal skew-symmetric algebra
\[
\mathfrak k_J:=
\left\{
\begin{pmatrix}
A&0\\
0&D
\end{pmatrix}:A^\top=-A,\ D^\top=-D
\right\},
\]
by choosing \(E,F\) supported on one common row or one common column.  Since
\[
\mathfrak o(J)=\mathfrak k_J\oplus\mathfrak p_J,
\]
the generated Lie algebra is all of \(\mathfrak o(J)\).
\end{proof}

\paragraph{Fields versus their generated distribution}
The full space \(\mathcal X_{\rm split}\) contains nonlinear fields and
is not identified with a finite-dimensional matrix Lie algebra. Each
of its fields is a smooth, state-dependent linear combination of a
basis of the \(X_E\)'s. Consequently all such fields and their brackets
remain tangent to the same congruence orbits, while the constant-gradient
fields already generate the whole orbit tangent space. It is this
distribution that determines the common conservation laws.

\subsection{Completeness of the spectral laws}

The generated directions coincide with the tangent directions of
covariance congruence orbits; their codimension gives the local law count.
For a constant-gradient split Bures field, \(B:=J\Sigma\) satisfies
\[
\dot B=[G_E,B].
\]
The trace powers $\mathcal H_k$ in \eqref{eq:bures-G} are
therefore conserved. The next corollary states that, locally and generically, these
are the only smooth covariance-level laws common to all split losses.

Let
\[
\Omega_{\rm simp}
:=
\{\Sigma\in\Spp^m:J\Sigma\text{ has simple spectrum}\}.
\]

\begin{corollary}[Marcotte-style covariance-level completeness]
\label{cor:appendix-covariance-completeness}
Let \(h\) be a smooth function on an open subset of \(\Omega_{\rm simp}\).
If \(h\) is conserved by every split Bures vector field \(X_\ell\) of
\eqref{eq:appendix-split-vector-field}, then \(h\) is locally a smooth
function of the eigenvalues of \(J\Sigma\), equivalently of
\[
\mathcal H_1(\Sigma),\ldots,\mathcal H_m(\Sigma).
\]
These \(m\) laws are functionally independent on \(\Omega_{\rm simp}\).
Thus the characteristic polynomial of \(J\Sigma\) gives a complete
local/generic list of covariance-level mean-field conservation laws for the
split two-layer linear architecture.
\end{corollary}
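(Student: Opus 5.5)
The plan is the standard orbit/Lie-algebra argument: common conservation laws are exactly the functions constant on the orbits of the generated distribution. By Proposition~\ref{prop:appendix-lie-generated}, every split field $X_\ell$ is a pointwise combination of the fields $X_E$, and the brackets of the $X_E$ generate $\{\xi_A:A\in\mathfrak o(J)\}$.

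The first step is to show that $h$ is annihilated by every $\xi_A$, not merely by the fields $X_\ell$. Taking $\ell$ linear gives $\ell=\ell_E$, so $\mathcal X_{\rm lin}\subset\mathcal X_{\rm split}$. Hence $X_E h=0$ for all $E$. If $Xh=Yh=0$, then $[X,Y]h=X(Yh)-Y(Xh)=0$, so $h$ is annihilated by the generated Lie algebra, i.e.\ $\xi_A h=0$ for all $A\in\mathfrak o(J)$. Consequently $h$ is locally constant along the connected pieces of the congruence orbits $\{P\Sigma P^\top:P\in SO_0(d_u,d_v)\}$ inside its domain.

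The second step is to identify these orbits locally with the level sets of $(\mathcal H_1,\ldots,\mathcal H_m)$. The function $\Sigma\mapsto\chi_{J\Sigma}$ is constant on orbits, since $J P\Sigma P^\top=P^{-\top}(J\Sigma)P^\top$ when $P^\top JP=J$. It then suffices to count dimensions. The level sets of the spectral map $\Sigma\mapsto(\lambda_i(J\Sigma))$ on $\Omega_{\rm simp}$ have dimension at least $\dim\Ssym^m-m$. So I will show two things: (a) the orbit tangent space $\{A\Sigma+\Sigma A^\top:A\in\mathfrak o(J)\}$ has dimension $\dim\Ssym^m-m=m(m+1)/2-m=m(m-1)/2=\dim\mathfrak o(J)$, i.e.\ the stabilizer algebra is trivial; and (b) the differentials of $\mathcal H_1,\ldots,\mathcal H_m$ are independent. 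Then orbits are open in level sets, and $h$ factors locally through the eigenvalues by the constant-rank theorem.

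For (a), the plan is to diagonalize. With $\Sigma\succ0$, set $H=\Sigma^{1/2}J\Sigma^{1/2}$ and use the congruence $A\mapsto \Sigma^{-1/2}A\Sigma^{1/2}$. This turns the condition $A\Sigma+\Sigma A^\top=0$ together with $A^\top J+JA=0$ into the statement that a matrix $C$ is skew and commutes with $H$. Because $J\Sigma$ is similar to $H$, which is symmetric with simple spectrum, any matrix commuting with $H$ is diagonal in its eigenbasis, and a skew diagonal matrix is zero. This gives trivial stabilizer.

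For (b), $d\,\mathcal H_k[\dot\Sigma]=k\tr((J\Sigma)^{k-1}J\dot\Sigma)$. Choosing $\dot\Sigma=\Sigma^{1/2}\,D\,\Sigma^{1/2}$ with $D$ ranging over symmetric matrices diagonal in the eigenbasis of $H$ gives $d\,\mathcal H_k=k\tr(H^{k-1}D)$. This yields a Vandermonde matrix in the eigenvalues of $H$, which is invertible by simplicity.

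I expect the main obstacle to be passing from infinitesimal orbit information to the conclusion that $h$ is locally a function of the eigenvalues. The point is to avoid global disconnectedness of $O(d_u,d_v)$ orbits and of level sets, which is why the statement is local. I would handle it with the constant-rank argument. Near $\Sigma_0$, the map $\Sigma\mapsto(\mathcal H_k)$ is a submersion with fibres of dimension $m(m-1)/2$. The integral manifolds of the involutive rank-$m(m-1)/2$ distribution spanned by the $\xi_A$ are then open pieces of these fibres. In flow-box coordinates $(\mathcal H,y)$, $h$ is independent of $y$, so $h=\tilde h(\mathcal H_1,\ldots,\mathcal H_m)$ locally. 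The last ingredient is the smoothness of $\tilde h$, which follows from the submersion.
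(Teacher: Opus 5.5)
Your proposal is correct. Its first step matches the paper's: you restrict to the linear losses $\ell_E$, close under brackets, and use Proposition~\ref{prop:appendix-lie-generated} to get $\mathrm{d}h[\xi_A]=0$ for all $A\in\mathfrak o(J)$. The two proofs differ in how they pass from orbit-invariance to factorization through the spectrum.

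The paper argues at the group level. The eigenvectors of $J\Sigma$ are $J$-orthogonal and can be normalized to a $J$-orthonormal basis, so each $\Sigma\succ0$ is $O(d_u,d_v)$-congruent to the diagonal matrix of the $|\lambda_i(J\Sigma)|$. Spectral level sets are therefore (locally) orbits. Fixing the normalized eigenbasis while varying the diagonal gives an explicit smooth section through which $h$ factors. The Vandermonde count there serves only for functional independence, and the match between level-set and orbit dimensions is a consequence rather than an input.

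You instead make the dimension count the engine of the proof. Your isotropy computation ($C=\Sigma^{-1/2}A\Sigma^{1/2}$ is skew and commutes with the simple-spectrum $H$, hence vanishes) gives orbit dimension $m(m-1)/2$. The submersion gives $\dim\ker\mathrm{d}\mathcal H=m(m-1)/2$. So the $\xi_A$ span $\ker\mathrm{d}\mathcal H$ pointwise, and $h$ depends only on $\mathcal H$ in a submersion chart. Frobenius and integral manifolds are not actually needed here: $\mathrm{d}h$ vanishing on $\ker\mathrm{d}\mathcal H$ already suffices.

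Your route is purely infinitesimal, supplies the isotropy computation explicitly, and makes locality and the smoothness of $\tilde h$ automatic. The paper's normal-form route yields a stronger statement: the spectrum of $J\Sigma$ classifies positive-definite covariances up to $O(d_u,d_v)$-congruence, with locality entering only through connected components.

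One small slip: for $\dot\Sigma=\Sigma^{1/2}D\Sigma^{1/2}$ one gets $\mathrm{d}\mathcal H_k[\dot\Sigma]=k\tr(H^kD)$, not $k\tr(H^{k-1}D)$, so the relevant matrix is $(k\lambda_j^k)$. It is still invertible, but this uses $\lambda_j\neq0$ (automatic for $\Sigma\succ0$) in addition to simplicity of the spectrum.
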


\begin{proof}
It is enough to use the linear split losses.  If \(X_Eh=0\) for every \(E\),
then \(h\) is also annihilated by all iterated Lie brackets of the \(X_E\)'s.
By Proposition~\ref{prop:appendix-lie-generated},
\[
\mathrm{d}h(\Sigma)[\xi_A(\Sigma)]=0
\qquad\text{for all }A\in\mathfrak o(J).
\]
Thus \(h\) is constant on the connected pieces of the
\(O(d_u,d_v)\)-congruence orbits.

The orbit invariants are exactly the eigenvalues of \(J\Sigma\) on
\(\Omega_{\rm simp}\).  Indeed,
\[
J(Q\Sigma Q^\top)=Q^{-\top}(J\Sigma)Q^\top
\qquad(Q^\top JQ=J),
\]
so the spectrum is preserved.  Conversely, for \(\Sigma\succ0\), the matrix
\(J\Sigma\) is similar to the symmetric matrix
\(\Sigma^{1/2}J\Sigma^{1/2}\), hence has real nonzero eigenvalues with
\(d_u\) positive and \(d_v\) negative signs.  On the simple-spectrum stratum
its \(J\)-orthogonal eigenbasis can be locally normalized to a
\(J\)-orthonormal basis, so equal locally ordered eigenvalues imply membership
in the same local \(O(d_u,d_v)\)-orbit.  Therefore any smooth orbit-constant
function factors locally through the eigenvalues.  Newton identities replace
the eigenvalues by \(\mathcal H_1,\ldots,\mathcal H_m\).
More explicitly, for the locally ordered eigenvalues \(\lambda_j\),
\[
 \frac{\partial\mathcal H_k}{\partial\lambda_j}
 =k\lambda_j^{k-1},\qquad 1\le k,j\le m.
\]
This scaled Vandermonde matrix is invertible at distinct eigenvalues.
The eigenvalues themselves vary independently by varying the diagonal
covariance in a fixed normalized eigenbasis. Hence the spectral map has
rank \(m\), and its local level sets have dimension
\(m(m+1)/2-m=m(m-1)/2\), exactly the orbit dimension.
\end{proof}

For empirical covariances \(\Sigma=WW^\top/n\) with
\(W=\binom{U}{V}\), the finite-width Gram imbalance is
\[
\Gamma:=V^\top V-U^\top U=-W^\top JW.
\]
The nonzero spectra of \(J\Sigma=JWW^\top/n\) and
\(-\Gamma/n=W^\top JW/n\) coincide.  Corollary~\ref{cor:appendix-covariance-completeness}
is therefore the covariance quotient of the Marcotte--Gribonval--Peyr\'e
completeness theorem: the full labeled finite-width conservation data include the matrix
\(\Gamma\), while the covariance-level law keeps only its spectrum.

\subsection{A ReLU contrast}

Homogeneity preserves a scalar balance beyond linear networks, but it
does not give the same covariance closure or spectral family.
For a two-layer ReLU or more generally one-homogeneous network, the characteristic
equations associated with a potential \(v^\top\psi(u)\), with \(\psi\) one-homogeneous
and possibly time-dependent,
have the form
\[
\dot u=-D\psi(u)^\top v,\qquad \dot v=-\psi(u).
\]
Euler's identity \(D\psi(u)u=\psi(u)\) gives
\[
\frac{\mathrm{d}}{\mathrm{d}t}\bigl(\norm u^2-\norm v^2\bigr)=0.
\]
For ReLU this calculation is interpreted almost everywhere along the
characteristics, with an activation-boundary convention satisfying Euler's
identity (for instance derivative zero at a zero preactivation).
Thus the full mean-field flow preserves the pushforward distribution of
\(\norm u^2-\norm v^2\), and its covariance-level shadow is the total imbalance
\[
\int\bigl(\norm u^2-\norm v^2\bigr)\,\mathrm{d}\mu(u,v).
\]
Unlike the linear split case above, this ReLU statement is not obtained from a
closed finite-dimensional Bures covariance equation; it is included only to
highlight how much smaller the covariance-level conservation-law problem can be
than the full particle-level one. No completeness claim for ReLU laws is
made here.


\clearpage
\section{Intrinsic dynamics}\label{app:intrinsic}
Special conserved spectral values express some covariance blocks in terms
of fewer variables. This appendix derives these exact reductions.

\subsection{A two-point signed spectrum: hyperbolic isotropy}
A special two-point spectrum determines the hidden covariance blocks
from the predictor and therefore closes a smaller equation.
\begin{definition}[Hyperbolic isotropy]\label{def:hyperbolic-isotropy}
For \(\alpha>0\), the hyperbolically isotropic (HI) leaf is
\begin{equation}\label{eq:intrinsic-hi}
 \mathcal M_\alpha:=\{\Sigma\succeq0:\Sigma J\Sigma=\alpha^2J\}.
\end{equation}
\end{definition}
The identity forces $\Sigma\succ0$ and is equivalent to
$(J\Sigma)^2=\alpha^2\Id_m$. The symmetric representative of $J\Sigma$
then has $d_u$ eigenvalues $+\alpha$ and $d_v$ eigenvalues $-\alpha$;
equivalently, $\mathcal H_k=\alpha^k(d_u+(-1)^kd_v)$ for $k=1,\ldots,m$.
Spectral conservation proves invariance of $\mathcal M_\alpha$ despite
the repeated eigenvalues. Directly, the derivative of $\Sigma J\Sigma$
is $-G(\Sigma J\Sigma)-(\Sigma J\Sigma)G$, with constant solution $\alpha^2J$.

\begin{proposition}[Predictor-only intrinsic equation]\label{prop:intrinsic-hi}
Every predictor \(\Svu\in\R^{d_v\times d_u}\) has a unique covariance
on \(\mathcal M_\alpha\), given by
\begin{equation}\label{eq:intrinsic-graph}
 \Sigma=
 \begin{pmatrix}
 ((\Svu)^\top\Svu+\alpha^2\Id_{d_u})^{1/2}&(\Svu)^\top\\
 \Svu&(\Svu(\Svu)^\top+\alpha^2\Id_{d_v})^{1/2}
 \end{pmatrix}.
\end{equation}
On this leaf, the Bures dynamics close on the predictor:
\begin{equation}\label{eq:intrinsic-hyperbolic-dynamics}
 \dot\Svu=-E((\Svu)^\top\Svu+\alpha^2\Id)^{1/2}
       -(\Svu(\Svu)^\top+\alpha^2\Id)^{1/2}E,
 \qquad E:=\nabla\ell(\Svu).
\end{equation}
For \(d_u=d_v=1\), this is
\(\dot\Svu=-2\sqrt{(\Svu)^2+\alpha^2}\,\ell'(\Svu)\).
\end{proposition}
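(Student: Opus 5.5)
The argument has two parts: uniqueness and existence of the covariance over a given predictor on $\mathcal M_\alpha$, followed by substitution into the block equations \eqref{eq:block-flow-general}.

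\textbf{Step 1: block identities.} Write $\Sigma=\begin{psmallmatrix}P&S^\top\\S&Q\end{psmallmatrix}$ with $S:=\Svu$. Then
\[
\Sigma J\Sigma=\begin{pmatrix}P^2-S^\top S&PS^\top-S^\top Q\\ SP-QS&SS^\top-Q^2\end{pmatrix}.
\]
Hence $\Sigma J\Sigma=\alpha^2J$ is equivalent to three conditions:
\[
P^2=S^\top S+\alpha^2\Id,\qquad Q^2=SS^\top+\alpha^2\Id,\qquad SP=QS.
\]

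\textbf{Step 2: uniqueness.} For $\Sigma\succeq0$ the diagonal blocks $P,Q$ are positive semidefinite. Positive semidefinite square roots are unique, so $P$ and $Q$ are forced to be the roots displayed in \eqref{eq:intrinsic-graph}.

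\textbf{Step 3: existence.} Three things must be checked for these choices of $P$ and $Q$.
\begin{itemize}
\item The intertwining $SP=QS$. From $S(S^\top S+\alpha^2)=(SS^\top+\alpha^2)S$, polynomial approximation of the square root on the common spectrum gives $Sf(S^\top S)=f(SS^\top)S$ for $f(x)=\sqrt{x+\alpha^2}$. Equivalently, this can be read off the SVD of $S$.
\item Positive semidefiniteness of $\Sigma$. Using the SVD $S=Y\Lambda X^\top$, $\Sigma$ decouples into $2\times2$ blocks $\begin{psmallmatrix}\sqrt{s^2+\alpha^2}&s\\ s&\sqrt{s^2+\alpha^2}\end{psmallmatrix}\succ0$. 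The unmatched directions contribute scalar blocks $\alpha\,\Id$, which are also positive definite.
\item The $AB,BA$ care. The leftover dimensions $|d_u-d_v|$ are exactly the kernel parts, so no extra argument is needed there.
\end{itemize}

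\textbf{Step 4: closed dynamics.} Invariance of $\mathcal M_\alpha$ is already shown before the proposition. Along the flow, therefore, $\Suu_t$ and $\Svv_t$ are given by the graph formula evaluated at $\Svu_t$. Substituting into $\dot\Svu=-E\Suu-\Svv E$ from \eqref{eq:block-flow-general} yields \eqref{eq:intrinsic-hyperbolic-dynamics}. In the scalar case both roots equal $\sqrt{S^2+\alpha^2}$, which gives the stated ODE.

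\textbf{Main obstacle.} The step needing the most care is existence together with positivity in Step 3, namely the intertwining identity and the SVD decoupling for rectangular $S$. It is routine once written in singular coordinates.
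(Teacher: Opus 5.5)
Your proposal is correct and follows essentially the same route as the paper. Both proofs expand $\Sigma J\Sigma=\alpha^2J$ into the three block identities, get uniqueness from positive semidefinite square roots, use the SVD to check the intertwining identity and positivity (eigenvalues $\sqrt{\sigma_i^2+\alpha^2}\pm\sigma_i$ and $\alpha$), and substitute into the predictor block equation using invariance of the leaf; your polynomial-interpolation argument for $Sf(S^\top S)=f(SS^\top)S$ is only a harmless alternative to reading that identity off the SVD.
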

\begin{proof}
Expanding \eqref{eq:intrinsic-hi} gives
\[
 (\Suu)^2-(\Svu)^\top\Svu=\alpha^2\Id,\quad
 (\Svv)^2-\Svu(\Svu)^\top=\alpha^2\Id,\quad
 \Svu\Suu=\Svv\Svu.
\]
Positivity selects the positive square roots. Conversely, an SVD proves
the intertwining identity and gives eigenvalues \(\alpha\) and
\(\sqrt{\sigma_i^2+\alpha^2}\pm\sigma_i>0\), proving membership in
\(\mathcal M_\alpha\). Substituting the diagonal blocks into
\eqref{eq:block-flow-general} gives the intrinsic equation.
\end{proof}
The same eigenvalue calculation gives the exact extremal envelopes
\[
 \lambda_{\max}(\Sigma)=\sqrt{\norm{\Svu}_{\op}^2+\alpha^2}
                         +\norm{\Svu}_{\op},\qquad
 \lambda_{\min}(\Sigma)=\frac{\alpha^2}{\lambda_{\max}(\Sigma)}.
\]
In the general convergence theorems this special spectral level has
\(\aplus{\Sigma_0}=\aminus{\Sigma_0}=\alpha\). Thus its loss exponent is \(4\alpha\kappa\),
and the GD contraction factor is \(1-\sqrt2\alpha\kappa\tau\).

\paragraph{Finite-width form}
The covariance constraint has an exact finite-width form that depends
only on the product $WW^\top$.
For \(\Sigma=WW^\top/n\), condition \eqref{eq:intrinsic-hi} is
\(WW^\top JWW^\top=n^2\alpha^2J\). In terms of \(U,V\), it reads
\[
 \begin{gathered}
 (UU^\top)^2-UV^\top VU^\top=n^2\alpha^2\Id_{d_u},\qquad
 (VV^\top)^2-VU^\top UV^\top=n^2\alpha^2\Id_{d_v},\\
 VU^\top UU^\top=VV^\top VU^\top.
 \end{gathered}
\]
Any covariance in \eqref{eq:intrinsic-graph} admits empirical factors at
every width \(n\ge m\), for example \(W=\sqrt n\,\Sigma^{1/2}Q_0\)
with \(Q_0Q_0^\top=\Id_m\).

\paragraph{Comparison with relaxed balance}
\citet[Theorem 3.8]{MarcottePeyreGribonval2026} derive intrinsic dynamics
under a different condition,
\(\Gamma_{\rm rb}:=(V^\top V-U^\top U)/n=\beta\Id_n\),
on a full-column-rank stratum. It implies
\[
 (\Suu)^2+\beta\Suu=(\Svu)^\top\Svu,\qquad
 (\Svv)^2-\beta\Svv=\Svu(\Svu)^\top.
\]
For \(\beta\ne0\), zero singular directions have ambiguous branches.
Where both factors have full column rank, the range projectors
\(\Pi_u,\Pi_v\) of \((\Svu)^\top,\Svu\) select
\[
 \begin{aligned}
 \Suu&=\Pi_u[-\tfrac\beta2\Id+
             ((\Svu)^\top\Svu+\tfrac{\beta^2}{4}\Id)^{1/2}],\\
 \Svv&=\Pi_v[\tfrac\beta2\Id+
             (\Svu(\Svu)^\top+\tfrac{\beta^2}{4}\Id)^{1/2}].
 \end{aligned}
\]
If \(n=d_u=d_v\) and both factors are invertible, the projectors are
identities and the predictor equation coincides with
\eqref{eq:intrinsic-hyperbolic-dynamics} for \(\alpha=|\beta|/2\).
The leaves remain distinct: for \(\beta\ne0\), relaxed balance is equivalent to
\(\Sigma J\Sigma=-\beta\Sigma\), \(\rankop\Sigma=n\).
Indeed a left inverse of \(W/\sqrt n\) proves the converse.
It forces \(n\le d_v\) if \(\beta>0\), and \(n\le d_u\) if \(\beta<0\);
positive HI instead has rank \(m\).
For \(\beta=0\), positivity selects the unshifted square roots without a
rank assumption, giving the boundary equation \(\alpha=0\), not a
positive-definite initialization.
Without the rank qualifications, relaxed balance does not ensure
predictor closure: \(n=1,d_u=d_v=2,U=0,V=e_1\) or \(e_2\) gives the
same predictor and \(\beta=1\), but different predictor velocities for
\(\ell(S):=\norm{S+\Id_2}_{\F}^2/2\).

\begin{remark}[Conserved values do not generally close the predictor]
For \(d_u=d_v=2\), take \(\Svu=0\), \(\Svv=\diag(3,4)\), and
\(\Suu=\diag(1,2)\) or \(\diag(2,1)\).
Both states have signed spectrum \(\{1,2,-3,-4\}\), hence the same
spectral laws. For \(\ell(S):=\norm{S+e_1e_1^\top}_{\F}^2/2\),
their predictor velocities are \(-4e_1e_1^\top\) and
\(-5e_1e_1^\top\).
The special two-point spectrum gives an algebraic graph unavailable
at these general spectral values.
\end{remark}

\subsection{A single phase spectral scale: conformal symplectic isotropy}
A special conformal spectral identity eliminates the velocity covariance
within a positive-position chart of the phase flow.
\begin{definition}[Conformal symplectic isotropy]\label{def:csi}
A phase covariance is conformally symplectically isotropic (CSI) at
scale \(\theta\ge0\) if
\begin{equation}\label{eq:intrinsic-csi}
 \Phase \Jcan\Phase=\theta^2\Jcan.
\end{equation}
\end{definition}
For $\theta>0$, this forces $\Phase\succ0$ and is equivalent to
$\Phase^{1/2}\Jcan\Phase^{1/2}$ having $m$ copies of each eigenvalue
$+i\theta,-i\theta$.
The renormalized trace powers in Proposition~\ref{prop:phase-laws}
are zero at odd orders and $2m(-1)^k\theta_0^{2k}$ at order $2k$.
Conformal transport preserves \eqref{eq:intrinsic-csi} with
$\theta_t=e^{-\gamma t}\theta_0$, including the singular boundary
$\theta_0=0$, which is specified algebraically, without diagonalizability.

\begin{proposition}[Reduced phase dynamics]\label{prop:intrinsic-csi}
On the chart \(A:=\Spos\succ0\), condition \eqref{eq:intrinsic-csi}
is equivalent to
\[
 D=C=C^\top,\qquad Q=\theta^2A^{-1},
 \qquad \Xi=AC,\quad \Svel=CAC+\theta^2A^{-1}.
\]
The reduced equations are
\begin{equation}\label{eq:csi-reduced}
 \dot A=CA+AC,\qquad
 \dot C=-G-\gamma C-C^2+\theta_t^2A^{-2},
 \qquad\dot\theta_t=-\gamma\theta_t.
\end{equation}
They use only the two symmetric matrices \(A,C\), with known scale
\(\theta_t\), instead of all blocks of the phase covariance.
\end{proposition}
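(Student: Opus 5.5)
The plan is to factor the phase covariance through the regression coordinates of Proposition~\ref{prop:regression}, so that the CSI identity \eqref{eq:intrinsic-csi} turns into block conditions on $D$ and $Q$. On the chart $A\succ0$, put $L:=\begin{psmallmatrix}\Id_m&0\\D&\Id_m\end{psmallmatrix}$. Then $\Xi=AD^\top$ and $\Svel=DAD^\top+Q$ give
\[
 \Phase=L\begin{pmatrix}A&0\\0&Q\end{pmatrix}L^\top .
\]
Since $L$ is invertible, \eqref{eq:intrinsic-csi} is equivalent to
\[
 \begin{pmatrix}A&0\\0&Q\end{pmatrix}(L^\top\Jcan L)\begin{pmatrix}A&0\\0&Q\end{pmatrix}
 =\theta^2 L^{-1}\Jcan L^{-\top}.
\]

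Both conjugated forms are short block computations. One finds
\[
 L^\top\Jcan L=\begin{pmatrix}D-D^\top&\Id\\-\Id&0\end{pmatrix},\qquad
 L^{-1}\Jcan L^{-\top}=\begin{pmatrix}0&\Id\\-\Id&D^\top-D\end{pmatrix}.
\]
Hence the left side equals $\begin{psmallmatrix}A(D-D^\top)A&AQ\\-QA&0\end{psmallmatrix}$. Comparing blocks gives three conditions:
\begin{itemize}[nosep]
 \item the $(1,1)$ block gives $A(D-D^\top)A=0$, hence $D=D^\top=:C$, using $A\succ0$;
 \item the off-diagonal blocks give $AQ=\theta^2\Id$, i.e.\ $Q=\theta^2A^{-1}$;
 \item the $(2,2)$ block gives $\theta^2(D^\top-D)=0$, which is then automatic.
\end{itemize}
Conversely, these conditions reproduce \eqref{eq:intrinsic-csi}, which proves the equivalence. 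Substituting back gives $\Xi=AC$ and $\Svel=CAC+\theta^2A^{-1}$. Note that $Q=\theta^2A^{-1}\succeq0$, as the Schur complement must be.

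For the reduced equations, I will use that the CSI leaf is invariant along the phase flow with $\theta_t=e^{-\gamma t}\theta_0$. This follows from Proposition~\ref{prop:phase-laws}: $\Phase_t\Jcan\Phase_t=\Psi_t\Phase_0\Psi_t^\top\Jcan\Psi_t\Phase_0\Psi_t^\top$, and $\Psi_t^\top\Jcan\Psi_t=e^{-\gamma t}\Jcan$ together with $\Psi_t\Jcan\Psi_t^\top=e^{-\gamma t}\Jcan$ yields $e^{-2\gamma t}\theta_0^2\Jcan$. Hence at every time on the chart, $D_t=C_t$ is symmetric and $Q_t=\theta_t^2A_t^{-1}$. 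Inserting these into \eqref{eq:regression-ode} gives $\dot A=CA+AC$ directly. It also gives $\dot C=-G-\gamma C-C^2+\theta_t^2A^{-2}$, because $QA^{-1}=\theta_t^2A^{-2}$.

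As a consistency check, which is also the step needing the most care, I will verify that the $Q$-equation of \eqref{eq:regression-ode} is compatible with this ansatz. Differentiating $\theta_t^2A^{-1}$ gives
\[
 -2\gamma\theta_t^2A^{-1}-\theta_t^2A^{-1}(CA+AC)A^{-1}=-2\gamma Q-CQ-QC,
\]
as required. The $\dot C$ right-hand side is symmetric, so symmetry of $C$ is preserved. By uniqueness of solutions of \eqref{eq:regression-ode}, the reduced system \eqref{eq:csi-reduced} therefore reproduces the full phase flow. This alternative route to invariance does not rely on the conformal identity and also covers the boundary $\theta_0=0$.
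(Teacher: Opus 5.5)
Your proposal is correct and follows the paper's own argument. You substitute $\Xi=AD^\top$ and $\Svel=DAD^\top+Q$, read off $A(D-D^\top)A=0$ and $AQ=\theta^2\Id$ from the blocks of \eqref{eq:intrinsic-csi}, and insert these into \eqref{eq:regression-ode}, with leaf invariance and $\theta_t=e^{-\gamma t}\theta_0$ supplied by the conformal transport of Proposition~\ref{prop:phase-laws}; your factorization $\Phase=L\,\diag(A,Q)L^\top$ only organizes the same block computation more cleanly, and your uniqueness-based check of the $Q$-equation is a valid extra confirmation of an invariance the paper merely asserts (rename your triangular factor, though, since $L$ already denotes the smoothness constant).
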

\begin{proof}
Substitute \(\Xi=AD^\top\), \(\Svel=DAD^\top+Q\).
The upper-left block of \eqref{eq:intrinsic-csi} is
\(A(D-D^\top)A=0\), forcing symmetry.
The upper-right block then becomes \(AQ=\theta^2\Id\);
the other blocks follow.
Substitution in Proposition~\ref{prop:regression} gives
\eqref{eq:csi-reduced}. This calculation includes \(\theta=0\).
\end{proof}
White position and velocity blocks with \(\Xi_0=0\) give
\(\theta_0=\sigma_{vu}\sigma_{qp}\). Zero velocity gives
\(C_0=0,\theta_0=0\), allowing any \(A_0\succ0\).
Neither choice is needed for convergence: the general theorem controls
\(D\) by energy even when it is nonsymmetric and \(Q\) is independent.

\begin{remark}[What the reduced state still contains]
Even on CSI, the two cross blocks \(\Svu,\Sqp\) do not generally form a
closed state. With \(\theta=1,C=0\), choose \(A=\Id_2\) or \(2\Id_2\)
and \(\Svel=A^{-1}\). Both have
\(\Svu=\Sqp=\dot\Svu=0\), yet
\(\ddot\Svu=-2E\) or \(-4E\) for the same nonzero scalar loss gradient.
Thus the full position block remains part of the intrinsic variables.
\end{remark}

\clearpage
\section{A nonquadratic example: smoothed Lasso}
\label{app:smooth-lasso}

To illustrate the smooth PL loss class beyond quadratic objectives, we repeat the three experiments of
Figure~\ref{fig:bures-convergence-panel} with a smoothed entrywise $\ell_1$ penalty on the predictor:
\begin{equation}\label{eq:smooth-lasso-loss}
  \ell(S)=\frac12\norm{S-S_{\rm teach}}_{\F}^2
  +\lambda\sum_{i,j}\Bigl(\sqrt{S_{ij}^2+\delta^2}-\delta\Bigr),
  \qquad \lambda=0.1,\quad \delta=0.01.
\end{equation}
The Hessian is diagonal, with entries
$1+\lambda\delta^2/(S_{ij}^2+\delta^2)^{3/2}\in[1,1+\lambda/\delta]$.
Thus $\ell$ is globally $1$-strongly convex and satisfies Assumption~\ref{ass:loss}
with $\kappa=1$ and $L=11$.
The penalty acts on $S$, not separately on the factors, so the covariance closure and spectral conservation laws apply without modification.
Its unique minimizer $S_{\rm reg}$ is generally different from the teacher; we compute it entrywise from
\[
  (S_{\rm reg})_{ij}-(S_{\rm teach})_{ij}
  +\lambda\frac{(S_{\rm reg})_{ij}}{\sqrt{(S_{\rm reg})_{ij}^2+\delta^2}}=0.
\]
All loss gaps below are relative to $\ell_\star=\ell(S_{\rm reg})$.
Positive smoothing does not generally yield exact sparsity; the nonsmooth limit $\delta=0$ is outside Assumption~\ref{ass:loss}.

\begin{figure}[htbp]
  \centering
  \resizebox{\linewidth}{!}{\input{figures/bures_smooth_lasso_panel.pgf}}

  \caption{Smoothed-Lasso counterpart of Figure~\ref{fig:bures-convergence-panel}, using the same dimensions $(d_u,d_v)=(16,9)$, teacher,
    initialization scales, and sampling seeds.
    \textbf{(a)} White covariance $\Sigma_0=\alpha\Id_m$ for $\alpha=0.1,0.5,1,5$.
    Dashed curves are the support-gap envelopes $\Delta_0 e^{-4\alpha\kappa t}$; dotted curves use the internal coarse determinant bound of Remark~\ref{rem:coarse-rate}.
    \textbf{(b)} One input-output diagonal pair is scaled by $r\in\{0,1/8,\ldots,1\}$.
    Here $g(\Sigma_0)=2r$; the dashed curve shows the white-reference envelope.
    \textbf{(c)} The population covariance flow from $\Id_m$ and finite-width flows at widths $n=32,128,512,2048$, with five independent trials each.}
  \label{fig:smooth-lasso-panel}
\end{figure}

\end{document}